\documentclass[letterpaper,12pt]{article}

\usepackage{fullpage}

\usepackage[utf8]{inputenc} % allow utf-8 input
\usepackage[T1]{fontenc}    % use 8-bit T1 fonts
\usepackage{hyperref}       % hyperlinks
\usepackage{url}            % simple URL typesetting
\usepackage{booktabs}       % professional-quality tables
\usepackage{nicefrac}       % compact symbols for 1/2, etc.
\usepackage{microtype}      % microtypography
\usepackage{xcolor}         % colors
\usepackage[authoryear, round]{natbib}

\usepackage{times}

\usepackage{amsthm,amsmath,amsfonts,amssymb,amscd}
\usepackage{mathrsfs}
\usepackage[shortlabels]{enumitem}
\usepackage{bm,bbm}

\usepackage{thmtools}
\usepackage{thm-restate}

\usepackage{mh}
\usepackage{defns_multisite}

\title{Learning from Biased and Costly Data Sources: Minimax-optimal Data Collection under a Budget}

\author{%
	Michael O. Harding\\
	Department of Statistics\\
	University of Wisconsin-Madison\\
	moharding@wisc.edu
	\and
	Vikas Singh\\
	Department of Biostatistics\\
	University of Wisconsin-Madison\\
	vsingh@biostat.wisc.edu
	\and
	Kirthevasan Kandasamy\\
	Department of Computer Sciences\\
	University of Wisconsin-Madison\\
	kandasamy@cs.wisc.edu
}

\date{}

\begin{document}

\maketitle

\newcommand{\insertsimfigure}{
\begin{figure}[h]
	\hspace*{\fill}
	\begin{minipage}[t]{0.49\linewidth}
    Setting 1
		\centering
		\vspace{0pt}
		\includegraphics[width=0.95\linewidth]{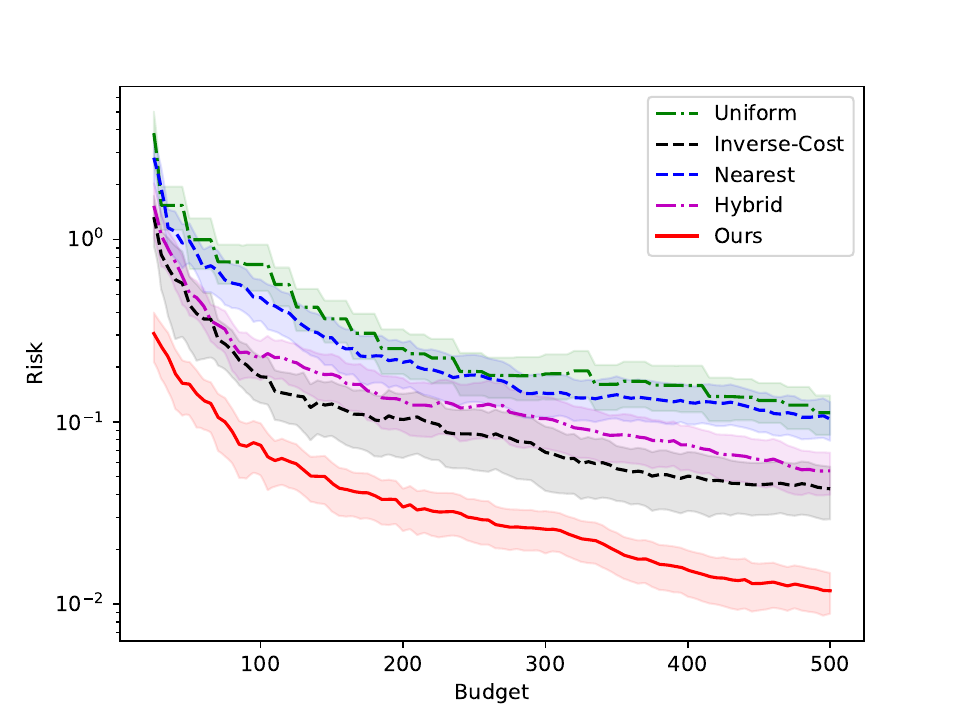}
	\end{minipage}
	\hfill
	\begin{minipage}[t]{0.49\linewidth}
    Setting 2
		\centering
		\vspace{0pt}
		\includegraphics[width=0.95\linewidth]{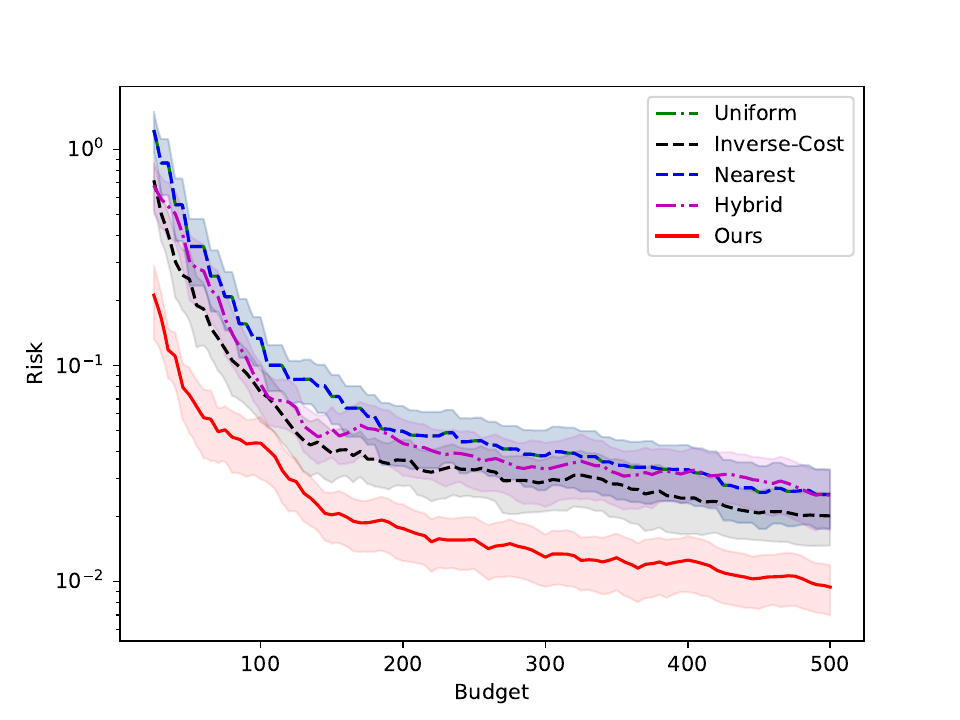}
	\end{minipage}
	\hspace*{\fill}
	
	\hspace*{\fill}
	\begin{minipage}[t]{0.49\linewidth}
		\centering
		\vspace{0pt}
		\includegraphics[width=0.95\linewidth]{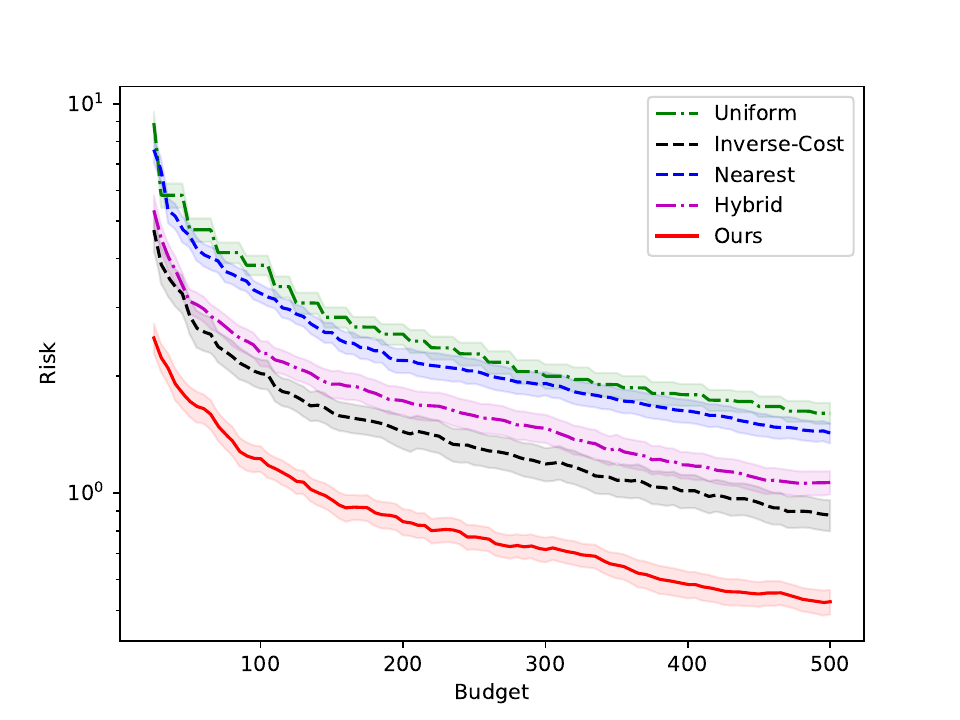}
	\end{minipage}
	\hfill
	\begin{minipage}[t]{0.49\linewidth}
		\centering
		\vspace{0pt}
		\includegraphics[width=0.95\linewidth]{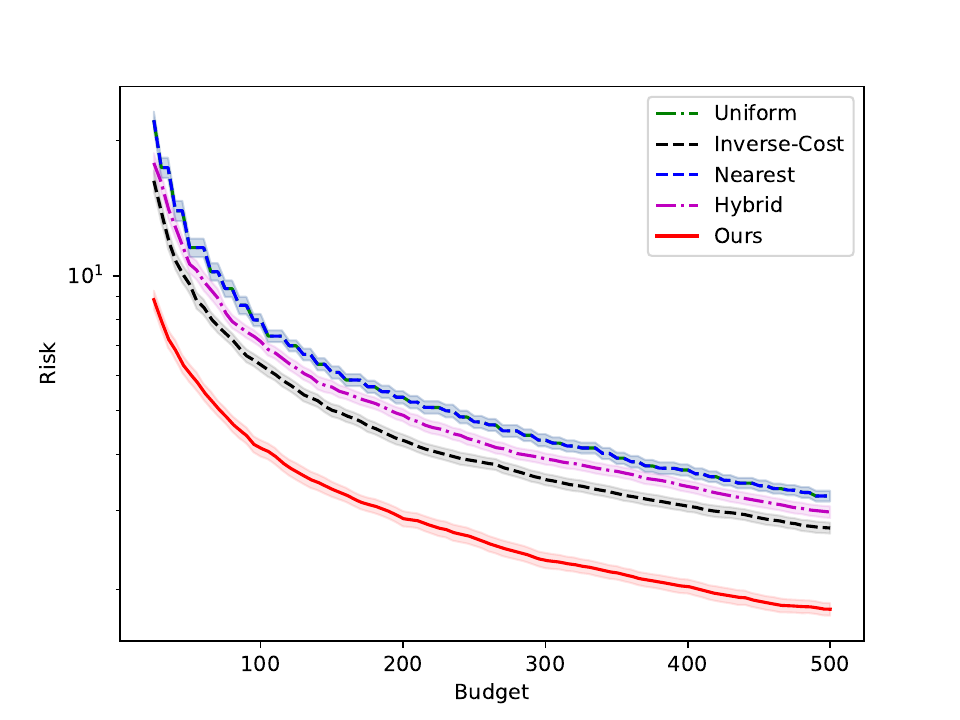}
	\end{minipage}
	\hspace*{\fill}

    \hspace*{\fill}
	\begin{minipage}[t]{0.49\linewidth}
		\centering
		\vspace{0pt}
		\includegraphics[width=0.95\linewidth]{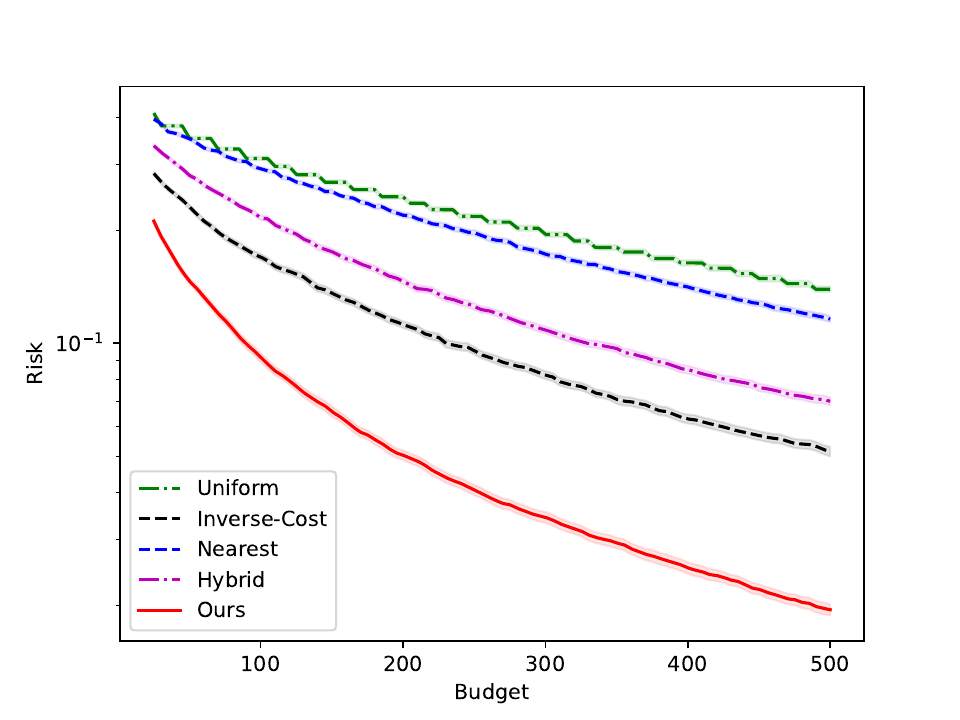}
	\end{minipage}
	\hfill
	\begin{minipage}[t]{0.49\linewidth}
		\centering
		\vspace{0pt}
		\includegraphics[width=0.95\linewidth]{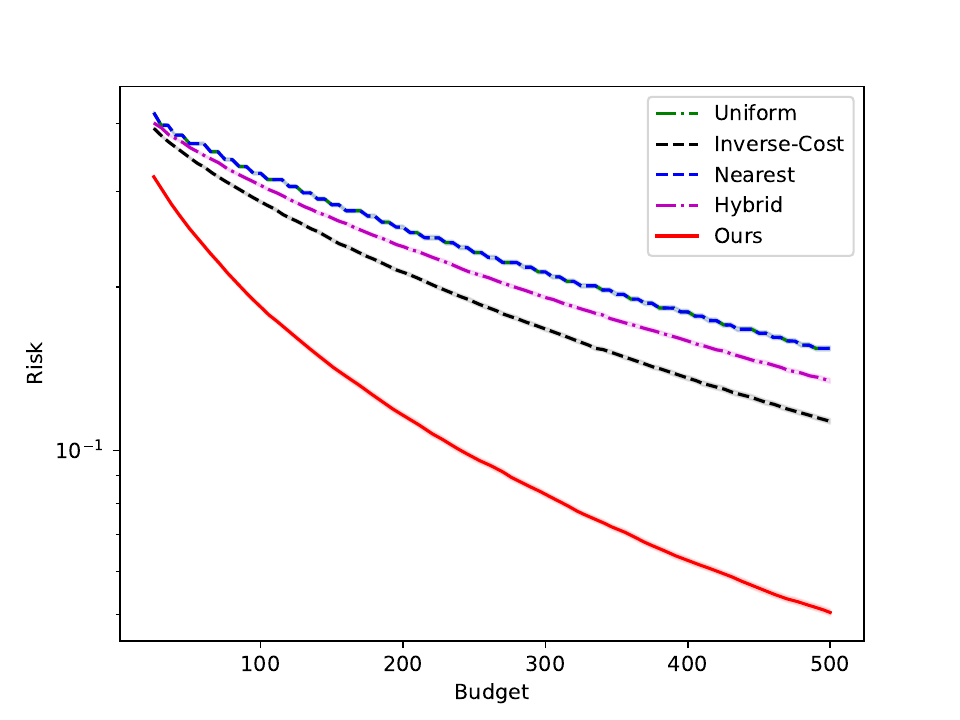}
	\end{minipage}
	\hspace*{\fill}
	\caption{Estimated risk based on 100 simulations in each setting. Error regions represent empirical average $\pm$ 2 SE. Row 1: Population mean under $\unif\,$. Row 2: vector of group means. Row 3: Binary classification under $\unif$.}
    \label{fig:unif}
\end{figure}
}

\newcommand{\insertsimfigureincr}{
\begin{figure}[h]
	\hspace*{\fill}
	\begin{minipage}[t]{0.49\linewidth}
    Setting 1
		\centering
		\vspace{0pt}
		\includegraphics[width=0.95\linewidth]{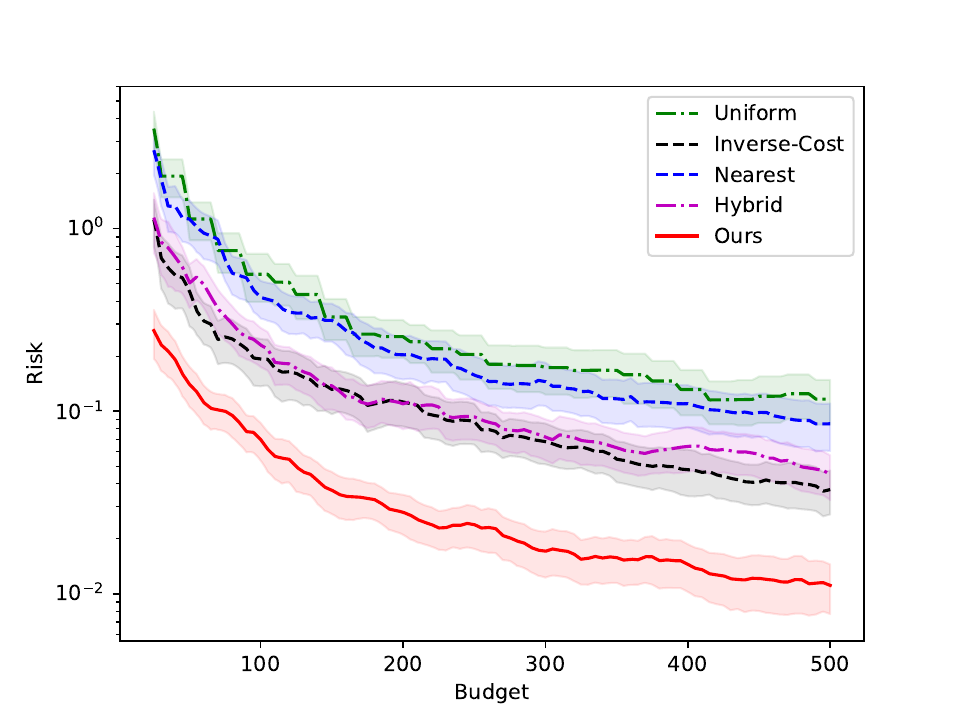}
	\end{minipage}
	\hfill
	\begin{minipage}[t]{0.49\linewidth}
    Setting 2
		\centering
		\vspace{0pt}
		\includegraphics[width=0.95\linewidth]{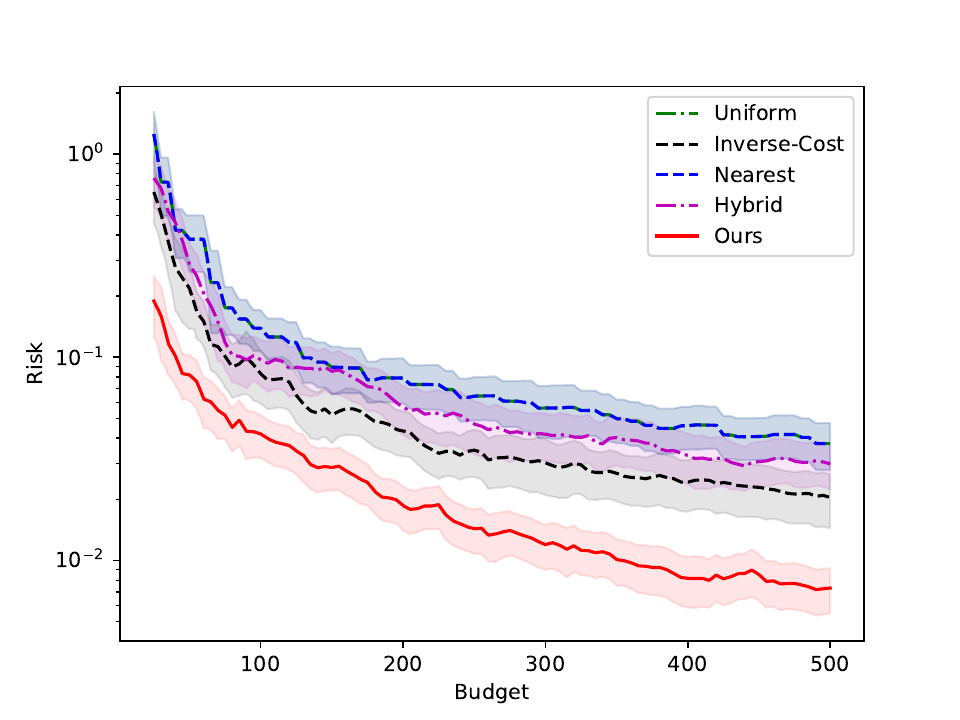}
	\end{minipage}
	\hspace*{\fill}
	
	\hspace*{\fill}
	\begin{minipage}[t]{0.49\linewidth}
		\centering
		\vspace{0pt}
		\includegraphics[width=0.95\linewidth]{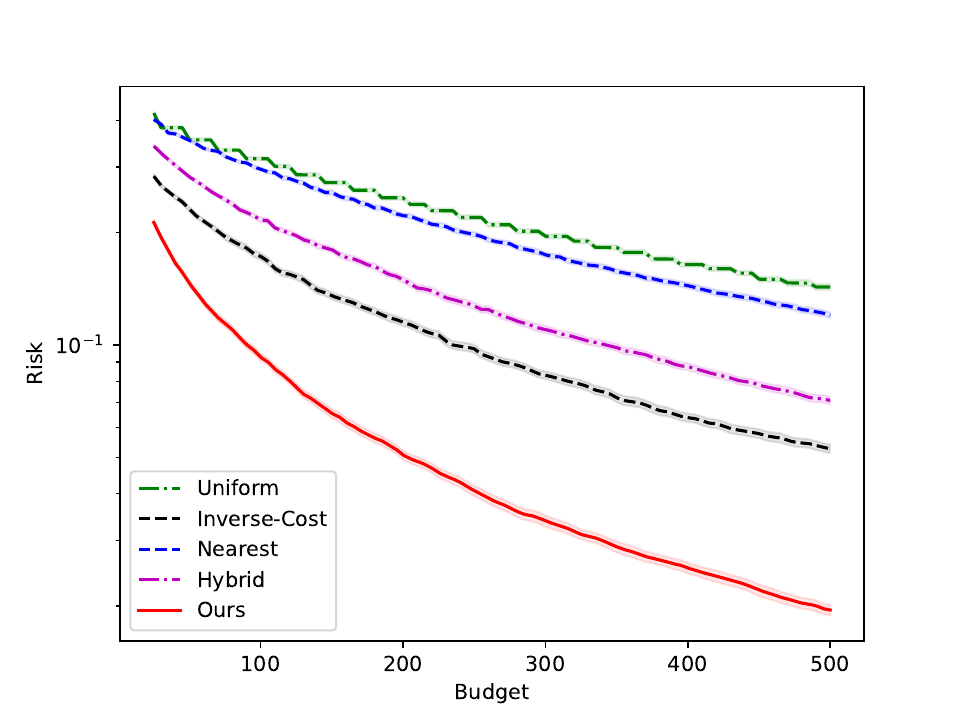}
	\end{minipage}
	\hfill
	\begin{minipage}[t]{0.49\linewidth}
		\centering
		\vspace{0pt}
		\includegraphics[width=0.95\linewidth]{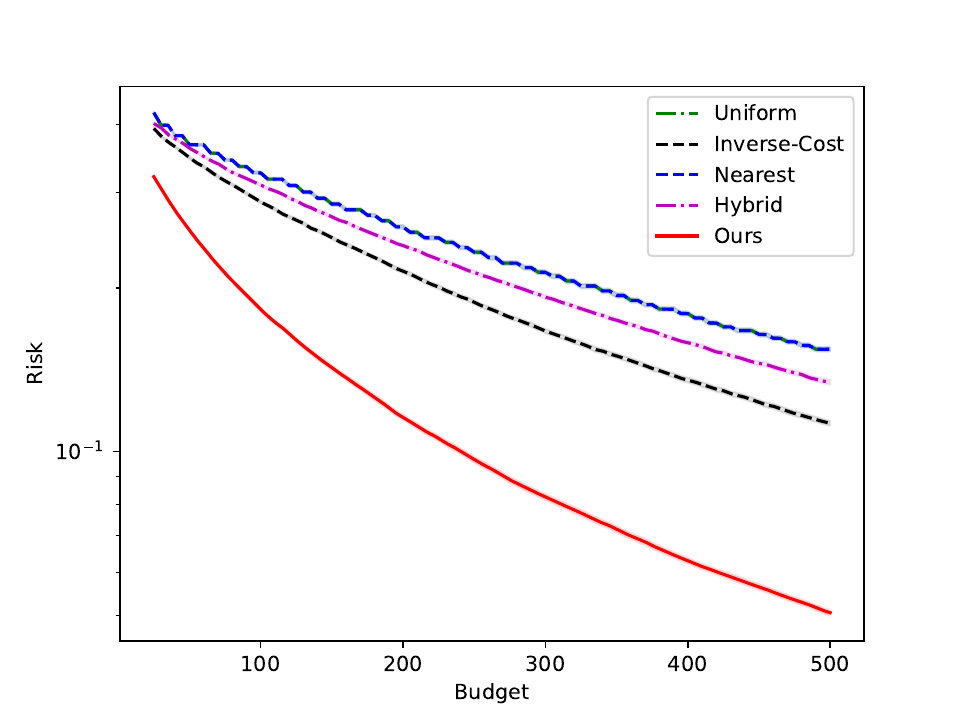}
	\end{minipage}
	\hspace*{\fill}
    \caption{Estimated risk based on 100 simulations in each setting. Error regions represent empirical average $\pm$ 2 SE. Row 1: Population mean. Row 2: Binary classification.}
    \label{fig:incr}
\end{figure}
}

\newcommand{\insertsimfigurepyrm}{
\begin{figure}[h]
	\hspace*{\fill}
	\begin{minipage}[t]{0.49\linewidth}
    Setting 1
		\centering
		\vspace{0pt}
		\includegraphics[width=0.95\linewidth]{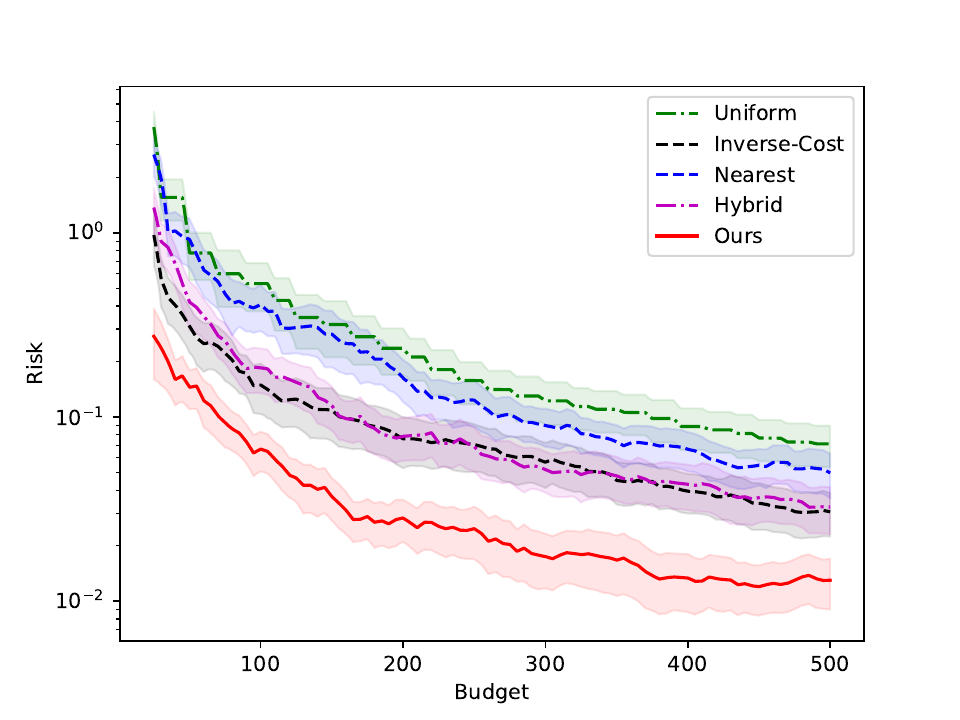}
	\end{minipage}
	\hfill
	\begin{minipage}[t]{0.49\linewidth}
    Setting 2
		\centering
		\vspace{0pt}
		\includegraphics[width=0.95\linewidth]{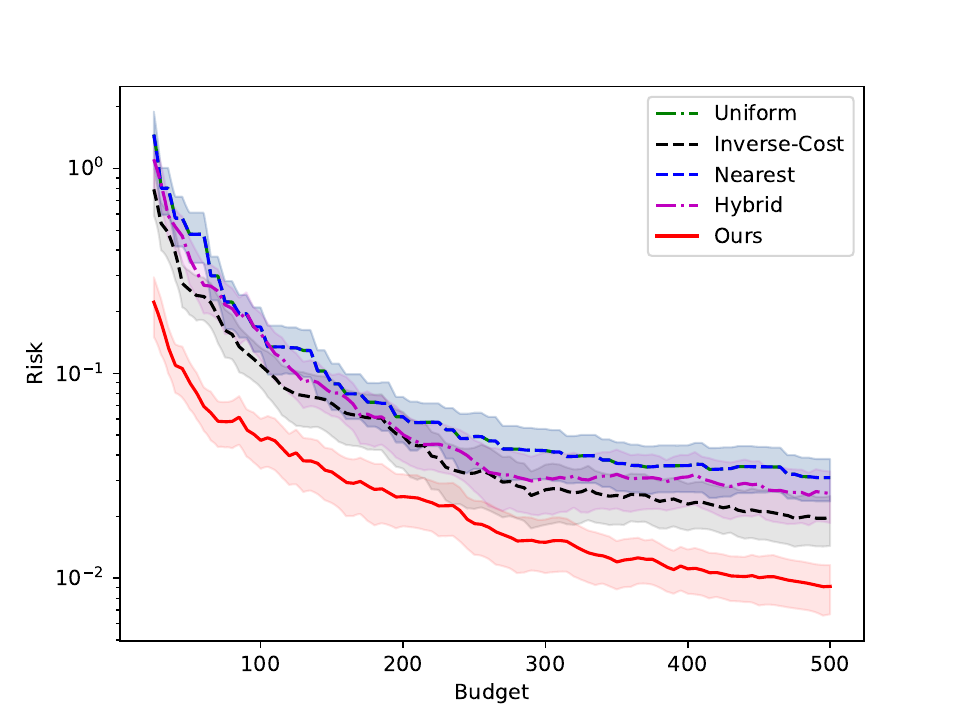}
	\end{minipage}
	\hspace*{\fill}
	
	\hspace*{\fill}
	\begin{minipage}[t]{0.49\linewidth}
		\centering
		\vspace{0pt}
		\includegraphics[width=0.95\linewidth]{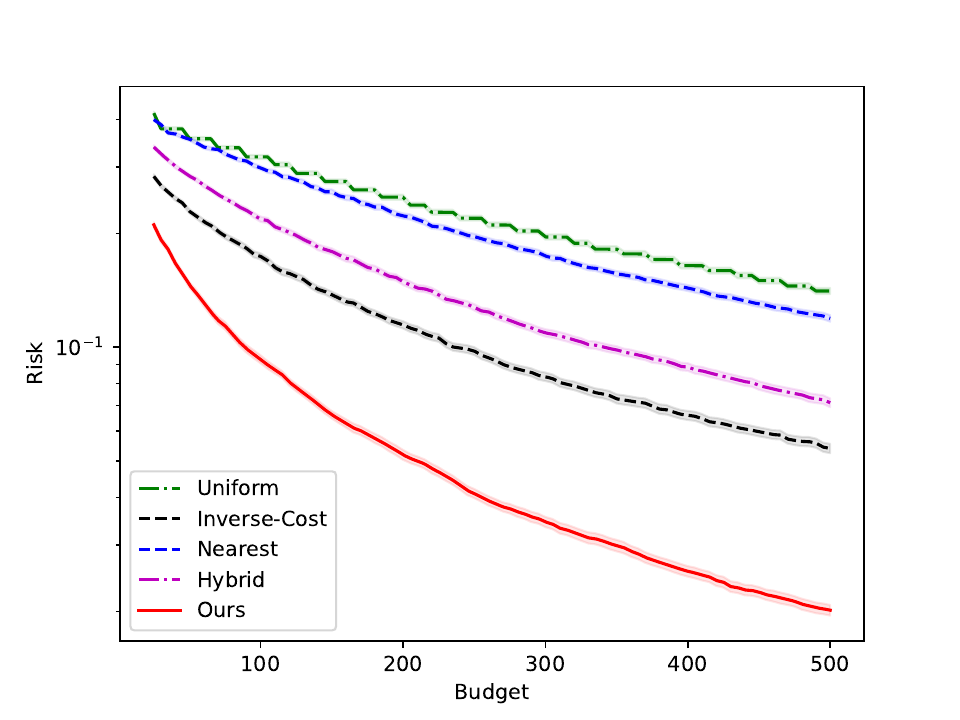}
	\end{minipage}
	\hfill
	\begin{minipage}[t]{0.49\linewidth}
		\centering
		\vspace{0pt}
		\includegraphics[width=0.95\linewidth]{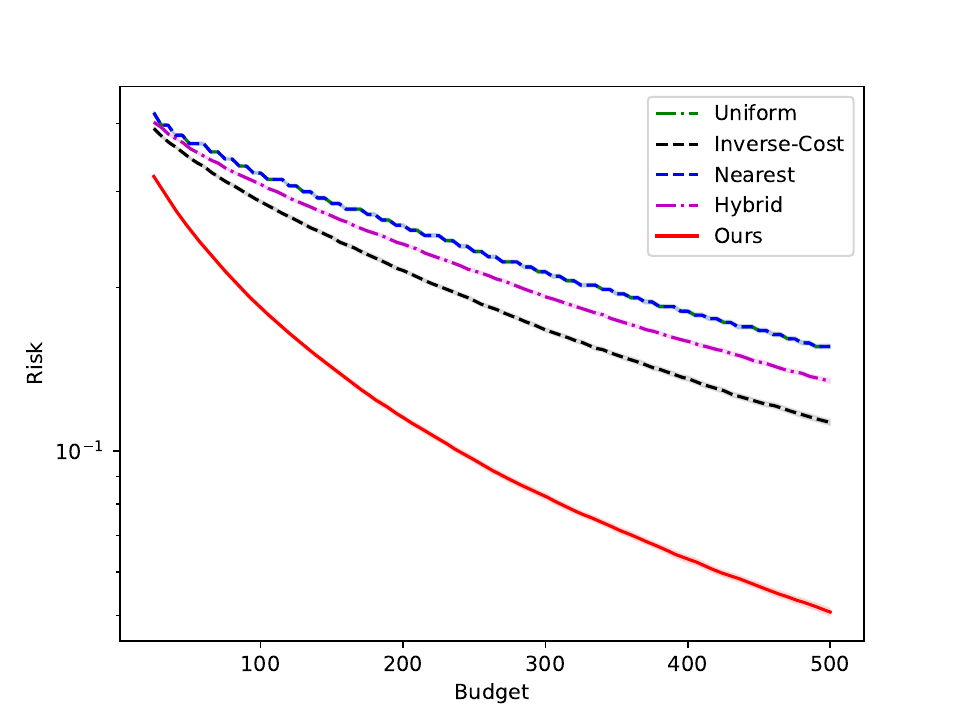}
	\end{minipage}
	\hspace*{\fill}
    \caption{Estimated risk based on 100 simulations in each setting. Error regions represent empirical average $\pm$ 2 SE. Row 1: Population mean. Row 2: Binary classification.}
    \label{fig:pyrm}
\end{figure}
}

\begin{abstract}
Data collection is a critical component of modern statistical and machine learning pipelines, particularly when data must be gathered from multiple heterogeneous sources to study a target population of interest. In many use cases, such as medical studies or political polling, different sources incur different sampling costs. Observations often have associated group identities---for example, health markers, demographics, or political affiliations---and the relative composition of these groups may differ substantially, both among the source populations and between sources and target population.

In this work, we study multi-source data collection under a fixed budget, focusing on the estimation of population means and group-conditional means. We show that naive data collection strategies (e.g. attempting to ``match'' the target distribution) or relying on standard estimators (e.g.\ sample mean) can be highly suboptimal. Instead, we develop a sampling plan which maximizes the \emph{effective sample size}---the total sample size divided by 
$\DD{\chi^2}{q}{\overline{p}} + 1$,
where $q$ is the target distribution, $\overline{p}$ is the aggregated source distributions, and $D_{\chi^2}$ is the $\chi^2$-divergence.
We pair this sampling plan with a classical post-stratified estimator and upper bound its risk. We provide matching lower bounds, establishing that our approach achieves the budgeted minimax optimal risk. Our techniques also extend to prediction problems when minimizing the excess risk, providing a principled approach to multi-source learning with costly and heterogeneous data sources.\footnote{%
    Accepted for presentation at the Conference on Learning Theory (COLT) 2026. Appears as an extended abstract in PMLR, volume 336.
}
\end{abstract}

\section{Introduction}
\label{sec:intro}

Data collection is a central  component of any data analysis pipeline. The performance of even the most well-designed estimators (or learning algorithms) depends heavily on the data they are trained on, and thus the design of the data collection scheme can be just as important as the estimator itself.

\paragraph{Multi-source data collection} In many practical scenarios, data is collected from multiple \emph{sources} in service of designing a system to better understand a \emph{target} population of interest.
For example, planners of a clinical study may work with a number of treatment centers (sources) spread across a country, with the goal of predicting treatment effects across the entire population (target) of their country.
Complicating this process is the heterogeneity in the distribution of \textit{groups}, such as demographics or disease prevalence, at the sources, and the data collection costs at each source. These groups often vary 
substantially across centers and  differ markedly from those of the overall population, and costs of data collection can vary due to operational expenses and participant recruitment.

Most results in the statistics and machine learning literature rely on the assumption that the training data come from the same target distribution against which an estimator will be evaluated.
Thus, when faced with the problem of designing a data collection system with access to multiple, heterogeneous data sources, at first glance, it would appear that the goal should be to craft a sampling scheme which results in an aggregated ``source'' which most closely resembles the target distribution of interest. However, this naive approach ignores differences in sampling costs across sources, and fails to exploit the fact that a well-designed estimator should benefit from additional  data.
On the other hand, simply maximizing the number of collected samples without regard to the target distribution is also not meaningful.
We illustrate these challenges via the example below.

\begin{example}
We wish to estimate the average BMI of adults in a state where it is known that 25\% of adults are physically active (A) and 75\% are inactive (I). Collecting i.i.d.\ samples from the population would permit direct estimation via the sample mean. However, data must be collected from one of two sources, an urban and rural clinic, under a fixed budget of \$1{,}000. Measuring BMI costs \$1 and \$2 per sample at the urban and rural clinics, respectively. The urban population is 80\% active, while the rural population matches the state (25\% active), reflecting strong selection effects. Crucially, an individual’s activity status is unknown at recruitment, requiring a post-measurement questionnaire.

Allocating the entire budget to the urban clinic yields 1{,}000 measurements, but only roughly 200 from group I, which represents the majority of the population of interest.
Another naive approach attempting to ``match'' the state distribution would allocate the entire budget to the rural clinic, yielding 500 measurements matching the state (roughly 125 from A and 375 from I), and use the sample mean.
However, using the techniques we develop, the optimal allocation turns out to be collecting 152 and 424 samples from the urban and rural clinics, respectively, yielding roughly 228 samples from A and 348 samples from I.
While there is a mismatch with the state population, collecting more data from group A (relative to ``matching'') is more helpful here. Though the sample mean is no longer appropriate, pairing this with an appropriate estimator could leverage the larger, albeit biased, sample to improve estimation accuracy.
\end{example}

This example highlights several key features of multi-source data collection:
\emph{(i)} Sampling from different sources incurs different costs.
\emph{(ii)} Source populations have heterogeneous group compositions that can differ substantially from the target population.
\emph{(iii)} Group proportions are \emph{known}, both at the source and target levels, but it is not always possible to cheaply preselect individuals based on group identity.\footnote{%
In practice, planners of clinical studies may have already pre-selected based on easily observable characteristics (e.g., age or race), while identifying other group attributes requires incurring the measurement cost (e.g., health conditions).
}
\emph{(iv)} Effective data collection must be paired with a well-designed estimator that can appropriately leverage large but systematically biased datasets. These challenges motivates the following question of study for this setting:
\begin{center}
	\itshape
	What is the optimal procedure for collecting and learning from data coming from sources with heterogeneous population compositions and unequal data collection costs? 
\end{center}

\subsection{Model}
\label{sec:model}

\paragraph{Environment}
A population can be divided into $K$ groups. A planner has access to $M$ sources, from which they can sample data. Observations are in the form of a tuple $(Z,Y)\,$, where $Z\in[K]=\{1,\ldots,K\}$ is the group identity and $Y\in\R$ is the response, or label. At source $m$, we assume $(Z,Y)\iidsim\PSm(z,y) = \qSm(z)\PYZ(y\mid z)\,$. Here, $\qSm$ is a \emph{known}\footnote{%
\label{foot:robust}%
We note that this assumption allows for clarity of our results, but is not critical. If instead $\qSm$ and $\qT$ are estimated to within $\epsilon$ relative-error, the performance of our proposed policy computed with the perturbed versions can be at most $(1+6\epsilon+\bigO(\epsilon^2))$-times worse than if computed with the true distributions. One could also utilize robust optimization techniques over an uncertainty region for $\qSm$ and $\qT\,$, but this is beyond the scope of this work.}
discrete distribution over $[K]\,$, defining the group distribution at source $m\,$, while $\PYZ$ is an \emph{unknown} conditional distribution.
All groups have positive probability in at least one source, \ie for each $z\in[K]\,$, there exists $m\in[M]$ such that $\qSm(z)>0$.
We assume that $\PYZ$ does not depend on the source $m$;
this assumption is reasonable in practice with sufficiently informative groups (e.g., treatment response is independent of location, conditional on disease condition, genetic markers, age, etc.), while still yielding a rich theoretical framework.
We also assume that $\PYZ$ belongs to the following class:
\begin{equation*}
	\meandist \defeq \left\{\PYZ \in \mathscr{P}(\R): \;
	\abs{\E[Y\mid Z]}\leq R\,,\;
    \Var(Y\mid Z)\leq\sigma^2,\;\text{a.s. }\right\}\,,
\end{equation*}
where $\mathscr{P}(\R)$ is the set of all probability distributions over $\R$.
Here, $R$ and $\sigma^2$ are \emph{unknown} to the policy.
As we show in Theorem~\ref{thm:imposs},  boundedness of $\abs{\E[Y\mid Z]}$ is necessary in this problem.

\paragraph{Learning with respect to a target group distribution}
In this work, we focus mainly on estimating \emph{(i)} the mean or \emph{(ii)} the vector of conditional means.
We formulate \emph{(i)} as estimating the expectation of $Y$, under the squared loss,
for a target distribution $\PT$ of interest.
The target distribution can be written as $\PT(z,y) = \qT(z)\PYZ(y\mid z)\,$; here $\qT$ is \emph{known},\textsuperscript{\ref{foot:robust}}
and defines the distribution of groups at the population level, and the conditional distribution $\PYZ$ is unknown and the same as the sources.
The population mean is $\thetaPM(\PT) = \E_{Y\sim \PT}[Y]$,
where $\thetaPM:\meandist\rightarrow \R$.
For \emph{(ii)}, we define the group-conditional means
as $\thetaGM(\PT) = \{\E_{Y,Z\sim\PT}[Y|Z=z]\}_{z\in[K]}$, where $\thetaGM:\meandist\rightarrow \R^K$, which we will estimate under the $\ell_2^2$ loss; as we will see, estimating the conditional means can be framed as estimation under a ``uniform'' target distribution.

These two parameters cover a broad range of quantities of interest to practitioners. The population mean corresponds to quantities such as the average treatment effect (ATE)~\citep{Hirano2003Efficient,Imbens2004Nonparametric,Chernozhukov2018Double} of a new therapeutic, proportion of votes for a candidate, or the expected revenue for a new product across an entire customer base. Likewise, the vector of conditional means corresponds to quantities such as the conditional average treatment effect (CATE)~\citep{Imai2013Heterogeneity,Wager2018Estimation,Nie2021Quasi}, the proportion of votes in each demographic group, or the expected revenue within each customer segment.

\paragraph{Multi-source data collection under a budget} 
In practice, there are important resource constraints to consider when collecting data. Traditionally, this is studied via the performance of the system in terms of the total sample size, but this only serves as a proxy for the actual constraints, such as time and/or money needed to sample data. To this end, we study the setting where each sample from source $m$ comes at a cost $c_m\,$, and so the total cost of a \emph{sampling plan} $\vn=(n_1,\ldots,n_M)\,$, where $n_m$ is the number of points to be collected from source $m\,$, is $\vc^\top\vn\,$, for $\vc = (c_1,\ldots,c_M)\,$.
We have a fixed budget $B>0\,$, and feasible sampling plans are those which satisfy $\vc^\top\vn\leq B\,$.

For population mean estimation, a policy is a tuple $(\vn,\thetahatpm)$, consisting of both a sampling plan $\vn$ and an estimator $\thetahatpm$ mapping the collected data $D$ to an estimate $\thetahatpm(D)$.
The risk of a policy under the (unknown) conditional distribution $\PYZ$ is the expected squared loss:
\begin{equation*}
		\riskpm((\vn,\thetahatpm),\PYZ) \defeq \E_{D\sim\PSjoint}\left[\left(\thetahatpm(D)-\thetaPM(\PT)\right)_2^2\right]\,.
\end{equation*}
Here, $\PSjoint$ denotes the joint distribution of data collected from the sources under the sampling plan $\vn$.
For simplicity, we suppress the dependence of the risk on the group distributions $\qSm$ and $\qT$, which are fixed and known to the planner ahead of time.
A policy’s performance is evaluated relative to the budgeted minimax risk $\mmriskpm$, defined below.
Note that the supremum is taken only over the conditional distributions $\PYZ$ which is the only unknown. We have:
\begin{equation}
	\begin{split}
		& \mmriskpm(B,\meandist) \defeq \inf_{\vc^\top\vn\leq B}\;\inf_{\thetahatpm}\;\sup_{\PYZ\in\meandist}\riskpm((\vn,\thetahatpm),\PYZ)\,,
	\end{split}
    \label{eqn:mmriskpm}
\end{equation}
For estimating group-conditional means, a policy is similarly a tuple $(\vn,\thetahatgm)\,$.
The risk of a planner's policy $\riskgm$ is as defined below, with the minimax risk defined similar to~\eqref{eqn:mmriskpm}. We have:
\begin{align*}
		\riskgm((\vn,\thetahatgm),\PYZ) \defeq \E_{D\sim\PSjoint}\left[\left\|\thetahatgm(D)-\thetaGM(\PT)\right\|_2^2\right]\,.
\end{align*}

\subsection{Summary of our contributions and techniques}

We first establish lower bounds on the minimax risks for this novel setting, and then design minimax optimal policies. We then extend these ideas to prediction tasks.
We now outline our main contributions, focusing on population mean estimation for simplicity.

\paragraph{Effective sample size}
A key quantity in our analysis is the \emph{effective sample size}, which quantifies how well we can estimate a population quantity when data comes from different sources.
To define this, for a sampling plan $\vn$, let $\qavg = (\vone^\top \vn)^{-1}\sum_{m=1}^Mn_m \qSm$ be the  mixture distribution over the group identities.
For a fixed set of source group distributions $\{\qSm\}_{m\in[M]}$,
we define the effective sample size $\neff(\vn, q)$ for a sampling plan $\vn$ with respect to a target distribution $q$ over groups as:
\begin{align}
	\neff(\vn, q) \defeq
	\frac{\vone^\top \vn}{\dd{q}{\qavg}}\,,
	\qquad
	\text{where,}
	\quad \dd{a}{b} \defeq \sum_{z\in[K]}\frac{a^2(z)}{b(z)}\,.
	\label{eqn:neff}
\end{align}
That is, the effective sample size is the total number of samples collected divided by the discrepancy measure $\dd{q}{\qavg}$.
The discrepancy measure satisfies $\dd{a}{b}\geq1\,$, with equality only when $a=b\,$. It also satisfies $\dd{a}{b}=\exp(\DD{2}{a}{b}) = \DD{\chi^2}{a}{b}+1\,$, where $D_2$ is the R\'enyi-2 divergence \citep{Rényi1961MeasuresEntropy} and $D_{\chi^2}$ is the $\chi^2$-divergence. 

\paragraph{Lower bound (\S~\ref{sec:me-lbs})}
Our first result establishes a lower bound on the risk achievable by any policy.
As we will see, this lower bound
also informs the design of an optimal policy for this problem.
To state the result, define
$\avgcost(\vn) = (\vone^\top \vn)^{-1}\vc^\top \vn$ as the \emph{average sample cost}
of a sampling plan $\vn$.
\theoremstyle{plain}
\newtheorem*{thm:me-lb}{Theorem~\ref{thm:me-lb}}
\begin{thm:me-lb}[Informal]
	The minimax risk~\eqref{eqn:mmriskpm} satisfies the following lower bound, where $\nstarT$ is the sampling plan $\vn$ which maximizes
	$\neff(\vn, \qT)$ subject to the constraint $\vc^\top \vn \leq B$.
	We have,
	\begin{align*}
		\mmriskPM(B,\meandist) & \geq \frac{\sigma^2\avgcost(\nstarT)\avgweightbeta[\nstarT]}{B} - \littleO\left(\frac{1}{B}\right)
		= \frac{\sigma^2}{\neff(\nstarT, \qT)} - \littleO\left(\frac{1}{B}\right),
	\end{align*}
\end{thm:me-lb}
The lower bound illustrates that the effective sample size is a key quantity in this problem, as we see the familiar $\sigma^2/n$ bound on the risk, except with $n$ replaced by the effective sample size of $\nstarT$.

\emph{Proof outline.}
The proof, which is a key technical contribution of this work, builds on the common technique of lower bounding the worst-case risk over $\meandist$ by the expected risk of the Bayes estimator under a suitably chosen prior.
However, the typical choice for this technique, leveraging normal-normal conjugacy, does not work in our setting, as we assume a bounded domain for the means (recall that the problem is hopeless without boundedness). We instead utilize a uniform prior, requiring study of the expected variance of a truncated normal posterior distribution.
We then recover appropriate dependence on $\neff(\vn,\qT)$ via a decomposition of an intractable integral
appearing in the posterior variance into 3 regions, carefully chosen to leverage normal distribution properties and a Gaussian tail lower bound technique.

\paragraph{Upper bound (\S\ref{sec:mean-est})}
This lower bound, if tight, suggests collecting data according to the sampling plan $\nstarT$ and pairing it with an appropriate estimator to obtain a minimax-optimal policy.
The sample mean is inadequate here, since the mixture distribution induced by $\nstarT$ generally differs from the target distribution.
Instead, we show that the natural and classically studied post-stratified estimator $\thetahatPS$~\citep{Holt1979PostStratification}, which first estimates the mean within each group $\{\ybarz\}_{z\in[K]}$ and then combines them as $\sum_{z\in[K]} \qT(z)\,\ybarz$, is optimal for this setting. 
This yields the following theorem.
\theoremstyle{plain}
\newtheorem*{thm:me-strat-ub}{Theorem~\ref{thm:me-strat-ub}}
\begin{thm:me-strat-ub}[Informal]
	The policy $(\nstarT,\thetahatPS)$ 
	achieves risk,
	\begin{align*}
		\riskPM((\nstarT,\thetahatPS),\PYZ) & \leq \frac{\sigma^2\avgcost(\nstarT)\avgweightbeta[\nstarT]}{B} + \littleO\left(\frac{1}{B}\right)
		= \mmriskPM(B,\meandist) + \littleO\left(\frac{1}{B}\right)
	\end{align*}
\end{thm:me-strat-ub}
This result matches the lower bound with exact constants in the leading term, establishing that $(\nstarT,\thetahatPS)$ is minimax optimal. While relying on standard tools---a careful decomposition of the risk and Taylor expansion to control lower-order terms---this proof is nonetheless novel, differing fundamentally from prior analyses of the post-stratified estimator.

\paragraph{Prediction problems (\S~\ref{sec:prediction})} 
Next, we explore prediction problems where we have additional features $X$, associated with each observation, separate from the group identity $Z$.
Given a hypothesis class $\cH$ consisting of hypotheses mapping $(Z, X)$ to a label $Y$,
we wish to collect data, and use it to find a hypothesis $h\in\cH$ which minimizes the prediction error with respect to a given target distribution. 

\emph{{Upper bound.}}
We first study how our sampling plan, which maximizes the effective sampling size,
performs when paired with an
Importance-Weighted Empirical Risk Minimization (IWERM) procedure $\hiwerm$~\citep{Cortes2010LearningBounds}.
We have the following bound on the excess risk of our method $\riskPR((\nstarT,\hiwerm),\cH)$ (\ie risk of our method minus the best achievable in $\cH$).

\theoremstyle{plain}
\newtheorem*{thm:avg-risk-ub}{Theorem~\ref{thm:avg-risk-ub}}
\begin{thm:avg-risk-ub}[Informal]
	Suppose the hypothesis class $\cH\subset\{h:[K]\times\cX\to\R\}\,$, when restricted to any $z\in[K]\,,$ has finite pseudo-dimension $\pseudodimH$ over $\cX\,$.
	Then, under a budget $B$, the excess risk of our policy $(\nstarT,\hiwerm)$ can be upper bounded by
	\begin{align*}
		\riskPR((\nstarT,\hiwerm),\cH) \in \bigOtilde\left(\sqrt{\frac{\pseudodimH K\avgcost(\nstarT)\avgweightbeta[\nstarT]}{B}}\right)
	\end{align*}
\end{thm:avg-risk-ub}
		
\emph{{Lower bound.}}
To study if our sampling plan is optimal in a prediction setting, we establish lower bounds
for binary classification under the 0--1 loss, where the pseudo-dimension is simply the VC dimension~\citep{Vapnik1971UniformConvergence,Pollard1984ConvergenceStochastic}.
This gives us the following result
on the minimax excess risk $\mmriskPR(B,\cH)$ relative to a hypothesis class $\cH$ under a budget $B$.

\theoremstyle{plain}
\newtheorem*{thm:avg-risk-lb}{Theorem~\ref{thm:avg-risk-lb}}
\begin{thm:avg-risk-lb}[Informal]
	Suppose the hypothesis class $\cH\subset\{h:[K]\times\cX\to\{\pm1\}\}\,$, when restricted to any $z\in[K]\,,$ has finite VC-dimension $\vcdimH$ over $\cX\,$. 
	Let $\qmin = \min_{z\in[K]}\qT(z)\,$.
	Then, the minimax excess risk satisfies
	\begin{align*}
		\mmriskPR(B,\cH) \in \Omega\left(\sqrt{\frac{\vcdimH\qmin\avgcost(\nstarT)\avgweightbeta[\nstarT]}{B}}\right)
	\end{align*}
\end{thm:avg-risk-lb}
Comparing with the upper bound, we have matching dependence on $\vcdimH\neff^{-1}(\nstarT,\qT)$, indicating that the effective sample size is a fundamental quantity in this setting as well.
While there is a $\sqrt{K/\qmin}$ gap between the upper and lower bounds, we believe this is an artifact of our analysis.

As was the case for mean estimation, the key technical challenge in this setting is the lower bound.
Despite the wealth of related covariate shift literature \citep{Shimodaira2000Improvingpredictive,Sugiyama2007CovariateShift,Mansour2009Multiplesource,Cortes2010LearningBounds,Cortes2019Relativedeviation,Hanneke2019ValueTarget,Sugiyama2019MachineLearning,Zhang2020OnestepApproach,Kpotufe2021Marginalsingularity,Fang2023GeneralizingImportance,Ma2023Optimallytackling,Ge2024MaximumLikelihood}, to the best of our knowledge, we are the first to provide a minimax lower bound for the excess risk with explicit dependence on the discrepancy $\avgweight\,$.

\emph{Proof outline.}
The standard reduction from learning to testing to prove lower bounds does not apply directly in our setting,
since training data are drawn from source distribution(s) that differ from the target
distribution under which performance is evaluated.
To address this challenge, we develop a novel \emph{reduction-to-testing} lemma that
explicitly accounts for this distributional mismatch.

We combine this lemma with Fano's inequality to obtain a general framework for minimax
lower bounds. The framework relates the minimax risk to
\emph{(i)} the separation of losses
induced under the target distribution and
\emph{(ii)} the KL divergences induced between the
corresponding source distributions.
To fully exploit this machinery and induce the desired dependence on $\avgweight$,
we construct a family of conditional distributions indexed by a subset of the
$d$-dimensional hypercube via the Gilbert--Varshamov lemma~\citep{Gilbert1952,Varshamov1957}.
The construction separates positive and negative class probabilities according to the
group identity $z$.
With an appropriate choice of separations, the resulting class is well separated in loss
under $\qT$ while maintaining uniformly bounded KL divergences under $\qavg$, yielding the
desired lower bound.

\paragraph{Empirical evaluation (App.~\ref{sec:experiments})}
We corroborate our results in simulations, comparing to a set of straightforward, yet suboptimal, alternative sampling plans. We demonstrate how they under-perform compared to our policy of maximizing the effective sample size.
\subsection{Related Work}
\label{sec:rel-work}
\paragraph{Sampling techniques} Existing methods broadly fall into the following three categories.

\emph{Stratified sampling.}
The most traditional approach is stratified sampling, dating back to \cite{Neyman1934TwoDifferent}. Here, it is assumed that the planner can collect \iid samples from each group (stratum). In our setting, this approach would require recruited individuals to be pre-selected based on their group identity, or each source to itself be a group. However, it is often impossible to observe the group identity without incurring the sampling cost, and we aim to study groups which are not uniquely defined by the sources. The stratified setting is well studied and continues to attract new research \citep{Khan2003TheoryMethods,Meng2013ScalableSimple,Khan2015Designingstratified,Liberty2016StratifiedSampling,Cervellera2018DistributionPreservingStratified,Sauer2021Optimalallocation,Yadav2025OptimalStrategy}, but these approaches fail to generalize to our setting.

\emph{Multiple-frame sampling.}
Multiple-frame sampling studies the setting where the sources (frames) from which the planner is sampling cover overlapping sub-populations. A classic example is a telephone survey \citep{Wolter2015Optimumallocation}, where the two sources are cell phone and landline users. Some individuals will own both, so they have a possibility of being selected in both sources. Optimal allocation strategies in this setting are thus focused on the size and variance within these intersections, which is not applicable to our setting. We refer the reader to a thorough review from \cite{Lohr2007RecentDevelopments}.

\emph{Cluster sampling.}
Of these approaches, cluster sampling is the most similar to our problem. Here, the planner collects \iid samples from each source (cluster), as in our setting. However, these approaches often assume that sources are sampled from the super-population of sources, \eg randomly selecting blocks in a city \citep{Tryfos1996Sampling}. Instead, our setting allows for the common case where sources available for sampling are fixed a priori. Even when this assumption is not critical, optimal allocation in the cluster sampling literature does not address our setting of heterogeneous group compositions and how they affect the design of sampling plans \citep{Connelly2003Balancingnumber,Sharma2015Determiningoptimum,Shen2020OptimalSample,Copas2021Optimaldesign,Varshney2023Optimumallocation}.

\paragraph{Effective sample size}
We note that our definition of the effective sample size appears in related fields, primarily in evaluating MCMC algorithms in Bayesian importance sampling  \citep{Kong1994SequentialImputations,Fishman1996MonteCarlo,Liu1996Metropolizedindependent,Agapiou2017ImportanceSampling,Martino2017Effectivesample,Elvira2022RethinkingEffective} and in learning under covariate shift \citep{Mansour2009Multiplesource,Cortes2010LearningBounds,Cortes2019Relativedeviation,MaiaPolo2023Effectivesample}. Further, existing lower bounds in the covariate shift literature either do not consider a fixed target-source pair \citep{Hanneke2019ValueTarget,Kpotufe2021Marginalsingularity,Ma2023Optimallytackling} or do not induce dependence on $\avgweight$ when applied to our setting \citep{Ge2024MaximumLikelihood}.

\paragraph{Data procurement}
There is a line of work in the EconCS literature sudying data procurement, which is conceptually related to our problem of study, yet addresses distinct questions and problem settings from our work.~\citet{Abernethy2015LowCostLearning} and~\citet{Chen2018OptimalData} address the problem of sequentially purchasing data from strategic agents subject to an expected budget constraint for prediction and parameter estimation, respectively. In contrast, our setting considers data sources with fixed, known procurement costs and an exact budget constraint. Further, our setting allows for multiple, differing source distributions and a separate target distribution, whereas~\cite{Abernethy2015LowCostLearning} and~\cite{Chen2018OptimalData} consider a single, shared data distribution for all agents, which is also the target distribution.~\citet{Haghtalab2022OnDemandSampling} and~\citet{Aznag2023activelearning} consider more similar data-generating processes to our setting for prediction and parameter estimation, respectively, having distributions that differ over stratified population groups. However, neither address the aspects of multiple data sources or source-dependent costs, instead considering the ability to sample directly from a chosen group and acquisition of a fixed $T$ number of data points.
\section{Lower Bounds}
\label{sec:me-lbs}
We now study the mean estimation problems from~\S\ref{sec:model}. Recall the definition of the minimax risk from~\eqref{eqn:mmriskpm}, and the effective sample size $\neff$ from~\eqref{eqn:neff}. Let $\unif$ denote the uniform distribution over $[K]$ and $\nz$ denote the (randomly) observed number of observations from group $z$ in a dataset $D\,$. We begin with the following lower bounds and a sketch of our proof; the full version is in App.~\ref{sec:pfs-me-lb}.

\begin{restatable}{theorem}{melb}
	\label{thm:me-lb}
	Fix a budget $B>0$ and a vector of costs $\vc=(c_1,\ldots,c_M)\,$, with $c_m>0$ for all $m\in[M]\,$. Define the following sampling plans,
	\begin{align}
		\label{eq:max-neff}
		\nstarT \in \argmax_{\vn\in\N^M}\neff(\vn,\qT)\;\;\text{\st}\;\;\vc^\top\vn\leq B\,, \qquad 
        \nstarU \in \argmax_{\vn\in\N^M}\neff(\vn,\unif)\;\;\text{\st}\;\;\vc^\top\vn\leq B
	\end{align}
	Then, we have the following lower bounds on the risk of any policy $(\vn,\theta)\,$,
	\begin{align*}
		\mmriskPM(B,\meandist) & \geq \frac{\sigma^2\avgcost(\nstarT)\avgweightbeta[\nstarT]}{B} - \bigO\left(\frac{1}{B^{\nicefrac{3}{2}}}\right) = \frac{\sigma^2}{\neff(\nstarT,\qT)} - \littleO\left(\frac{1}{B}\right)\\
		\mmriskGM(B,\meandist) & \geq \frac{K^2\sigma^2\avgcost(\nstarU)\avgweightUbeta[\nstarU]}{B} - \bigO\left(\frac{1}{B^{\nicefrac{3}{2}}}\right) = \frac{K^2\sigma^2}{\neff(\nstarU,\unif)} - \littleO\left(\frac{1}{B}\right)
	\end{align*}
\end{restatable}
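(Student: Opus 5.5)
Fix an arbitrary feasible sampling plan $\vn$ with $\vc^\top\vn\le B$ and an arbitrary estimator, and lower bound the worst-case risk over $\meandist$ by the Bayes risk under a prior. Afterwards we take the infimum over $\vn$; this is where $\nstarT$ appears. The submodel is Gaussian: $\PYZ(\cdot\mid z)=\mathcal{N}(\mu_z,\sigma^2)$ with $\mu=(\mu_z)_{z\in[K]}$. The means receive independent priors $\mu_z\sim\mathrm{Unif}[-R,R]$, which keeps $|\E[Y\mid Z]|\le R$. Given the group counts $\nz$ (which are independent of $\mu$) and the group sample means $\ybarz$, which are sufficient, the posterior of each $\mu_z$ is an independent normal $\mathcal{N}(\ybarz,\sigma^2/\nz)$ truncated to $[-R,R]$. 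The Bayes estimator of $\thetaPM=\sum_z\qT(z)\mu_z$ is the posterior mean. By posterior independence its Bayes risk is
\[
\E\Bigl[\sum_z \qT(z)^2\,\Var(\mu_z\mid D)\Bigr].
\]
For $\thetaGM$ under $\ell_2^2$ the Bayes risk is $\E\bigl[\sum_z \Var(\mu_z\mid D)\bigr]$. Writing this as $K^2\sum_z\unif(z)^2\Var(\mu_z\mid D)$ reduces the group-means bound to the population-mean bound with $\qT$ replaced by $\unif$, which produces the factor $K^2$. So only the first bound needs proof.

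\textbf{Key step: truncated-normal posterior variance.} We must show that, in prior expectation,
\[
\E[\Var(\mu_z\mid D)\mid \nz]\ \ge\ \frac{\sigma^2}{\nz}\bigl(1-C\,\tfrac{\sigma}{R\sqrt{\nz}}\bigr),
\]
with the convention that the bound is the prior variance $R^2/3$ when $\nz=0$. Let $s=\sigma/\sqrt{\nz}$. Split the range of $\mu_z$ into three regions:
\begin{itemize}
\item the interior $[-R+s\sqrt{\log(R/s)}\,,\,R-s\sqrt{\log(R/s)}]$;
\item two boundary strips of width $O(s\sqrt{\log})$;
\item the rest.
\end{itemize}
In the interior, $\ybarz$ lies well inside $[-R,R]$ with high probability. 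There the truncation removes only Gaussian tail mass, and a Mills-ratio type lower bound on Gaussian tails shows the truncated variance is $s^2(1-o(1))$. The strips have prior probability $O(s\sqrt{\log}/R)$. On them we use only $\Var\ge0$ (or a constant fraction of $s^2$). Combining the pieces gives $\E[\Var(\mu_z\mid D)\mid\nz]\ge s^2\bigl(1-O(s\sqrt{\log(1/s)}/R)\bigr)$. This is the step I expect to be the main obstacle: handling the posterior variance of a truncated normal uniformly over where $\ybarz$ lands, with error terms of lower order. The target is $O(B^{-3/2})$ up to logs; a crude bound would lose a constant factor instead.

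\textbf{Averaging over the counts.} $\nz\sim\mathrm{Binomial}$-sum with mean $N\qavg(z)$, where $N=\vone^\top\vn$. We need $\E[\min\{R^2/3,\sigma^2/\nz\}]$. Since $x\mapsto1/x$ is convex, Jensen gives $\E[1/\nz]\ge 1/(N\qavg(z))$, but the truncation at $\nz=0$ and the correction term need care. Apply Jensen to $1/\max(\nz,1)$ and note $\E\max(\nz,1)\le N\qavg(z)+P(\nz=0)$, where $P(\nz=0)$ is exponentially small whenever $N\qavg(z)$ is large. For groups with tiny $N\qavg(z)$ the lower bound is at least the prior variance, which only helps. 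Summing over $z$ gives
\[
\text{risk}\ \ge\ \frac{\sigma^2}{N}\sum_z\frac{\qT(z)^2}{\qavg(z)}-O(N^{-3/2})=\frac{\sigma^2}{\neff(\vn,\qT)}-\text{l.o.t.}
\]

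\textbf{Optimizing over $\vn$.} Taking the infimum over feasible $\vn$ turns the main term into $\sigma^2/\max_{\vc^\top\vn\le B}\neff(\vn,\qT)=\sigma^2/\neff(\nstarT,\qT)$. This equals $\sigma^2\avgcost(\nstarT)\,\dd{\qT}{\qavg[\nstarT]}/B$ up to integer rounding of $B$, since the budget constraint binds at the optimum. The lower-order term must be controlled uniformly in $\vn$, and $N\ge B/\max_m c_m$ for any near-optimal plan. For plans with small $N$ the risk is already larger than the target bound, so the uniform remainder is $O(B^{-3/2})$, possibly up to logarithmic factors that are absorbed by choosing the region cutoffs carefully.
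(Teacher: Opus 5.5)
Your architecture is the paper's: Gaussian submodel with a $\mathrm{Unif}([-R,R]^K)$ prior, independent truncated-normal posteriors, Bayes risk $\sum_z q_T(z)^2\,\mathbb{E}[\mathrm{Var}(\mu_z\mid D)]$ (with the $K^2\sum_z K^{-2}$ rewriting for the group means), Jensen over the counts, then the infimum over $\mathbf{n}$. The gap is in your key step. Write $s=\sigma/\sqrt{n_z}$. By your own accounting, the interior/strip/rest split of the range of $\mu_z$ only yields $\mathbb{E}[\mathrm{Var}(\mu_z\mid D)\mid n_z]\ge s^2\bigl(1-O(\tfrac{s}{R}\sqrt{\log(R/s)})\bigr)$. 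That is a remainder of order $B^{-3/2}\sqrt{\log B}$. It gives the $-o(1/B)$ form (the matching leading constant) but not the stated $-O(B^{-3/2})$.

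The remedy you suggest, choosing the cutoffs carefully, cannot work with a single interior/strip split. Suppose $\mu_z$ lies at distance $us$ from an endpoint, and let $\delta(u)=1-s^{-2}\,\mathbb{E}[\mathrm{Var}(\mu_z\mid D)\mid \mu_z,n_z]$ be the expected relative deficit. It is driven by the $t\phi(t)/\Phi(t)$ term of the one-sided truncated-normal variance at $t=u-\xi$, $\xi\sim N(0,1)$. A direct Gaussian computation gives $\delta(u)\asymp u\,e^{-u^2/4}$, not $e^{-u^2/2}$. Suppose you bound the interior uniformly by $\sup_{u\ge u_0}\delta(u)$ and pay (a constant fraction of) $s^2$ on strips of prior mass $\asymp u_0 s/R$. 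The total relative loss is then $\gtrsim u_0e^{-u_0^2/4}+u_0s/R$. This is minimized at $u_0\asymp 2\sqrt{\log(R/s)}$, where it is of order $\tfrac{s}{R}\sqrt{\log(R/s)}$, so the logarithm is intrinsic to any single-cutoff scheme. Relatedly, your interior bound must be $s^2(1-O(\tfrac{s}{R}\sqrt{\log(R/s)}))$, not merely $s^2(1-o(1))$. With the cutoff $s\sqrt{\log(R/s)}$ exactly as written, the interior loss would be of order $(s/R)^{1/4}$ up to logarithms, so the constant in front of the log matters.

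The missing idea is to integrate the deficit against the prior instead of cutting. One option: bound $\delta(u)$ by an integrable function and use that $\mu_z$ is uniform. The total relative deficit is then at most $\tfrac{s}{R}\int_0^\infty\delta(u)\,du=O(s/R)$.

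The paper's option: average the exact variance $s^2\bigl(1-\frac{b\phi(b)-a\phi(a)}{\Phi(b)-\Phi(a)}-\bigl(\frac{\phi(b)-\phi(a)}{\Phi(b)-\Phi(a)}\bigr)^2\bigr)$ against the marginal density of $\bar Y_z$ given $n_z$. That density is exactly $\frac{1}{2R}(\Phi(b)-\Phi(a))$, so the denominators cancel and the $b\phi(b)-a\phi(a)$ term integrates to zero. One gets $s^2\bigl(1-\frac{s}{2R}I(R/s)\bigr)$ with $I(C)=\int_{\mathbb{R}}\frac{(\phi(x+C)-\phi(x-C))^2}{\Phi(x+C)-\Phi(x-C)}\,dx\le 8$ uniformly in $C$. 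This is precisely your target inequality with constant $4$ and no logarithm.

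Two smaller points.
\begin{itemize}
\item For the correction term you need an upper bound $\mathbb{E}[n_z^{-3/2}\mathbf{1}\{n_z\ge1\}]=O((N\bar q(z))^{-3/2})$, e.g.\ via a Chernoff bound, since Jensen points the wrong way there.
\item Uniformity over $\mathbf{n}$ should be organized around $N\bar q(z)$ for groups with $q_T(z)>0$, not around $N$. On plans whose leading term is within a factor $2$ of optimal, $N\bar q(z)\gtrsim q_T(z)^2B$, which makes the remainder uniformly $O(B^{-3/2})$. On the other plans a crude bound already exceeds the optimum, using that the Bayes risk given $n_z$ is nonincreasing in $n_z$ to cover small counts.
\end{itemize}
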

\begin{proof}(Proof sketch of Theorem~\ref{thm:me-lb}).
	We consider a mean parameter $\vmu\in[-R,R]^K$ and a uniform prior distribution over the value of this parameter, $\Pi=\Unif([-R,R]^K)\,$. We consider the distributions $P_{\vmu}\in\meandist$ having normal conditional distributions with conditional mean vector $\vmu$ and variance $\sigma^2\,$; that is, $Y\sim P_{\vmu}\implies Y\mid Z=z\sim N(\muz,\sigma^2)\,$. 
	We first lower bound the worst-case risk by the expected risk of the Bayes estimator.
	\begin{restatable}{lemma}{mmtobayes}
		\label{lem:mm-to-bayes}
		Consider the following generative model: first, $\vmu\sim\Pi\,$, then, for a sampling plan $\vn\,$, the dataset $D\sim\PSjoint\,$, with $\PYZ=\Pmu\,$. Further, denote by $\PSjointavg$ the \textit{unconditional} data distribution, accounting for the randomness in $\mu\,$, and by $\postpi$ the posterior distribution over $\vmu\,$. Then,
        \begingroup
        \allowdisplaybreaks
		\begin{align}
			\label{eq:cdtn-var}
			\begin{split}
				\mmriskPM(B,\meandist) & \geq \inf_{\vc^\top\vn\leq B}\sum_{z\in[K]}\qT^2(z)\E_{D\sim\PSjointavg}\left[\Var_{\vmu\sim\postpi}(\muz\mid D)\right] \\
				\mmriskGM(B,\meandist) & \geq \inf_{\vc^\top\vn\leq B}\sum_{z\in[K]}\E_{D\sim\PSjointavg}\left[\Var_{\vmu\sim\postpi}(\muz\mid D)\right]
			\end{split}
		\end{align}
        \endgroup
	\end{restatable}
	We then study the posterior distribution $\postpi\,$. We find that the $\muz$'s are independent and, on the event $\nz>0\,$, $\muz$ follows a truncated normal distribution with location $\ybarz\,$, scale $\nicefrac{\sigma^2}{\nz}\,$, and domain $[-R,R]\,$. Next, we explicitly compute the density of $\PSjointavg(\ybarz\mid\nz)$ and combine it with the variance of a truncated normal distribution to arrive at,
	\begin{align*}
		\E_{D\sim\PSjointavg}&\left[\Var_{\vmu\sim\postpi}(\muz\mid D)\mid\nz\right] \\
		&\qquad\qquad\qquad
		= \frac{\sigma^2}{\nz}\left(1- \frac{\sigma}{2\sqrt{\nz}R}\int_{\R}\frac{\left[\phi\left(x+\frac{\sqrt{\nz}R}{\sigma}\right)-\phi\left(x-\frac{\sqrt{\nz}R}{\sigma}\right)\right]^2}{\Phi\left(x+\frac{\sqrt{\nz}R}{\sigma}\right) - \Phi\left(x-\frac{\sqrt{\nz}R}{\sigma}\right)}\,dx\right)\,,
	\end{align*}
	where $\phi$ and $\Phi$ are the standard normal pdf and cdf, respectively. Clearly, because the variance is non-negative, this integral is bounded above by $\frac{2\sqrt{\nz}R}{\sigma}\,$, but if we wish to recover the appropriate leading term in the lower bound, we need to show it is $\in\littleO(\sqrt{\nz})\,$. In fact, we show that it is bounded by a constant independent of $\nz\,$.
	\begin{restatable}{lemma}{integral}
		\label{lem:integral}
		For any $C>0\,$, we have
		\begin{align*}
			\int_{\R}\frac{(\phi(x+C)-\phi(x-C))^2}{\Phi(x+C)-\Phi(x-C)}\,dx \leq 8
		\end{align*}
	\end{restatable}
	We prove this intermediate result by recognizing that this function is even, then dividing the positive real line into two or three regions, depending on the value of $C\,$. When $C>3\,$, for $x<C-3$, we use the commonly known fact that $\Phi(3)-\Phi(-3)>0.997$ to lower bound the denominator, then analytically compute the integral of the numerator, finding it no larger than $(1.994\sqrt{\pi})^{-1}\,$. For $x\in[\max\{C-3,0\},C+3]\,$, we utilize the fact that the integrand is never larger than $\nicefrac{1}{2}$ to bound this region by 3. Finally, for $x>C+3\,$, we use a technique from \cite{Vershynin2018HDP} to show,
	\begin{align*}
		\Phi(x+C) - \Phi(x-C) \geq \frac{\phi(x-C)-\phi(x+C)}{2(x-C+1)}\,,
	\end{align*}
	for $x>C+3\,$. Using this to lower bound the denominator, we can analytically compute the resulting integral, and find it is no larger than 0.02 in this region, giving the overall bound of 8. Plugging this result back into \eqref{eq:cdtn-var} and using Jensen's inequality to lower bound $\E[\nz^{-1}]\geq(\E[\nz])^{-1}\,$, we get,
	\begin{align*}
		\mmriskPM(B,\meandist)&\geq\inf_{\vc^\top\vn\leq B}\frac{\sigma^2\avgweight}{\vone^\top\vn} - \frac{4\sigma^2}{R(\vone^\top\vn)^{\nicefrac{3}{2}}}\sum_{z\in[K]}\frac{1}{\qavg^{\nicefrac{3}{2}}(z)} \\
		\mmriskGM(B,\meandist)&\geq\inf_{\vc^\top\vn\leq B}\frac{K^2\sigma^2\avgweightU}{\vone^\top\vn} - \frac{4\sigma^2}{R(\vone^\top\vn)^{\nicefrac{3}{2}}}\sum_{z\in[K]}\frac{\qT^2(z)}{\qavg^{\nicefrac{3}{2}}(z)}
	\end{align*}
	Finally, we recognize that choosing $\vn$ to minimize the leading terms can only possibly incur $\bigO(B^{-\nicefrac{3}{2}})$ additional risk beyond the optimizer of the entire expression, and $\nstarT$ and $\nstarU$ do this by definition. Further, $\nstarT$ and $\nstarU$ exhaust the entire budget, and so $\vone^\top\nstarT=\frac{B}{\avgcost(\nstarT)}\,$, and likewise for $\nstarU\,$.
\end{proof}

\section{Method}
\label{sec:mean-est}

We begin with the design of an estimator for a given sampling plan in \S~\ref{ss:me-est}, then study the optimal data collection scheme in \S\ref{ss:me-ess}.

\subsection{Estimator design}
\label{ss:me-est}
We find it instructive to begin our study with the design of an estimator under an arbitrary sampling plan $\vn\,$. Taking cues from the Bayes estimators used in the proof of Theorem~\ref{thm:me-lb}, for estimating the population mean, we propose the use of the classical post-stratified mean estimator of \cite{Holt1979PostStratification}. This estimator stratifies the collected data by group identity, estimates the group conditional means separately, and re-weights them based on $\qT$:
\begin{equation*}
	\thetahatPS \defeq \sum_{z\in[K]}\qT(z)\ybarz,
\quad \text{where,}\quad \ybarz = \frac{1}{n_z}\sum_{i=1}^{\vone^\top\vn}\1_{\{z\}}(Z_i)Y_i
\end{equation*}
Likewise, when estimating the vector of group-conditional means, we propose using the vector of observed group-conditional means, $\thetahatVM\defeq\{\ybarz\}_{z\in[K]}\,$. (By convention, $\ybarz=0$ when $\nz=0\,$.)

While the post stratified estimator is classical and intuitive, it has not been analyzed in a similar setting, and both foundational \citep{Holt1979PostStratification,Bethlehem1987Linear,Smith1991PostStratification} and contemporary \citep{Miratrix2013AdjustingTreatment} analyses study its performance conditioned on the $\nz$'s, which is not appropriate for our setting. 
We now present our results for the risk of our proposed estimators, $\thetahatPS$ and $\thetahatVM\,$. The proof of this theorem, which is straightforward, albeit new, is included in App.~\ref{sec:pfs-me-ubs}.

\begin{restatable}{theorem}{meestub}
	\label{thm:me-est-ub}
	Fix a sampling plan $\vn\in\N^M\,$, satisfying $\qavgbeta[\vn](z)>0$ for all $z\in[K]\,$. Then, there exist estimators $\thetahatPS$ and $\thetahatVM$ such that, for any $\PYZ\in\meandist\,$, the following holds,
    \begingroup
    \allowdisplaybreaks
	\begin{align*}
		\riskPM((\vn,\thetahatPS),\PYZ) & \leq \frac{\sigma^2\avgweight}{\vone^\top\vn} + \littleO\left(\frac{1}{\vone^\top\vn}\right) = \frac{\sigma^2}{\neff(\vn,\qT)} + \littleO\left(\frac{1}{\vone^\top\vn}\right) \\
		\riskGM((\vn,\thetahatVM),\PYZ) & \leq \frac{K^2\sigma^2\avgweightU}{\vone^\top\vn} + \littleO\left(\frac{1}{\vone^\top\vn}\right) = \frac{K^2\sigma^2}{\neff(\vn,\unif)} + \littleO\left(\frac{1}{\vone^\top\vn}\right)
	\end{align*}
    \endgroup
\end{restatable}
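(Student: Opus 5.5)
We bound the risk of $\thetahatPS$ by conditioning on the random group counts $\vN=(N_1,\ldots,N_K)$, writing $n_z$ for the realized counts as in the estimator definition. Let $\mu_z=\E[Y\mid Z=z]$ and $\sigma_z^2=\Var(Y\mid Z=z)\leq\sigma^2$. Given $\vN$, the $\ybarz$ are independent. On $\{n_z>0\}$, $\ybarz$ is unbiased for $\mu_z$ with variance $\sigma_z^2/n_z$. On $\{n_z=0\}$, $\ybarz=0$ by convention, so the error is $-\mu_z$. Writing $\thetahatPS-\thetaPM(\PT)=\sum_z\qT(z)(\ybarz-\mu_z)$ and using the bias--variance decomposition conditional on $\vN$ gives
\begin{align*}
\riskPM((\vn,\thetahatPS),\PYZ) \leq \sum_{z}\qT^2(z)\sigma^2\,\E\!\left[\frac{\1\{n_z>0\}}{n_z}\right] + \E\!\left[\Big(\sum_z \qT(z)\mu_z\1\{n_z=0\}\Big)^2\right].
\end{align*}
The second term is at most $R^2\big(\sum_z\qT(z)\P(n_z=0)\big)^2$.

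Next we control the two pieces using the distribution of the counts. Each $n_z$ is a sum of independent Bernoullis: $n_m$ of them have success probability $\qSm(z)$. Hence $\E[n_z]=(\vone^\top\vn)\qavg(z)$ with $\qavg(z)>0$ by assumption.

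For the zero-count term, $\P(n_z=0)=\prod_m(1-\qSm(z))^{n_m}\leq \exp(-(\vone^\top\vn)\qavg(z))$. This decays exponentially in $\vone^\top\vn$, so it is $\littleO(1/\vone^\top\vn)$ for fixed $R$.

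For the main term, we need $\E[\1\{n_z>0\}/n_z] \leq \frac{1}{(\vone^\top\vn)\qavg(z)}+\littleO(1/\vone^\top\vn)$. Summing this against $\qT^2(z)$ yields $\sigma^2\avgweight/(\vone^\top\vn)$, since $\sum_z\qT^2(z)/\qavg(z)=\avgweight$.

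To prove that estimate we split on the event $A_z=\{n_z\geq(1-\delta)\E[n_z]\}$ with $\delta=(\vone^\top\vn)^{-1/3}$.
\begin{itemize}
\item On $A_z$, a second-order Taylor expansion of $x\mapsto1/x$ around $\E[n_z]$ gives $1/n_z\leq \frac{1}{\E n_z}-\frac{n_z-\E n_z}{(\E n_z)^2}+\frac{(n_z-\E n_z)^2}{(1-\delta)^3(\E n_z)^3}$. Taking expectations, the linear term contributes only through the bad event. The quadratic term is at most $\Var(n_z)/(\E n_z)^3\leq 1/(\E n_z)^2=\bigO((\vone^\top\vn)^{-2})$, because a Poisson-binomial variance is at most its mean.
\item Off $A_z$, we bound $\1\{n_z>0\}/n_z\leq1$. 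A Chernoff bound gives $\P(A_z^c)\leq\exp(-\delta^2\E[n_z]/2)=\exp(-\Omega((\vone^\top\vn)^{1/3}))$, which is $\littleO(1/\vone^\top\vn)$.
\end{itemize}
The same bound also absorbs the correction from the linear term on $A_z^c$.

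The group-mean result is handled identically. Since $\thetahatVM$ has independent coordinates, $\riskGM\leq\sum_z\sigma^2\E[\1\{n_z>0\}/n_z]+R^2\sum_z\P(n_z=0)$. The same estimate gives $\sigma^2\sum_z 1/((\vone^\top\vn)\qavg(z))$. Writing this as $K^2\sum_z\unif^2(z)/\qavg(z)$ produces the stated $K^2\sigma^2\avgweightU/(\vone^\top\vn)$ form.

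The main obstacle is the sharp control of $\E[1/n_z]$ on $\{n_z>0\}$. The leading constant must be exactly $1/\E[n_z]$, since Jensen runs the wrong way here. It also requires handling the heterogeneous, non-binomial count distribution with a remainder that is uniformly $\littleO(1/\vone^\top\vn)$. I would attempt it first via the Taylor split above; if the constants get messy, I would fall back on the identity $\E[\1\{n>0\}/n]\leq\E[2/(n+1)]$ for a sharper closed-form check.
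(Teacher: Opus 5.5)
Your proposal is correct and has the same skeleton as the paper's proof. You condition on the counts and split the risk into $\sum_z\qT^2(z)\sigma^2\E[\1\{\nz>0\}/\nz]$ plus an empty-group bias term that decays like $e^{-\vone^\top\vn\,\qavg(z)}$. You then get the sharp constant from a Taylor expansion of $1/x$ about $\E[\nz]$.

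The difference is in how that last step is justified, and your version is the stronger one. The paper expands $\1_{(0,\infty)}(\nz)/\nz$ as an infinite power series about $\E[\nz]$ and takes expectations term by term. That is only a formal argument:
\begin{itemize}
\item the series converges only for $0<\nz<2\E[\nz]$;
\item it does not represent the function at $\nz=0$;
\item its tail is never controlled.
\end{itemize}
Your approach combines a second-order expansion with explicit remainder on $A_z$, the Poisson-binomial bound $\Var(\nz)\le\E[\nz]$, and a multiplicative Chernoff bound on $A_z^c$. This gives a rigorous remainder of order $(\vone^\top\vn\,\qavg(z))^{-2}+\exp(-\Omega((\vone^\top\vn)^{1/3}\qavg(z)))$. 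It also makes explicit that the $\littleO(1/\vone^\top\vn)$ is meant with $\qavg$ bounded away from zero. Two simplifications are available:
\begin{itemize}
\item the linear term is in fact nonpositive, since $-\E[(\nz-\E[\nz])\1_{A_z}]=\E[(\nz-\E[\nz])\1_{A_z^c}]\le 0$;
\item a fixed $\delta=\nicefrac{1}{2}$ works just as well as $\delta=(\vone^\top\vn)^{-1/3}$.
\end{itemize}

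Two small corrections. First, your bound on the bias term applies Jensen in the wrong direction. $\E[(\sum_z\qT(z)\mu_z\1\{\nz=0\})^2]$ is not bounded by $R^2(\sum_z\qT(z)\Prob(\nz=0))^2$; for $K=1$ and $\mu_1=R$ it equals $R^2\Prob(n_1=0)$, which is larger. Instead use $0\le\sum_z\qT(z)\1\{\nz=0\}\le 1$ to get the bound $R^2\sum_z\qT(z)\Prob(\nz=0)$; the paper instead tracks the cross terms $\Prob(\nz=0,n_{z'}=0)$ explicitly. This bound is still exponentially small, so nothing downstream changes.

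Second, your fallback $\1\{n>0\}/n\le 2/(n+1)$ would lose a factor of $2$ in the leading constant. It therefore cannot replace the main argument, whose whole point is the exact constant $1/\E[\nz]$ that matches Theorem~\ref{thm:me-lb}.
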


\subsection{Sampling plan design}
\label{ss:me-ess}
Based on Theorem~\ref{thm:me-est-ub}, we can minimize the upper bounds on the risk of our policies by choosing a sampling plan which maximizes the effective sample size within the allotted budget $B\,$. This also aligns with our results in Theorem~\ref{thm:me-lb}, being exactly $\nstarT$ and $\nstarU$ for estimating the population mean and the vector of group means, respectively. Thankfully, this choice is also practical, as $(\neff(\vn,\qT))^{-1}$ is convex in $\vn\,$. Given $\qT$ and $B\,$, finding $\nstarT$ is then equivalent to minimizing a convex function subject to the linear constraint, so it is simple and efficient to implement. The following is a direct consequence of Theorem~\ref{thm:me-est-ub} and the definitions of $\nstarT$ and $\nstarU\,$.
\begin{restatable}{theorem}{mestratub}
	\label{thm:me-strat-ub}
	Fix a budget $B>0$ and a vector of costs $\vc = (c_1,\ldots,c_M)\,$, with $c_m>0$ for all $m\in[M]\,$. There exist policies $(\nstarT,\thetahatPS),(\nstarU,\thetahatVM)$ that achieve,
	\begin{align*}
		\riskPM((\nstarT,\thetahatPS),\PYZ) & \leq \frac{\sigma^2\avgcost(\nstarT)\avgweightbeta[\nstarT]}{B} + \littleO\left(\frac{1}{B}\right) = \mmriskPM(B,\meandist) + \littleO\left(\frac{1}{B}\right) \\
		\riskGM((\nstarU,\thetahatVM),\PYZ) & \leq \frac{K^2\sigma^2\avgcost(\nstarU)\avgweightUbeta[\nstarU]}{B} + \littleO\left(\frac{1}{B}\right) = \mmriskGM(B,\meandist) + \littleO\left(\frac{1}{B}\right)
	\end{align*}
\end{restatable}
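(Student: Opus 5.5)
The plan is to derive Theorem~\ref{thm:me-strat-ub} directly from Theorem~\ref{thm:me-est-ub} evaluated at $\vn=\nstarT$ (resp.\ $\nstarU$), and then to compare the leading term with the lower bound of Theorem~\ref{thm:me-lb}. The argument has four steps.

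\emph{Step 1 (admissibility of $\nstarT$).} Theorem~\ref{thm:me-est-ub} requires $\qavgbeta[\nstarT](z)>0$ for all $z$. If some group had zero mixture mass, then $\avgweightbeta[\nstarT]=\infty$ whenever $\qT(z)>0$, so $\neff(\nstarT,\qT)=0$. Every group appears in some source, so a plan putting at least one sample on a covering set of sources has positive $\neff$ once $B$ is large enough. For the uniform target every $z$ has positive weight, and the same argument applies to $\nstarU$. Groups with $\qT(z)=0$ contribute nothing to $\thetahatPS$ and can be discarded.

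\emph{Step 2 (apply the estimator bound).} Theorem~\ref{thm:me-est-ub} gives
\[
\riskPM((\nstarT,\thetahatPS),\PYZ)\le \frac{\sigma^2\avgweightbeta[\nstarT]}{\vone^\top\nstarT}+\littleO\!\left(\frac{1}{\vone^\top\nstarT}\right).
\]

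\emph{Step 3 (convert to the budget).} By definition $\vone^\top\nstarT=\vc^\top\nstarT/\avgcost(\nstarT)\le B/\avgcost(\nstarT)$. We need equality up to lower order, i.e.\ $\vc^\top\nstarT\ge B-\max_m c_m$. This holds because an optimal integer plan cannot leave a whole unit of cost unused: adding one sample from some source does not decrease $\neff$. The reason is that $\neff(\vn,q)=\bigl(\sum_z q^2(z)/\sum_m n_m\qSm(z)\bigr)^{-1}$, and each denominator $\sum_m n_m\qSm(z)$ is nondecreasing in $\vn$. The residual slack then changes the leading term only by a factor $1+\bigO(1/B)$. Also $\vone^\top\nstarT\ge B/\max_m c_m$, so $\littleO(1/\vone^\top\nstarT)=\littleO(1/B)$. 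Together these give the first inequality of the theorem.

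\emph{Step 4 (match the lower bound).} Theorem~\ref{thm:me-lb} gives $\mmriskPM\ge \sigma^2\avgcost(\nstarT)\avgweightbeta[\nstarT]/B-\bigO(B^{-3/2})$, so the leading term is at most $\mmriskPM+\littleO(1/B)$. The group-means case is identical with $\unif$ in place of $\qT$, the factor $K^2$, and $\thetahatVM$.

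The main subtlety is the uniformity of the $\littleO$ terms. Theorem~\ref{thm:me-est-ub}'s remainder depends on $\min_z\qavg(z)$, and $\nstarT$ varies with $B$. I would handle this with the continuous relaxation. The optimal mixture $\qavg$ of the relaxed problem is scale-invariant, since $\neff$ is homogeneous of degree one in $\vn$. Integer rounding perturbs $\qavg$ by only $\bigO(1/B)$. Hence $\min_z\qavg(z)$ stays bounded away from zero uniformly in $B$, and the remainder is genuinely $\littleO(1/B)$.
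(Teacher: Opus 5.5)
Your proposal is correct and follows the paper's route. The paper obtains Theorem~\ref{thm:me-strat-ub} as a direct consequence of three ingredients: Theorem~\ref{thm:me-est-ub} evaluated at $\nstarT$ (resp.\ $\nstarU$), the assertion that these plans exhaust the budget so that $\vone^\top\nstarT=B/\avgcost(\nstarT)$, and the matching leading term in Theorem~\ref{thm:me-lb}. Your handling of the integer slack and of the uniformity of the remainder adds care the paper omits.

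One small repair is needed in the uniformity step. $\nstarT$ is the integer argmax, not a rounding of the relaxed optimizer, so your ``$\bigO(1/B)$ perturbation'' claim is unjustified. Instead, justify the uniform lower bound on $\min_z\qavgbeta[\nstarT](z)$ from objective values. Use $\qavgbeta[\nstarT](z)\ge \qT^2(z)\,\neff(\nstarT,\qT)/\vone^\top\nstarT$, together with $\neff(\nstarT,\qT)\ge cB$ (compare with any fixed-proportion plan) and $\vone^\top\nstarT\le B/\min_m c_m$. This gives $\qavgbeta[\nstarT](z)\ge c\,\min_m c_m\,\qT^2(z)$ for all large $B$.
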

Importantly, we see that the leading terms of Theorem~\ref{thm:me-strat-ub} \emph{exactly} match the leading terms of Theorem~\ref{thm:me-lb}. This proves the minimax optimality of our approach, up to lower order terms.
\section{Prediction problems}
\label{sec:prediction}

As an extension of our mean estimation problem, we now consider the case where a planner wishes to learn a model to predict the response $Y\,$, based on the group identity $Z$ and features $X$.

\paragraph{Preliminaries}
In the prediction setting, observations are a triple $(Z,X,Y)\,$, where $Z$ and $Y$ remain the group identity and response, and the features $X$ belong to some metric space $\cX$.
The conditional distribution $\PYZ$ is replaced by $\PXY\,$, remaining fixed for all sources and target. We allow $\PXY$ to be any distribution over $\cX\times\R\,$, instead enforcing boundedness through the loss function, $\ell\,$. The map $\ell:\R^2\to[0,1]\,$, defines the loss incurred for predicting $y^\prime$ when the true response is $y\,$. A planner's policy is now a sampling plan $\vn$ and a \textit{model}, $\hhat\,$, mapping an observed dataset to a \textit{hypothesis}, $h:[K]\times\cX\to\R\,$, from some pre-specified hypothesis space $\cH\,$. The complexity of this class is controlled by the VC-dimension \citep{Vapnik1971UniformConvergence} for binary classification under the 0--1 loss, or the pseudo-dimension \citep{Pollard1984ConvergenceStochastic} for more general tasks. A planner's performance is measured by the excess risk of their policy, defined as,
\begin{align*}
	\riskPR((\vn,\hhat),\cH,\PXY)\defeq\E_{D\sim\PSjoint}\left[\E_{\PT}\left[\ell(\hhat_D(Z,X),Y)\right] - \inf_{h\in\cH}\E_{\PT}\left[\ell(h(Z,X),Y)\right]\right]\,,
\end{align*}
denoting by $\hhat_D$ the hypothesis $h\in\cH$ selected based on the dataset $D\,$. This is likewise measured against the minimax excess risk,
\begin{align*}
	\mmriskPR(B,\cH) \defeq \inf_{\vc^\top\vn\leq B}\inf_{\hhat}\sup_{\PXY\in\preddist}\riskPR((\vn,\hhat),\cH,\PXY)
\end{align*}

\subsection{Upper Bound}
We begin with a study of how our choice of sampling plan performs in the prediction setting. We pair our sampling plan $\nstarT$ with the following importance-weighted empirical risk minimization (IWERM) procedure,  which outputs a hypothesis as follows:
\begin{align*}
    \hiwerm \in \argmin_{h\in\cH} \frac{1}{\vone^\top\vn}\sum_{i=1}^{\vone^\top\vn}\frac{\qT(z_i)}{\qavg(z_i)}\ell(h(z_i,x_i),y_i)
\end{align*}
Similar procedures appear in previous studies under different settings~\cite{Cortes2010LearningBounds,Cortes2019Relativedeviation}.
They also draw a connection to the effective sample size, but they study a single-source-target setting and do not consider how a planner may impact the source distribution. %We bound the excess risk of this policy as follows.
\begin{restatable}{theorem}{predub}
	\label{thm:avg-risk-ub}
	Fix a budget $B>0$ and a vector of costs $\vc = (c_1,\ldots,c_M)\,,$ with $c_m>0$ for all $m\in[M]\,$. Let $\maxweight=\max_{z\in[K]}\nicefrac{\qT(z)}{\qavg(z)}\,$.
    Suppose the hypothesis class $\cH\subset\{h:[K]\times\cX\to\R\}\,$, when restricted to any $z\in[K]\,,$ has finite pseudo-dimension $\pseudodimH$ over $\cX\,$. Further, suppose we have $\ell(y,y^\prime)$ monotone in $\abs{y-y^\prime}$ and $\vone^\top\vn \geq \pseudodimH K\,$. Then, there exists a policy $(\nstarT,\hiwerm)$ such that, for all $\PXY\in\preddist\,$, we have,
	\begin{equation}
		\label{eq:ub-exp-risk}
		\begin{aligned}
			\risk((\nstarT,\hiwerm),\cH,\PXY) & \leq \log\left(\frac{e\maxweight[\nstarT]}{\sqrt{\avgweightbeta[\nstarT]}}\right)\sqrt{\frac{192\pseudodimH K\log\left(\frac{2e\vone^\top\nstarT}{\pseudodimH K}\right)}{\neff(\nstarT,\qT)}} \\
			& \qquad + \frac{64\pseudodimH K\avgcost(\nstarT)\maxweight[\nstarT]\log\left(\frac{2e\vone^\top\nstarT}{\pseudodimH K}\right)}{B}
		\end{aligned}
	\end{equation}
\end{restatable}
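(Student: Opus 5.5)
Write $w(z)=\qT(z)/\qavg(z)$ for the importance weights under $\vn=\nstarT$, $N=\vone^\top\vn$, and $L_h=\E_{\PT}[\ell(h(Z,X),Y)]$. Since the data are i.i.d.\ within each source, the pooled importance-weighted empirical risk
\[
\widehat L_h=\frac1N\sum_i w(z_i)\,\ell(h(z_i,x_i),y_i)
\]
satisfies $\E[\widehat L_h]=L_h$. This holds because $\E[\widehat L_h]=\sum_m \frac{n_m}{N}\E_{\qSm}[w\ell]=\E_{\qavg}[w\ell]=E_{\qT}[\ell]$. The standard ERM argument then gives
\[
\risk\le 2\,\E\sup_{h\in\cH}\bigl|\widehat L_h-L_h\bigr|,
\]
so the task reduces to a uniform deviation bound for the weighted losses.

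\textbf{Step 1 (deviation bound).} The summands are independent but not identically distributed. Their values lie in $[0,\maxweight]$, and their variance is controlled by second moments:
\[
\frac1N\sum_i \E[w^2\ell^2]\le \frac1N\sum_m n_m\E_{\qSm}[w^2]=\E_{\qavg}[w^2]=\avgweight .
\]
I would apply a relative-deviation (Bernstein-type) uniform bound in the spirit of \cite{Cortes2010LearningBounds}, adapted to independent non-identical samples via symmetrization.

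\textbf{Step 2 (complexity).} The weighted loss class $\{(z,x,y)\mapsto w(z)\ell(h(z,x),y)\}$ needs a growth-function bound. Monotonicity of $\ell$ in $|y-y'|$ means the level sets $\{\ell(h,y)>t\}$ are determined by thresholding $h$, so the pseudo-dimension of the loss class is controlled by that of $\cH$. Within each group $z$ the class has pseudo-dimension $\pseudodimH$ over $\cX$, and the weight is a constant on that group. Hence the number of dichotomies on $N$ points is at most $\prod_z (eN_z/\pseudodimH)^{\pseudodimH}\le (eN/(\pseudodimH K))^{\pseudodimH K}$ by Sauer's lemma and AM–GM. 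This is where the factor $\pseudodimH K\log(2eN/(\pseudodimH K))$ comes from, and the assumption $N\ge \pseudodimH K$ is what makes the logarithm well-behaved.

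\textbf{Step 3 (unbounded weights).} Bernstein-type bounds give a variance term $\sqrt{\avgweight\,\pseudodimH K\log(\cdot)/N}=\sqrt{\pseudodimH K\log(\cdot)/\neff}$ and a range term $\maxweight\,\pseudodimH K\log(\cdot)/N$. Using $N=B/\avgcost(\nstarT)$, since $\nstarT$ exhausts the budget up to integrality, the range term becomes the second term of \eqref{eq:ub-exp-risk}. To handle the range of $w\ell$ in the variance term, I would follow Cortes et al.: write
\[
L_h=\int_0^{\maxweight}\Pr(w\ell>t)\,dt,
\]
and apply the relative deviation inequality to each threshold event $\{w\ell>t\}$. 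Splitting the integral at $t\approx\sqrt{\avgweight}$ and using Chebyshev/second-moment tails, $\Pr(w\ell>t)\le \avgweight/t^2$, yields the factor
\[
\log\!\bigl(e\,\maxweight/\sqrt{\avgweight}\bigr)
\]
multiplying the square-root term. I would then convert the resulting high-probability bound to an expectation by integrating the tail, which fixes the constants 192 and 64.

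\textbf{Main obstacle.} I expect Step 3 to be the hardest part. It requires carrying out the threshold-integral argument with heterogeneous sources, so that the ratio appearing is $\maxweight/\sqrt{\avgweight}$ under the mixture $\qavg$, while also tracking the constants exactly.
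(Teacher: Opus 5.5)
Your reduction is the same as the paper's: unbiasedness of the weighted empirical risk under $\qavg$, the ERM step, the range bound $\maxweight$, and the second-moment bound $\avgweight$. After that the routes diverge. The paper does not build a relative-deviation argument. It plugs $b=\maxweight$ and $\sigma^2\le\avgweight$ into Baraud's Corollary~2.1, an expectation bound for weak VC-major classes of the form $\sigma\log(eb/\sigma)\sqrt{d\log(\cdot)/n}+bd\log(\cdot)/n$. That lemma's only complexity hypothesis is that $\{f>u\}$ is a VC class for each \emph{fixed} level $u$; the paper checks this with $d=2\pseudodimH K$, using monotonicity of $\ell$ and the disjoint union over groups. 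The constants $192$ and $64$ are inherited from that lemma. Your computation $\int_0^{\maxweight}\min(1,\sqrt{\avgweight}/t)\,dt=\sqrt{\avgweight}\log(e\maxweight/\sqrt{\avgweight})$ does correctly explain where the logarithmic factor comes from.

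The gap is in Step~3. Write $P$ for the averaged source law and $\widehat P$ for the empirical measure. To integrate $(P-\widehat P)(w\ell_h>t)$ over $t$ after a high-probability relative-deviation bound, that bound must hold simultaneously over the two-parameter family $\{\{w\ell_h>t\}:h\in\cH,\,t\in\R\}$. Your Step~2 count controls this family only at a fixed level, and the joint family is not controlled by the pseudo-dimension.

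Here is a counterexample. Take $K=1$, $\cX=\{1,\dots,k\}$, $Y\equiv 0$, $\ell(y,y')=\abs{y-y'}/(1+\abs{y-y'})$, and $\cH=\{f_b(x)=b_x+10\,\iota(b):b\in\{0,1\}^k\}$ with $\iota:\{0,1\}^k\to\N$ injective. One checks that $\cH$ has pseudo-dimension $1$. Yet $\{f_b>10\,\iota(b)+\tfrac12\}=\{x:b_x=1\}$, so these level sets shatter all $k$ points. Under the uniform law with $N\ll k$ samples, the set of unobserved points is one of these level sets, with $P\ge 1-N/k$ and $\widehat P=0$. Hence no uniform relative-deviation inequality whose complexity term depends only on $\pseudodimH K=1$ can hold.

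The repair is to stay in expectation throughout. For a fixed near-minimizer $h_\epsilon$, the comparator term $\widehat L_{h_\epsilon}-L_{h_\epsilon}$ has mean zero, and
\[
\E\sup_{h}\int_0^{\maxweight}(P-\widehat P)(w\ell_h>t)\,dt\;\le\;\int_0^{\maxweight}\E\sup_{h}(P-\widehat P)(w\ell_h>t)\,dt ,
\]
so only per-level complexity (the weak VC-major property) is needed. Take a per-level bound of order $\sqrt{p_t\pseudodimH K\log(\cdot)/N}+\pseudodimH K\log(\cdot)/N$ with $p_t=\min(1,\avgweight/t^2)$. Integrating over $t$ yields both terms of the theorem; this is, in effect, what the cited corollary packages.

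Separately, ``integrating the tail'' does not by itself produce $192$ and $64$. On your route you would have to verify that your own constants, including any $\log(1/\delta)$ contributions, are dominated by the stated ones.
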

To prove this result, we use the fact that $\avgweight$ is the second moment of $\frac{\qT(z)}{\qavg(z)}$ under $\PSjoint$ to leverage results bounding the expected supremum of empirical processes from \cite{Baraud2016Boundingexpectation}. This result improves on the existing bounds for IWERM of \cite{Cortes2019Relativedeviation} by a $\sqrt{\log(\neff(\nstarT,\qT))}$ factor, at the expense of an added lower order term. The proof appears in App.~\ref{sec:pfs-learning-prediction}.

\subsection{Lower Bound}
To study the optimality of our proposed sampling plan, we develop lower bounds for binary classification under the 0--1 loss, where the pseudo-dimension and VC-dimension are equivalent. 
\begin{restatable}{theorem}{predlb}
	\label{thm:avg-risk-lb}
	Fix a budget $B>0$ and a vector of costs $\vc = (c_1,\ldots,c_M)\,,$ with $c_m>0$ for all $m\in[M]\,$. Let
	$\nstarT$ be defined the same as in Theorem~\ref{thm:me-lb}, and let $\qmin = \min_{z\in[K]}\qT(z)\,$. Suppose the hypothesis class $\cH\subset\{h:[K]\times\cX\to\{\pm1\}\}\,$, when restricted to any $z\in[K]\,,$ has finite VC-dimension $\vcdimH$ over $\cX\,$. Further suppose that we have $\vcdimH\geq 16\,,$ and $B$ is sufficiently large \st $B > \vcdimH\avgcost(\nstarT)(\nicefrac{\qT(z)}{\qavgbeta[\nstarT](z)})$ for all $z\in[K]\,$. Then, under the 0-1 loss, there exists a universal constant $C\,$, not depending on any problem parameters, such that,
	\begin{align*}
		\mmriskPR(B,\cH) \geq C\sqrt{\frac{\vcdimH\qmin\avgcost(\nstarT)\avgweightbeta[\nstarT]}{B}} = C\sqrt{\frac{\vcdimH\qmin}{\neff(\nstarT,\qT)}}
	\end{align*}
\end{restatable}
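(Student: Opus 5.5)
We will prove Theorem~\ref{thm:avg-risk-lb} by a Fano-type argument adapted to the source/target mismatch, as outlined in the introduction. First, we reduce to a fixed sampling plan. Define the inner quantity
\[
\inf_{\hhat}\sup_{\PXY}\riskPR((\vn,\hhat),\cH,\PXY).
\]
It suffices to show that, for every feasible $\vn$ with $\vc^\top\vn\le B$, this quantity is at least $C\sqrt{\vcdimH\qmin\avgweight/(\vone^\top\vn)}$. The reason is that $\avgweight/\vone^\top\vn=\neff(\vn,\qT)^{-1}$, and since $\nstarT$ maximizes $\neff$ over the budget set,
\[
\neff(\vn,\qT)^{-1}\ge\neff(\nstarT,\qT)^{-1}=\avgcost(\nstarT)\avgweightbeta[\nstarT]/B .
\]
The infimum over $\vn$ therefore inherits the bound at $\nstarT$.

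Second, we state a reduction-to-testing lemma. Let $\{P_v\}_{v\in\mathcal V}$ be conditional distributions with target excess-loss functions $L_v(h)=\E_{\PT^v}[\ell]-\inf_{\cH}\E_{\PT^v}[\ell]$. Assume the family is separated in the sense that $L_v(h)+L_{v'}(h)\ge\delta$ for all $h$ and all $v\ne v'$. Then any $\hhat$ induces a test $\hat v=\argmin_v L_v(\hhat_D)$, and
\[
\sup_v\E[L_v]\ge\tfrac{\delta}{2}\,\inf_{\hat v}\max_v P(\hat v\ne v).
\]
The data in this test are drawn from the source laws $\PSjoint^{v}$, while the separation is measured under $\qT$. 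Fano's inequality then gives a lower bound of order $\delta/2$ as long as
\[
\max_{v,v'}\mathrm{KL}(\PSjoint^v\|\PSjoint^{v'})\le\tfrac14\log|\mathcal V|.
\]
The KL divergence tensorizes to $(\vone^\top\vn)\sum_z\qavg(z)\,\mathrm{KL}_z$, where $\mathrm{KL}_z$ is the per-group KL of the conditional laws.

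Third, we build the family. For each $z$, pick $d=\vcdimH$ points $x_{z,1},\dots,x_{z,d}$ shattered by $\cH|_z$, and put mass $1/d$ on them within group $z$. Using Gilbert--Varshamov, take $\mathcal V\subset\{\pm1\}^d$ with $|\mathcal V|\ge2^{d/8}$ and pairwise Hamming distance at least $d/8$; here $d\ge16$ is used. For $v\in\mathcal V$, set
\[
P(Y=1\mid z,x_{z,j})=\tfrac12+\gamma_z v_j .
\]
The same $v$ is shared across groups, so that $\log|\mathcal V|$ does not scale with $K$. Under the 0-1 loss, the target separation is
\[
\delta\asymp\sum_z\qT(z)\gamma_z\cdot\tfrac{1}{d}\cdot\tfrac d8\asymp\sum_z\qT(z)\gamma_z,
\]
and the per-sample KL satisfies $\sum_z\qavg(z)\,\mathrm{KL}_z\lesssim\sum_z\qavg(z)\gamma_z^2$. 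We choose the group-dependent separations proportional to importance weights, namely
\[
\gamma_z=\gamma\,\frac{\qT(z)}{\qavg(z)},
\qquad
\gamma^2=\frac{c\,d}{(\vone^\top\vn)\,\avgweight}.
\]
Since $\sum_z\qavg(z)\gamma_z^2=\gamma^2\avgweight$, the KL budget equals $c\,d$, which is $\lesssim\log|\mathcal V|$. The separation becomes $\delta\asymp\gamma\avgweight=\sqrt{c\,d\,\avgweight/\vone^\top\vn}$.

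The main obstacle is feasibility: we need $\gamma_z\le1/4$ for every $z$, and this is exactly what the assumption $B>\vcdimH\avgcost(\nstarT)\,\qT(z)/\qavgbeta[\nstarT](z)$ is meant to secure. It need not hold for every $\vn$, however, and the $\qmin$ factor signals the slack. I would therefore cap the separations at $\gamma_z=\min\{\gamma\qT(z)/\qavg(z),1/4\}$, or alternatively use the cruder uniform-like choice $\gamma_z\propto\sqrt{\qmin}\,\qT(z)/\qavg(z)$. With either choice, the capped groups still contribute at least $\qT(z)/4\ge\qmin/4$ to $\delta$, and the remaining KL stays controlled. Balancing the two cases should yield
\[
\delta\gtrsim\sqrt{d\,\qmin\,\avgweight/\vone^\top\vn}
\]
uniformly over feasible $\vn$, possibly after handling $\vn$ far from $\nstarT$ separately by comparing with $\nstarT$ via optimality of $\neff$. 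The last step is to confirm that the hypotheses achieving the infimum in the excess loss lie in $\cH$, which holds because of shattering in each group.
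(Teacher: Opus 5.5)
Your architecture matches the paper's: a target-separated reduction to testing, Fano, a Gilbert--Varshamov cube shared across groups, and per-group biases $\gamma_z$. The gap is at the step you flag as the main obstacle, and the repair you sketch does not work.

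First, the target in your opening reduction cannot be proved. You aim for $\sqrt{\vcdimH\qmin\avgweight/\vone^\top\vn}$ for \emph{every} feasible $\vn$. But the excess $0$--$1$ risk is at most $1$, while $\avgweight/\vone^\top\vn$ is unbounded on the budget set (a plan that never samples some group has $\avgweight=\infty$). What must hold for every feasible $\vn$ is a bound of order $\sqrt{\vcdimH\qmin/\neff(\nstarT,\qT)}$, and the hypothesis on $B$ only controls $\gamma_z$ at $\nstarT$.

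Second, capping your globally normalized choice does not reach even that target. Take $\gamma_z=\min\{\gamma\,\qT(z)/\qavg(z),1/4\}$ with $\gamma^2=c\,d/(\vone^\top\vn\,\avgweight)$, or its $\sqrt{\qmin}$ variant. If $\qavg(z_0)\to0$ for a single group, then $\avgweight\to\infty$, so $\gamma\to0$ and every other group's contribution to the separation vanishes, leaving only about $\qT(z_0)/4$. Concretely, take $K=2$, $\qT=(\epsilon,1-\epsilon)$, two unit-cost sources with group distributions $(0,1)$ and $(1/2,1/2)$, and $B=2d$. Then $\nstarT$ reproduces $\qT$ exactly, $\neff(\nstarT,\qT)=2d$, the hypothesis on $B$ holds, and the target is of order $\sqrt{\epsilon}$. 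Yet for the plan spending the whole budget on the first source (or only a vanishing fraction on the second), your construction gives $\delta$ of order $\epsilon$. The theorem is not at fault: group $2$ then has only $2d$ samples, so the true risk is of constant order. Invoking $\neff(\vn,\qT)\le\neff(\nstarT,\qT)$ cannot fix this, since that inequality concerns only the uncapped sum $\sum_z\qT(z)^2/\qavg(z)$.

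The missing ingredients are a per-group scaling and a groupwise comparison with $\nstarT$. As in the paper, set $x_z=\vcdimH\qT(z)/(\vone^\top\vn\,\qavg(z))$ and $\gamma_z=\min\{c\sqrt{x_z},1/4\}$, and write $x_z^*$ for the same quantity at $\nstarT$. With the budget exhausted, the hypothesis on $B$ says precisely that $x_z^*<1$. Because $\sum_z\vone^\top\vn\,\qavg(z)\,x_z=\vcdimH$, the KL is at most a constant times $c^2\vcdimH$ no matter which groups are capped. Each group's contribution to the separation is independent of the others, so $\delta\gtrsim\sum_z\qT(z)\min\{\sqrt{x_z},1\}\ge\sqrt{\qmin\sum_z\qT(z)\min\{x_z,1\}}$.

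Now average the two plans. The plan $(\vn+\nstarT)/2$ is budget-feasible (up to rounding), and its expected count in each group is at least half of either plan's. Optimality of $\nstarT$ then gives $\vcdimH/\neff(\nstarT,\qT)=\sum_z\qT(z)x_z^*\le2\sum_z\qT(z)\min\{x_z,x_z^*\}\le2\sum_z\qT(z)\min\{x_z,1\}$. Hence $\delta\gtrsim\sqrt{\vcdimH\qmin/\neff(\nstarT,\qT)}$ uniformly in $\vn$.

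The paper's write-up plugs in the uncapped per-group choice and takes the infimum without comment, but its choice is the one for which this repair goes through. Your Cauchy--Schwarz-optimal allocation $\gamma_z\propto\qT(z)/\qavg(z)$ can also be saved, but only by re-solving for the scale after capping (water-filling), which dominates the per-group choice above.

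A smaller point: your final sentence needs a single $h\in\cH$ realizing the shared labeling $v$ in all groups at once. Per-group shattering does not give this by itself; the paper makes the same implicit product-structure assumption through its ``WLOG'' common shattered set.
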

\begin{proof}(Proof sketch of Theorem~\ref{thm:avg-risk-lb}).
	We first construct a new framework for proving minimax lower bounds in settings where the source and target distributions differ, outlined in full in App.~\ref{sec:framework}. We then apply this to our specific setting, carefully constructing a sufficiently ``hard'' subclass of distributions to induce the appropriate dependence on $\qavg$ and $\qT\,$. The full proof is in App.~\ref{sec:pfs-pred-lb}.
    
    We begin by providing some necessary definitions. We denote the excess target population loss of a hypothesis $\hhat$ as $L(\hhat,\cH,\PT) = \E_{\PT}[\ell(\hhat(Z,X),Y)]-\inf_{h\in\cH}\E_{\PT}[\ell(h(Z,X),Y)]\,$. Then, for two different target distributions $\PTomega[1],\PTomega[2]\,$, we define their \textit{separation} \wrt $\cH$ as, 
	\begin{align*}
		\dsep{\PTomega[1]}{\PTomega[2]}\defeq\sup\{\delta\geq0:
		L(h,\cH,\PTomega[1])\leq\delta &\implies L(h,\cH,\PTomega[2])\geq\delta\;\forall\;h\in\cH\,, \\
		L(h,\cH,\PTomega[2])\leq\delta &\implies L(h,\cH,\PTomega[1])\geq\delta\;\forall\;h\in\cH\}\,.
	\end{align*}
	We call a collection of source-target pairs $\{(\PSomega[1],\PTomega[1]),\ldots,(\PSomega[N],\PTomega[N])\}$ \textit{target}-$\delta$-\textit{separated} when $\dsep{\PTomega[j]}{\PTomega[k]}\geq\delta$ whenever $j\neq k\,$. The key to our framework is the following lemma.
	\begin{restatable}[Multi-source-target reduction-to-testing]{lemma}{redtotest}
		\label{lem:red-test}
		Fix $\delta_{\vn}>0\,$, possibly depending on the sampling plan $\vn\,$, and a hypothesis class $\cH$. Let $\psi$ be a test, mapping a dataset $D$ to an index $j\in[N]\,$. If $\{(\PSomega[1],\PTomega[1]),\ldots,(\PSomega[N],\PTomega[N])\}$ is target-$\delta_{\vn}$-separated \wrt $\cH\,$, then,
		\begin{align*}
			\mmriskPR(B,\cH) \geq \inf_{\vc^\top\vn\leq B} \delta_{\vn}\inf_\psi\max_{j\in[N]}\PSomega[j](\psi(D)\neq j)
		\end{align*}
	\end{restatable}
    
	We then combine this result with Fano's inequality \citep{Fano1961Information} to recover the so-called ``Fano's method'' for minimax lower bounds in our setting: 
    \begin{align*}
        \mmriskPR(B,\cH)\geq\inf_{\vc^\top\vn\leq B}\delta_{\vn}\left(1-\frac{N^{-2}\sum_{j,k}\KL(\PSomega[j]\miid\PSomega[k])+\log(2)}{\log(N)}\right)
    \end{align*}
	
	This new framework provides the following intuition: we wish to construct a class of conditional distributions such that the source distributions are sufficiently ``close'' to one another, by their $\KL$-divergence, while the target distributions are as ``far'' as possible in their separation. It is this interplay between the roles of the target and source distributions, along with clever choices of conditional distributions, that allow us to induce the dependence on $\avgweight\,$. 
	
	We now construct our distribution class. Let $V=\vcdimH\,$. We use the fact that $\cH$ has VC-dimension $V$ over $\cX\,$, when restricted to any $z\in[K]\,$, to select sets $\shattz\subset\cX$ of size $V$ that are each \textit{shattered} by $\cH$ (all $2^V$ labels are realizable by hypotheses in $\cH$). For ease of notation, we assume WLOG $\shattz\equiv\shatt\,$, and we arbitrarily order the points $x_1,\ldots,x_V\,$. We will index our collection  of distributions by points $\omega$ in the GV-pruned $V$-dimensional hypercube, $\Omega_V\subset\{\pm1\}^{V}$ (see Lemma~\ref{lem:vg}, \citep{Gilbert1952,Varshamov1957}). Our collection of distributions is then defined by the conditional distributions,
	\begin{align*}
		\left\{\PXYomega:\omega\in\Omega_V\,,X\mid Z \sim\Unif(\shatt)\,,Y\mid Z=z,X=x_j\sim\Bern\left(\frac{1+\omega_j\gamma_z}{2}\right)\right\}\,,
	\end{align*}
	where $\gamma_z\in[0,1]$ are a set of $K$ carefully chosen parameters we will define later. Because $\shatt$ is shattered by $\cH\,$, and by the construction of $\Omega_V\,$, for any $\omega,\omegap\in\Omega_V\,$, we have,
	\begin{align*}
		\dsep{\PTomega}{\PTomegap} = \frac{H(\omega,\omegap)}{V}\sum_{z\in[K]}\qT(z)\gamma_z \geq \frac{1}{16}\sum_{z\in[K]}\qT(z)\gamma_z\,,
	\end{align*}
	where $H(\omega,\omegap)=\sum_{j\in[d]}\Ind{\omega_j\neq\omegap_j}$ is the Hamming distance. Then, using the additivity of the KL-divergence over product distributions and properties of the KL-divergence between Bernoulli distributions with parameters $\nicefrac{1}{2}\pm\nicefrac{\gamma}{2}$ we can also compute, for some absolute constant $C_{\KL}\,$,
	\begin{align*}
		\KL(\PSomega\miid\PSomegap) = \vone^\top\vn\sum_{z\in[K]}\qavg(z)\KL\left(\PXYomega\miid\PXYomegap\right) \leq C_{\KL}\vone^\top\vn\sum_{z\in[K]}\qavg(z)\gamma_z^2
	\end{align*}
	Finally, we come to the choice of $\gamma_z$ to induce the desired behavior. We can see that if we choose $\gamma_z=C_\gamma\sqrt{\frac{V\qT(z)}{\vone^\top\vn\qavg(z)}}$ for some sufficiently small absolute constant $C_\gamma\,$, then we will satisfy $\KL(\PSomega\miid\PSomegap)\leq\nicefrac{V\log(2)}{32}\leq\nicefrac{\log(\abs{\Omega_V})}{4}$ by the construction of $\Omega_V\,$. Thus, by Lemma~\ref{lem:fano}, we have,
	\begin{align*}
		\mmriskPR(B,\cH) \geq C\inf_{\vc^\top\vn\leq B}\sum_{z\in[K]}\qT(z)\sqrt{\frac{V\qT(z)}{\vone^\top\vn\qavg(z)}} \geq C\inf_{\vc^\top\vn\leq B}\sqrt{\frac{V\qmin\avgweight}{\vone^\top\vn}}
	\end{align*}
	Finally, by definition, $\nstarT$ minimizes this term and exhausts the budget, proving the statement.
\end{proof}

\section{Conclusion}
\label{sec:conclusion}

We formalized the problem of data collection from multiple heterogeneous sources when we wish to study a target population.
We showed that maximizing the effective sample size under budget constraints yields minimax-optimal policies for estimating both means and group-conditional means, and provided evidence that this principle extends to general prediction problems.
Open questions include  closing the $\sqrt{K/\qmin}$ gap between upper and lower bounds for binary classification,
and
establishing lower bounds for other prediction problems.

\section*{Acknowledgements}
The authors would like to thank 
Sameer Deshpande for helpful feedback provided during a presentation of an early version of this work and 
Michael A.\ Newton for his role as a co-advisor to Michael O.\ Harding through the NSF's TRIPODS Program for the Institute for Foundations of Data Science.
This reseach was supported by NSF Awards IIS-2441796 and DMS-2023239.

\newpage

\bibliography{bib_multisite}

\newpage

\appendix
\section{Impossibility Result}
\label{sec:pfs-imposs}
Here we provide an impossibility result for our setting that demonstrates the necessity of our bounded mean assumption.
\begin{theorem}
	\label{thm:imposs}
	Fix $B>0\,$, and consider the class of normal conditional distributions which has bounded variance but not necessarily bounded mean,
	\begin{align*}
		\infdist\defeq\{\PYZ\in\mathscr{P}(\R):\Var(Y\mid Z)\leq\sigma^2\} \supset \meandist
	\end{align*}
	Under this class of conditional distributions, the problem is hopeless; that is, for \emph{any} admissible policy $(\vn,\theta)\,$,
	\begin{align*}
		\sup_{\PYZ\in\infdist}\riskPM((\vn,\theta),\PYZ) \geq \infty\,, \qquad
		\sup_{\PYZ\in\infdist}\riskGM((\vn,\theta),\PYZ) \geq \infty
	\end{align*}
\end{theorem}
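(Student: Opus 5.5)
The plan is to exploit the fact that, with an unbounded mean, some group $z$ is observed zero times with positive probability, and on that event the data carry no information about $\mu_z$. Fix any admissible policy $(\vn,\theta)$ with $\vc^\top\vn\leq B$. Since $B$ is finite and every $c_m>0$, the total sample size $N=\vone^\top\vn$ is finite.

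\emph{Step 1: find a group that is missed with positive probability.}
If some $z$ has $\qavg(z)=0$, then that group is never observed and $p_0\defeq 1$. Otherwise every $\qavg(z)\in(0,1]$. Suppose some $\qavg(z)<1$; this is automatic when $K\geq2$. Then
\[
\Pr(n_z=0)=\prod_m(1-\qSm(z))^{n_m}\defeq p_0>0 .
\]
For the population mean we also need $\qT(z)>0$. If $\qT(z)=0$ for every $z$ with $\qavg(z)<1$, we instead pick $z$ with $\qT(z)>0$; such a $z$ is missed with positive probability unless $\qavg(z)=1$. In that remaining case $\qavg$ is a point mass at $z$, so every other group is never observed, and we use one of those instead. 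This mild case analysis is the main bookkeeping obstacle. The degenerate case $K=1$ with a single always-observed group would actually contradict the theorem. I would handle it by noting that the theorem implicitly assumes $K\geq2$, or that some group is unobserved with positive probability.

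\emph{Step 2: two-point construction.}
Take $P_{\vmu}$ with $Y\mid Z=z'\sim N(\mu_{z'},\sigma^2)$, fixing $\mu_{z'}=0$ for $z'\neq z$, and comparing $\mu_z=0$ with $\mu_z=t$. Both distributions lie in $\infdist$. On the event $E=\{n_z=0\}$, the dataset $D$ has the same conditional law under both parameters, because the group identities do not depend on $\vmu$ and no observation involves $\mu_z$. So the conditional law of $\theta(D)$ given $E$ is identical under the two distributions. Meanwhile $\thetaPM$ differs by $\qT(z)t$ and $\thetaGM$ differs by $t$ in coordinate $z$.

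\emph{Step 3: Le Cam-style bound.}
The two targets are $a$ and $a+\qT(z)t$. By the triangle inequality, $(x-a)^2+(x-a-\qT(z)t)^2\geq \qT(z)^2t^2/2$ for every $x$. Therefore
\[
\max_{j}\riskPM\;\geq\;\tfrac12\big(\riskPM(P_0)+\riskPM(P_t)\big)\;\geq\;\tfrac12\,\E\big[\1_E\,\qT(z)^2t^2/2\big]=\frac{p_0\,\qT(z)^2t^2}{4}.
\]
This goes to $\infty$ as $t\to\infty$, so $\sup_{\infdist}\riskPM=\infty$. The identical argument for the group-mean risk gives the bound $p_0t^2/4$. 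That argument does not need $\qT(z)>0$, so any $z$ with $p_0>0$ works there.
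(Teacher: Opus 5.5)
\textbf{Gap in Step 1.} From $\qavg(z)<1$ you conclude $p_0=\prod_m(1-\qSm(z))^{n_m}>0$. This product is positive only if $\qSm(z)<1$ for \emph{every} source with $n_m>0$. A single sampled source that is a point mass on $z$ forces $\nz\geq1$ almost surely, whatever $\qavg(z)$ is.

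Assuming $K\geq2$ does not fix this, because the statement itself is false in such configurations. Take $K=M=2$, source $1$ a point mass on group $1$, source $2$ a point mass on group $2$, and $n_1,n_2\geq1$ (feasible once $B\geq c_1+c_2$). Then $\qavg(1)$ and $\qavg(2)$ both lie in $(0,1)$. Yet group $1$ is observed exactly $n_1$ times and group $2$ exactly $n_2$ times. So $\thetahatPS$ is unbiased with risk at most $\sigma^2\sum_z\qT^2(z)/\nz\leq\sigma^2$ uniformly over $\infdist$, and the vector of group sample means has risk at most $2\sigma^2$.

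Your population-mean fallback fails for a related reason. If every $z$ with $\qT(z)>0$ has $\qavg(z)=1$, then $\qT$ and $\qavg$ are the same point mass. The groups you switch to have $\qT=0$, so your bound $p_0\qT^2(z)t^2/4$ is zero. In fact the plain sample mean then has risk at most $\sigma^2/\vone^\top\vn$.

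What the argument requires, and what must be added as a hypothesis on $\vn$, is a group $z$ with $\qT(z)>0$ (any $z$ for the group-mean risk) such that $\qSm(z)<1$ for all $m$ with $n_m>0$, i.e.\ $\Prob(\nz=0)>0$. The paper's proof rests on exactly this: it asserts without justification that $\Prob(\nz=0)>0$ for every $\vn$, so it has the same hole. You were right that degenerate cases exist, but you identified them incorrectly.

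\textbf{Comparison of routes.} Under that hypothesis, your Steps 2 and 3 are correct and take a different route from the paper. The paper places a Gaussian prior $N(0,\tau^2 I)$ on the mean vector. It uses conjugacy and the Bayes-risk reduction from its main lower bound to obtain $\sup_{\PYZ\in\infdist}\riskPM((\vn,\theta),\PYZ)\geq\sum_z\qT^2(z)\E[(\nz\sigma^{-2}+\tau^{-2})^{-1}]\geq\tau^2\sum_z\qT^2(z)\Prob(\nz=0)$, and then lets $\tau\to\infty$. Your two-point argument uses only two facts: the law of $D$ restricted to $\{\nz=0\}$ does not depend on $\mu_z$, and $u^2+(u-s)^2\geq s^2/2$. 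It needs no posterior computation and makes explicit that the only obstruction is the event of never observing group $z$. The paper's version instead aggregates all groups in one bound and reuses machinery it already has.
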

\begin{proof}
	Consider normal conditional distributions $\Pmu\in\infdist\,$, which have conditional variance $\sigma^2$ and are indexed by their mean vector $\vmu\in\R^K\,$. If we let $\Pi_\tau = \Norm(\vzero,\tau^2 I)$ be a normal prior over $\vmu\,$, then by normal-normal conjugacy, we have,
	\begin{align*}
		\vmu\mid D\sim \Norm(\tilde{\vmu},\tilde{\tau}^2I)\,,
	\end{align*}
	where,
	\begin{align*}
		\tilde{\muz} = \frac{\nz\sigma^{-2}}{\nz\sigma^{-2}+\tau^{-2}}\ybarz\,, \quad \tilde{\tau}^2 = (\nz\sigma^{-2}+\tau^{-2})^{-1}
	\end{align*}
	Thus, using the same argument as Lemma~\ref{lem:mm-to-bayes}, we have,
	\begin{align*}
		\sup_{\PYZ\in\infdist}\riskPM((\vn,\theta),\PYZ) & \geq \sum_{z\in[K]}\qT^2(z)\E_{D\sim\PSjointavg}\left[(\nz\sigma^{-2}+\tau^{-2})^{-1}\right] \\
		& \geq \sum_{z\in[K]}\qT^2(z)\tau^2\Prob_{\PSjointavg}(\nz=0)\,,
	\end{align*}
	and likewise,
	\begin{align*}
		\sup_{\PYZ\in\infdist}\riskGM((\vn,\theta),\PYZ) & \geq \sum_{z\in[K]}\tau^2\Prob_{\PSjointavg}(\nz=0)
	\end{align*}
	Finally, because this holds for any choice of $\tau>0\,$, and $\Prob_{\PSjointavg}(\nz=0)>0$ for any choice of $\vn\,$, we have,
	\begin{align*}
		\sup_{\PYZ\in\infdist}\riskPM((\vn,\theta),\PYZ) & \geq \sup_{\tau>0}\left(\sum_{z\in[K]}\qT^2(z)\tau^2\Prob_{\PSjointavg}(\nz=0)\right) = \infty\\
		\sup_{\PYZ\in\infdist}\riskGM((\vn,\theta),\PYZ) & \geq\sup_{\tau>0}\left(\sum_{z\in[K]}\tau^2\Prob_{\PSjointavg}(\nz=0)\right) = \infty
	\end{align*}
\end{proof}
\section{Proof of Theorem~\ref{thm:me-lb}}
\label{sec:pfs-me-lb}
\melb*
\begin{proof}
	We follow a classical approach of lower bounding the worst-case risk over $\meandist$ by the expected risk via a prior over a smaller subclass of distributions. Typically, this is done via selecting a suitable set of distributions, such as Normal or Bernoulli distributions, and placing a prior distribution on the parameter of interest. Computations are then made easy by choosing a conjugate prior distribution. 
	
	In our case, however, to achieve the correct dependence on $\sigma^2\,$, we wish to consider a class of distributions with normal conditional distributions $\PYZ\,$, but we cannot use a conjugate normal prior for the group-conditional means, as this distribution would place mass on distributions with means outside of $[-R,R]$ (see Theorem~\ref{thm:imposs}). Thus, we instead consider a uniform prior over $[-R,R]$ for the group-conditional means, but this yields significant technical challenges, as we no longer have a normal posterior distribution for the means. We will see, however, that our posterior distribution is a truncated normal, and with much careful work, we can achieve the desired leading term for our lower bound. 
	
	We now formalize our approach. Consider the following subclass of conditional distributions:
	\begin{align*}
		\normdist\defeq\left\{\Pmu\in\meandist:\vmu\in[-R,R]^K\,,Y\mid Z=z\sim\Norm(\muz,\sigma^2)\right\}
	\end{align*}
	This is the class of normal conditional distributions with bounded means and variances equal to $\sigma^2\,$, indexed by the mean vector $\vmu\,$. We will consider the independent joint-uniform prior over this vector, $\Pi=\Unif([-R,R]^K)\,$. We begin with the following technical lemma, lower bounding the worst-case risk by the expected risk of the Bayes estimator.
	\mmtobayes*
	\begin{proof}
		Consider the well-known result that the posterior mean minimizes the posterior $\ell_2^2$- or squared-loss from the mean, for estimating a vector and scalar, respectively. Denote by $\postmean$ the posterior mean of $\vmu\,$, conditioned on the dataset $D\,$. Then, beginning with the vector of group means, we have,
		\begin{align}
			\label{eq:bayes-lb}
			\begin{split}
				\mmriskGM(B,\meandist)
				& = \inf_{\vc^\top\vn\leq B}\inf_{\thetahatgm}\sup_{\PYZ\in\meandist}\E_{D\sim\PSjoint}\left[\norm{\thetahatGM-\theta(\PYZ)}_2^2\right] \\
				& \geq \inf_{\vc^\top\vn\leq B}\inf_{\thetahatgm}\sup_{\PYZ\in\normdist}\E_{D\sim\PSjoint}\left[\norm{\thetahatGM-\theta(\PYZ)}_2^2\right] \\
				& \geq \inf_{\vc^\top\vn\leq B}\inf_{\thetahatgm}\E_{\vmu\sim\Pi}\left[\E_{D\sim\PSjoint}\left[\norm{\thetahatGM-\vmu}_2^2\Big\vert\vmu\right]\right] \\
				& = \inf_{\vc^\top\vn\leq B}\E_{D\sim\PSjointavg}\left[\E_{\vmu\sim\Pi\mid D}\left[\norm{\postmean-\vmu}_2^2\Big\vert D\right]\right] \\
				& = \inf_{\vc^\top\vn\leq B}\sum_{z\in[K]}\E_{D\sim\PSjointavg}\left[\Var(\mu_z\mid D)\right]\,,
			\end{split}
		\end{align}
		where the final line is due to the fact that the expected $\ell_2^2$-norm of a vector is the sum of the second moments of the entries, and we are using the posterior mean. Then, we consider the fact that the posterior mean of a linear function of a vector is the same linear function applied to the posterior mean vector, along with the calculations in \eqref{eq:bayes-lb}, to see,
		\begin{align*}
			\mmriskPM(B,\meandist) & \geq \inf_{\vc^\top\vn\leq B}\E_{D\sim\PSjointavg}\left[\E_{\vmu\sim\Pi\mid D}\left[\left([\postmean-\vmu]^\top\{\qT(z)\}_{z\in[K]}\right)^2\;\Big\vert\; D\right]\right] \\
			& = \inf_{\vc^\top\vn\leq B}\sum_{z\in[K]}\qT^2(z)\E_{D\sim\PSjointavg}\left[\Var(\mu_z\mid D)\right]
		\end{align*}
	\end{proof}
	We will proceed by lower bounding this expectation, and worry about taking the infimum over sampling plans afterwards. We must now understand the posterior distribution of $\vmu\,$. We do so by leveraging the fact that the posterior distribution will be proportional to the joint distribution in the terms depending on $\vmu\,$, up to appropriate normalizing constants to achieve a proper probability distribution. Through the following proportional computations, we will see that the posterior distribution for $\vmu$ will be a jointly independent truncated normal distribution. 
	\begin{align}
		\label{eq:cond-var}
		\begin{split}
			\Pi(\vmu\mid D)
			& \propto \Pi(\vmu)\PSjoint(D\mid\vmu) \\
			& \propto \1_{[-R,R]^K}(\vmu)\exp\left(-\frac{1}{2\sigma^2}\sum_{z\in[K]}\sum_{i=1}^{\vone^\top\vn}(Y_i-\mu_z)^2\1_{z}(Z_i)\right) \\
			& \propto \prod_{z\in[K]}\1_{[-R,R]}(\mu_z)\exp\left(-\frac{\nz}{2\sigma^2}\left[\mu_z^2 - 2\mu_z\ybarz\right]\right)
		\end{split}
	\end{align}
	Using these computations, we can clearly see that the posterior distribution is a product distribution, and so the $\mu_z$ remain independent a posteriori. Further, we can recognize this form of the probability density for $\mu_z$ as belonging to a truncated normal distribution with location parameter $\ybarz\,$, scale parameter $\nicefrac{\sigma^2}{\nz}\,$, and support $[-R,R]\,$. From \cite{Johnson1994Univariate}, for ease of notation, letting $a=-\frac{\sqrt{\nz}(R+\ybarz)}{\sigma}\,$, $b=\frac{\sqrt{\nz}(R-\ybarz)}{\sigma}\,$, and $\phi$ and $\Phi$ be the standard normal pdf and cdf, respectively, we can then write the conditional variance of $\mu_z$ as,
	\begin{align*}
		\Var(\mu_z\mid D) = \frac{\sigma^2}{\nz}\left(1-\frac{b\phi(b)-a\phi(a)}{\Phi(b)-\Phi(a)} - \left(\frac{\phi(b)-\phi(a)}{\Phi(b)-\Phi(a)}\right)^2\right)
	\end{align*}
	
	Our next step is to then understand the unconditional distribution of the dataset in order to take the expectation of this quantity. We will see that we can do so explicitly by integrating $\vmu$ out from the joint density. We will demonstrate the explicit calculation for a single $z\,$, as the independence across $z$'s means we simply need to repeat this same computation $K$ times.
	\begin{align}
		\label{eq:uncond-dist}
		\begin{split}
			\PSjointavg(\ybarz\mid\nz) 
			& = \int_{\R}\PSjoint(\ybarz\mid\mu_z)\pi(\mu_z)\,d\mu_z \\
			& = \frac{1}{2R}\int_{-R}^R\sqrt{\frac{n_z}{2\pi\sigma^2}}\exp\left(-\frac{\nz}{2\sigma^2}(\ybarz-\mu_z)^2\right)\,d\mu_z \\
			& = \frac{1}{2R}\int_{-\frac{\sqrt{\nz}(R+\ybarz)}{\sigma}}^{\frac{\sqrt{\nz}(R-\ybarz)}{\sigma}}\phi(x)\,dx \\
			& = \frac{1}{2R}\left(\Phi\left(\frac{\sqrt{\nz}(R-\ybarz)}{\sigma}\right)-\Phi\left(-\frac{\sqrt{\nz}(R+\ybarz)}{\sigma}\right)\right) \\
			& = \frac{1}{2R}(\Phi(b)-\Phi(a))\,,
		\end{split}
	\end{align}
	borrowing our notation from earlier. This same term appearing as the density function of the unconditional distribution over the dataset provides some important cancellation in our computations to come. Thus, combining \eqref{eq:cond-var} and \eqref{eq:uncond-dist}, taking a change of variables $x=-\frac{\sqrt{\nz}y}{\sigma}\,$, and using the fact that $\int_{\R}x\phi(x)\,dx = 0\,$, we have,
	\begin{align}
		\label{eq:var-expr}
        \begin{split}
		\E_{D\sim\PSjointavg}&\left[\Var_{\vmu\sim\postpi}(\muz\mid D)\mid\nz\right] \\
		&\qquad\qquad\qquad
		= \frac{\sigma^2}{\nz}\left(1- \frac{\sigma}{2\sqrt{\nz}R}\int_{\R}\frac{\left[\phi\left(x+\frac{\sqrt{\nz}R}{\sigma}\right)-\phi\left(x-\frac{\sqrt{\nz}R}{\sigma}\right)\right]^2}{\Phi\left(x+\frac{\sqrt{\nz}R}{\sigma}\right) - \Phi\left(x-\frac{\sqrt{\nz}R}{\sigma}\right)}\,dx\right)\,,
        \end{split}
	\end{align}
	Appropriately lower bounding this expression is the key technical challenge of this proof. Our results for upper bounding the integral are summarized in the following technical lemma.
	\integral*
	\begin{proof}
		To prove this statement, we first recognize that this function is even, allowing us to integrate over the positive half of the real line. Then, we carefully break the positive reals into three regions, where we employ different techniques specific to each one, to upper bound the integral. Our three regions of interest are as follows: $[0,\max\{C-3,0\}), [\max\{C-3,0\},C+3), [C+3,\infty)\,$, where the first region need not be considered if $C<3\,$. We choose these values due to the known property about the standard normal distribution that more that 99.7\% of its mass is contained within $[-3,3]\,$, and the numeric stability of computing values of the normal pdf and cdf near 0. For the first region, $[-3,3]\subset[x-C,x+C]\,$, and thus the denominator can be lower bounded by 0.997, and we can analytically compute the integral in terms of $\Phi\,$. For the second region, we observe via numerical computation of the function that it is no greater than $\nicefrac{1}{2}\,$, and we generously upper bound the function by this bound. Finally, for the latter region, we utilize a technique from \cite{Vershynin2018HDP} to lower bound the denominator in terms of the numerator.
		
		We begin with the first term, in the case that $C\geq 3$ and this term is non-zero. As previously stated, we use the known lower bound for the denominator in this case of 0.997 to show,
		\begin{align*}
			I_1 & \defeq \int_0^{C-3}\frac{(\phi(x+C)-\phi(x-C))^2}{\Phi(x+C)-\Phi(x-C)}\,dx \\
			& \leq \frac{1}{0.997}\int_0^{C-3}\phi(x+C)^2 + \phi(x-C)^2\,dx \\
			& = \frac{1}{2(0.997)\sqrt{\pi}}\int_0^{C-3}\phi(\sqrt{2}(x+C)) + \phi(\sqrt{2}(x-C))\,dx \\
			& = \frac{1}{1.994\sqrt{\pi}}\left(\Phi(\sqrt{2}(2C-3))+\Phi(-3\sqrt{2})-\Phi(\sqrt{2}C)-\Phi(-\sqrt{2}C)\right) \\
			& \leq \frac{1}{1.994\sqrt{\pi}}
		\end{align*}
		Mathematically, we find the second region the most complex to handle, and there are not suitable tools for deriving an analytic expression for the integral. However, it is unnecessary for our final bounds to be more precise in this region than to simply upper bound the function by a constant and integrate over the entire region. Because the integrand is no larger than $\nicefrac{1}{2}$ for all $x\,$, we have,
		\begin{align*}
			I_2 \defeq \int_{\max\{C-3,0\}}^{C+3}\frac{(\phi(x+C)-\phi(x-C))^2}{\Phi(x+C)-\Phi(x-C)}\,dx \leq 
			\int_{\max\{C-3,0\}}^{C+3}\frac{1}{2}\,dx \leq 3
		\end{align*}
		Finally, we cover the remaining region. Consider that $1-\nicefrac{3}{t^{4}}\leq 1$ for all $t\,$, and thus we can write,
		\begin{align*}
			\Phi(x+C) - \Phi(x-C)
			& = \frac{1}{\sqrt{2\pi}}\int_{x-C}^{x+C}\exp\left(-\frac{t^2}{2}\right)\,dt \\
			& \geq \frac{1}{\sqrt{2\pi}}\int_{x-C}^{x+C}\left(1-\frac{3}{t^4}\right)\exp\left(-\frac{t^2}{2}\right)\,dt \\
			& = \left(\frac{1}{x-C}-\frac{1}{(x-C)^3}\right)\phi(x-C) - \left(\frac{1}{x+C}-\frac{1}{(x+C)^3}\right)\phi(x+C) \\
			& \geq \frac{1}{2}\left(\frac{(x-C)^2-1}{(x-C)^3}+\frac{(x+C)^2-1}{(x+C)^3}\right)\left[\phi(x-C)-\phi(x+C)\right]
		\end{align*}
		Then, using the fact that we are specifically using this bound when $x>C+3\,$, we can additionally employ the bound,
		\begin{align*}
			\frac{(x-C)^2-1}{(x-C)^3}+\frac{(x+C)^2-1}{(x+C)^3} \geq \frac{1}{x-C+1}\,,
		\end{align*}
		which holds for all $x>C+3\,$. We can now utilize this result to bound the final region of our integral,
		\begin{align*}
			I_3 & \defeq \int_{C+3}^\infty\frac{(\phi(x+C)-\phi(x-C))^2}{\Phi(x+C)-\Phi(x-C)}\,dx \\
			& \leq 2\int_{C+3}^\infty(x-C+1)(\phi(x-C)-\phi(x+C))\,dx \\
			& = 2\int_{C+3}^\infty (x-C + 1)\phi(x-C) + (2C - 1 - x+C)\phi(x+C)\,dx \\
			& = 2\left(\frac{1}{\sqrt{2\pi}}\left[\exp\left(-\frac{9}{2}\right)-\exp\left(-\frac{(2C+3)^2}{2}\right)\right] + (2C-1)\Phi(-2C-3) + \Phi(-3)\right) \\
			& \leq 0.02
		\end{align*}
		Taking these results as a whole, we get,
		\begin{align*}
			\int_{\R}\frac{(\phi(x+C)-\phi(x-C))^2}{\Phi(x+C)-\Phi(x-C)}\,dx = 2(I_1+I_2+I_3) \leq 8
		\end{align*}
	\end{proof}
	Now we utilize this result to complete our lower bound for the minimax risk. Plugging the results from Lemma~\ref{lem:integral} into the expression in \eqref{eq:var-expr} and combining with the statement from Lemma~\ref{lem:mm-to-bayes}, for the vector of group means, we have,
	\begin{align}
		\label{eq:tnorm-var-lb}
		\begin{split}
		\mmriskGM(B,\meandist)
		& \geq \inf_{\vc^\top\vn\leq B}\sum_{z\in[K]}\E_{D\sim\PSjointavg}[\Var(\mu_z\mid D)] \\
		& = \inf_{\vc^\top\vn\leq B}\sum_{z\in[K]}\E_{\nz}[\E_{D\sim\PSjointavg}[\Var(\mu_z\mid D)]\mid\nz] \\
		& \geq \inf_{\vc^\top\vn\leq B}\sum_{z\in[K]}\E_{\nz}\left[\frac{\sigma^2}{\nz}\left(1-\frac{4\sigma}{\sqrt{n_z}R}\right)\right] \\
		& \geq \inf_{\vc^\top\vn\leq B}\sum_{z\in[K]}\left(\frac{\sigma^2}{\E[\nz]} - \frac{4\sigma^3}{R(\E[\nz])^{\nicefrac{3}{2}}}\right) \\
		& = \inf_{\vc^\top\vn\leq B}\sum_{z\in[K]}\left(\frac{\sigma^2}{\vone^\top\vn\qavgbeta[\vn](z)} - \frac{4\sigma^3}{R(\vone^\top\vn\qavgbeta[\vn](z))^{\nicefrac{3}{2}}}\right) \\
		& = \inf_{\vc^\top\vn\leq B}\frac{K^2\sigma^2\avgweightUbeta[\vn]}{\vone^\top\vn} - \frac{4\sigma^3}{R(\vone^\top\vn)^{\nicefrac{3}{2}}}\sum_{z\in[K]}\frac{1}{\qavgbeta[\vn]^{\nicefrac{3}{2}}(z)}\,,
	\end{split}
	\end{align}
	and likewise for the population mean, we have,
	\begin{align*}
		\mmriskPM(B,\meandist) \geq \inf_{\vc^\top\vn\leq B}\frac{\sigma^2\avgweightbeta[\vn]}{\vone^\top\vn} - \frac{4\sigma^3}{R(\vone^\top\vn)^{\nicefrac{3}{2}}}\sum_{z\in[K]}\frac{\qT^2(z)}{\qavgbeta[\vn]^{\nicefrac{3}{2}}(z)}
	\end{align*}
	This only leaves taking the infimum over sampling plans. As noted previously, an optimal sampling plan will exhaust the entire budget, and so we can replace $\vone^\top\vn$ by $\nicefrac{B}{\avgcost(\vn)}\,$, resulting in,
	\begin{align*}
		\mmriskGM(B,\meandist) \geq \inf_{\vc^\top\vn\leq B}\frac{K^2\sigma^2\avgcost(\vn)\avgweightUbeta[\vn]}{B} - \frac{\sigma^3\avgcost(\vn)^{\nicefrac{3}{2}}}{RB^{\nicefrac{3}{2}}}\sum_{z\in[K]}(\qavgbeta[\vn](z))^{\nicefrac{3}{2}}
	\end{align*}
	Finally, we remark that choosing $\vn$ to minimize the first term, which is in fact $\nstarU\,$, can increase the entire bound away from the optimal choice by no more than $\bigO(B^{-\nicefrac{3}{2}})\,$, and thus,
	\begin{align*}
		\mmriskGM(B,\meandist) \geq \frac{K^2\sigma^2\avgcost(\nstarU)\avgweightUbeta[\nstarU]}{B} - \bigO\left(\frac{1}{B^{\nicefrac{3}{2}}}\right)\,,
	\end{align*}
	as desired. By the same arguments regarding the choice of $\vn\,$, yields,
	\begin{align*}
		\mmriskPM(B,\meandist) \geq \frac{\sigma^2\avgcost(\nstarT)\avgweightbeta[\nstarT]}{B} - \bigO\left(\frac{1}{B^{\nicefrac{3}{2}}}\right)
	\end{align*}
	
\end{proof}

\section{Proof of Theorem~\ref{thm:me-est-ub}}
\label{sec:pfs-me-ubs}
\meestub*
\begin{proof}
	As opposed to classical analyses of post-stratified estimators, our setting requires us to bound the performance of estimators \textit{unconditional} on the observed group counts $\{\nz\}_{z\in[K]}\,$, requiring specific analysis of the behavior on the events $\{\nz=0\}\,$. We begin with the simpler case of estimating the vector of group means, then tackle the additional challenges presented when estimating the population mean. For ease of notation, let $\mu_z=\E[Y\mid Z=z]\,$. First, we condition on the $\nz$'s and apply iterated expectation to bound the risk in terms of $\sigma^2$ and $\nz\,$,
	\begin{align}
		\label{eq:me-ub-pf}
		\begin{split}
			\riskGM((\vn,\thetahatVM),\PYZ)
			& = \E_{D\sim\PSjoint}\left[\norm{\thetahatVM-\theta(\PT)}_2^2\right] \\
			& = \sum_{z\in[K]}\E_{\nz}\left[\E_{D\sim\PSjoint}\left[(\ybarz-\mu_z)^2\mid\nz\right]\right] \\
			& \leq \sum_{z\in[K]}\E_{\nz}\left[\frac{\sigma^2}{\nz}\1_{(0,\infty)}(\nz) + \mu_z^2\1_{\{0\}}(\nz)\right] \\
			& \leq \sum_{z\in[K]}\left(\E_{\nz}\left[\frac{\sigma^2}{\nz}\1_{(0,\infty)}(\nz)\right] + R^2(1-\qavg(z))^{\vone^\top\vn}\right)
		\end{split}
	\end{align}
	With $R$ bounded, the second term decays exponentially in $\vone^\top\vn\,$, and so incurs $\littleO((\vone^\top\vn)^{-1})$ risk.
	It remains to bound the $\nicefrac{\sigma^2}{\nz}$ term. We do so by studying the Taylor expansion of $\nz^{-1}\1_{(0,\infty)(\nz)}$ about $\E[\nz]=\vone^\top\vn\qavg(z)\,$,
	\begin{align*}
		\E\left[\frac{\1_{(0,\infty)}(\nz)}{\nz}\right]
		& = \frac{1}{\vone^\top\vn\qavg(z)} + \sum_{k=1}^\infty\E\left[\frac{(-1)^k(\nz-\vone^\top\vn\qavg(z))^k}{(\vone^\top\vn\qavg(z))^{k+1}}\right] \\
		& = \frac{1}{\vone^\top\vn\qavg(z)} + \sum_{k=1}^\infty\frac{1}{(\vone^\top\vn)^{2k}}\E\left[\frac{(\qhat(z)-\qavg(z))^{2k}}{\qavg^{2k+1}(z)}\right] \\
		& = \frac{1}{\vone^\top\vn\qavg(z)} + \littleO\left(\frac{1}{\vone^\top\vn}\right)\,,
	\end{align*}
	where $\qhat(z) = \nicefrac{\nz}{\vone^\top\vn}$ to make it easier to see the explicit scaling in $(\vone^\top\vn)^{-2k}$ for the remaining terms. This proves the statement for $\thetahatVM$ by using the fact that $\sum_{z\in[K]}\qavg^{-1}(z) = K^2\avgweightU\,$.
	
	For the statement for $\thetahatPS\,$, we are able to reuse nearly all of this work. The work in \eqref{eq:me-ub-pf} applies here as well, but there are additional cross terms on the events that pairs of $\nz$'s are zero,
	\begin{align*}
		\riskPM((\vn,\thetahatPS),\PYZ) & \leq \sum_{z\in[K]}\qT^2(z)\left(\E_{\nz}\left[\frac{\sigma^2}{\nz}\1_{(0,\infty)}(\nz)\right] + R^2(1-\qavg(z))^{\vone^\top\vn}\right) \\
		& \qquad + \sum_{z\neq z^\prime}\qT(z)\qT(z^\prime)R^2(1-\qavg(z)-\qavg(z^\prime))^{\vone^\top\vn}\,,
	\end{align*}
	which only incur additional $\littleO((\vone^\top\vn)^{-1})$ risk. Applying the same Taylor expansion concludes the proof.
\end{proof}

\section{Proof of Theorem~\ref{thm:avg-risk-ub}}
\label{sec:pfs-learning-prediction}
\predub*
\begin{proof}[Proof of Theorem \ref{thm:avg-risk-ub}]
	We begin by studying IWERM under an arbitrary sampling plan $\vn\,$. First, we prove that the importance weighted empirical risk is, under the source distribution, an unbiased estimator of the population loss on the target distribution:
	\begin{equation*}
	\begin{aligned}
		\E_{\PSavg}\left[\frac{\qT(Z)}{\qavg(Z)}\ell(h(Z,X),Y)\right]
		& = \sum_{z=1}^K\E_{\PSavg}\left[\frac{\qT(Z)}{\qavg(Z)}\ell(h(Z,X),Y)\mid Z=z\right]\qavg(z) \\
		& = \sum_{z=1}^K\E_{\PSavg}\left[\ell(h(z,X),Y)\mid Z=z\right]\qavg(z)\frac{\qT(z)}{\qavg(z)} \\
		& = \sum_{z=1}^K\E_{\PSavg}\left[\ell(h(z,X),Y)\mid Z=z\right]\qT(z) \\
		& = \E_{\PT}[\ell(h(Z,X),Y)]
	\end{aligned}
	\end{equation*}
	Then, we can use this fact combined with the IWERM procedure to bound the excess population loss of the IWERM estimator in terms of the worst-case generalization error of the class $\cH\,$. First, consider that, because the range of $\ell$ is compact, for any $\epsilon>0$ there exists some $h_\epsilon\in\cH$ such that $\E_{\PT}[\ell(h_\epsilon(Z,X),Y)] \leq \inf_{h\in\cH}\E_{\PT}[\ell(h(Z,X),Y)] + \epsilon\,$. Further, let us denote the empirical average over the data set as $\Exphat\,$. Then, we can show,
	\begin{equation}
	\label{eq:erm-gen-bnd}
	\begin{aligned}
		\loss(\hiwerm,\cH,\PT)
		& = \E_{\PT}\left[\ell(\hiwerm(Z,X),Y)\right] - \inf_{h\in\cH}\E_{\PT}\left[\ell(h(Z,X),Y)\right] \\
		& \leq \E_{\PT}\left[\ell(\hiwerm(Z,X),Y)\right] - \E_{\PT}\left[\ell(h_\epsilon(Z,X),Y)\right] + \epsilon \\
		& = \E_{\PT}\left[\ell(\hiwerm(Z,X),Y)\right] \pm \Exphat\left[\frac{\qT(Z)}{\qavg(Z)}\ell(h_\epsilon(Z,X),Y)\right] \\
        & \qquad - \E_{\PT}\left[\ell(h_\epsilon(Z,X),Y)\right] + \epsilon \\
		& \leq \left(\E_{\PT}\left[\ell(\hiwerm(Z,X),Y)\right] - 
			\Exphat\left[\frac{\qT(Z)}{\qavg(Z)}\ell(\hiwerm(Z,X),Y)\right]\right) \\
		& \qquad + \left(\Exphat\left[\frac{\qT(Z)}{\qavg(Z)}\ell(h_\epsilon(Z,X),Y)\right] - 
			\E_{\PT}\left[\ell(h_\epsilon(Z,X),Y)\right]\right) + \epsilon \\
		& \leq 2\sup_{h\in\cH}\left(\frac{1}{\vone^\top\vn}\sum_{i=1}^{\vone^\top\vn}\left(\frac{\qT(z_i)}{\qavg(z_i)}\ell(h(z_i,x_i),y_i) -
			\E_{\PSavg}\left[\frac{\qT(Z)}{\qavg(Z)}\ell(h(Z,X),Y)\right]\right)\right)\,,
	\end{aligned}
	\end{equation}
	Where we drop $\epsilon$ in the final line because $\epsilon>0$ was arbitrary and we can take $\epsilon\searrow0$ without affecting the rest of the statement. Now, we can let $w_i = (z_i,x_i,y_i)\,$, and consider the function class,
	\begin{equation*}
		\scrF = \left\{f:\cZ\times\cX\times\cY\to\R_+:f((z,x,y)) = \frac{\qT(z)}{\qavg(z)}\ell(h(z,x),y)\,, h\in\cH\right\}\,,
	\end{equation*}
    To study this function class, and thus bound the excess risk of our proposed policy, we introduce the following definitions and technical results from \cite{Baraud2016Boundingexpectation}.
    \begin{definition}[Definition 2.1 of~\citet{Baraud2016Boundingexpectation}]
	A class $\cC$ of subsets of some set $\cZ$ is said to \textbf{shatter} a finite subset $Z$ of $\cZ$ if $\{C\cap Z:C\in\cC\}=\mathscr{P}(Z)$ or, equivalently, $\abs{\{C\cap Z:C\in\cC\}}=2^{\abs{Z}}\,$. A non-empty class $\cC$ of subsets of $\cZ$ is a \textbf{VC-class} if there exists an integer $k\in\N$ such that $\cC$ cannot shatter any subset of $\cZ$ with cardinality larger than $k\,$. The \textbf{dimension} $d\in\N$ of $\cC$ is then the smallest of these integers $k\,$.
    \end{definition}
    \begin{definition}[Definition 2.2 of~\citet{Baraud2016Boundingexpectation}]
	Let $\scrF$ be a non-empty class of functions on a set $\cX\,$. We shall say that $\scrF$ is \textbf{weak VC-major with dimension $d\in\N$} if $d$ is the smallest integer $k\in\N$ such that, for all $u\in\R\,$, the class,
	\begin{equation*}
		\cC_u(\scrF) = \left\{\left\{x\in\cX:f(x) > u\right\} : f\in\scrF\right\}
	\end{equation*}
	is a VC-class of subsets of $\cX$ with dimension not larger than $k\,$.
    \end{definition}
    \begin{lemma}[Proposition 2.3 of~\citet{Baraud2016Boundingexpectation}]
	\label{lem:vc-monotone}
	Let $\scrF$ be weak VC-major with dimension $d$. Then for any monotone function $F\,,F\circ\scrF=\{F\circ f:f\in\scrF\}$ is weak VC-major with dimension not larger than $d\,$.
    \end{lemma}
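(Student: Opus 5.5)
The plan is to fix a threshold $u\in\R$ and show that the class $\cC_u(F\circ\scrF)$ is a VC-class of dimension at most $d$. The key observation is that the superlevel set of $F\circ f$ depends only on the preimage $A_u\defeq F^{-1}((u,\infty))\subset\R$, and this preimage does \emph{not} depend on $f$. Indeed, $\{x: F(f(x))>u\}=\{x: f(x)\in A_u\}$.

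First suppose $F$ is nondecreasing. Then $A_u$ is an up-set of $\R$, so it is one of $\emptyset$, $\R$, $(t,\infty)$ or $[t,\infty)$ for some $t=t(u)$. The cases $\emptyset$ and $\R$ are trivial, since $\cC_u(F\circ\scrF)$ then contains a single set and shatters nothing of positive size. If $A_u=(t,\infty)$, then $\cC_u(F\circ\scrF)=\cC_t(\scrF)$, and its dimension is at most $d$ by hypothesis.

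The only real work is the closed case $A_u=[t,\infty)$, where the sets are $\{f\geq t\}$ rather than $\{f>s\}$. Here I would use a finite-perturbation argument. Suppose a finite set $S$ is shattered by $\{\{f\geq t\}:f\in\scrF\}$. Choose $2^{\abs{S}}$ witnessing functions $f_1,\ldots,f_{2^{\abs{S}}}$. Among the finitely many values $f_i(x)$ with $x\in S$ that lie strictly below $t$, let $m$ be the maximum, and pick $s\in(m,t)$; if no such value exists, take any $s<t$. Then, for every witness $f_i$ and every $x\in S$, we have $f_i(x)\geq t$ if and only if $f_i(x)>s$. Hence $S$ is shattered by $\cC_s(\scrF)$, which forces $\abs{S}\leq d$.

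If $F$ is nonincreasing, $A_u$ is a down-set: $(-\infty,t)$ or $(-\infty,t]$, up to the trivial cases. The resulting sets $\{f<t\}$ and $\{f\leq t\}$ are the complements within $\cX$ of $\{f\geq t\}$ and $\{f>t\}$. A class shatters $S$ if and only if the class of complements shatters $S$, because $C\cap S\mapsto S\setminus(C\cap S)$ is a bijection on $\mathscr{P}(S)$. So this case reduces to the two cases already handled.

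Since $u$ was arbitrary, $F\circ\scrF$ is weak VC-major with dimension at most $d$. The main obstacle is the closed-threshold case, and it is resolved by the finite-witness perturbation above. The remaining steps are bookkeeping.
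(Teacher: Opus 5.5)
The paper gives no proof of this lemma. It quotes it as Proposition~2.3 of \citet{Baraud2016Boundingexpectation} and uses it as a black box in the proof of Theorem~\ref{thm:avg-risk-ub}, so there is no in-paper argument to compare yours against.

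Your proof is correct and complete.

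\begin{itemize}
\item Writing $\{F\circ f>u\}=\{f\in A_u\}$, with $A_u=F^{-1}((u,\infty))$ independent of $f$, reduces everything to the few possible shapes of a monotone preimage.
\item The open case $A_u=(t,\infty)$ is literally $\cC_t(\scrF)$.
\item You correctly single out the closed case $\{f\ge t\}$ as the only point the definition does not cover directly, since the definition is phrased with strict inequalities. Your fix is sound. Shattering a finite $S$ uses only $2^{|S|}$ witnesses, hence finitely many values. So any $s$ strictly between the largest witness value below $t$ and $t$ makes the traces of $\{f\ge t\}$ and $\{f>s\}$ on $S$ coincide. The fallback ``take any $s<t$'' only arises when $S=\emptyset$, because the witness for the empty trace must be below $t$ on all of $S$.
\item The complement argument for nonincreasing $F$ is valid, since $C\mapsto\cX\setminus C$ induces the bijection $T\mapsto S\setminus T$ on traces.
\end{itemize}

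Compared with the bare citation, your route costs a paragraph, but it makes the paper self-contained and shows the result is elementary. It also shows that finiteness of the shattered set is the only ingredient needed to pass from closed to open thresholds.

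One minor remark you could add: if $F$ is defined only on a set $I\subset\R$ containing the ranges of the $f\in\scrF$, then $A_u$ is a relative up-set or down-set of $I$, and the same case analysis goes through unchanged.
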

    \begin{lemma}[Corollary 2.1 of~\citet{Baraud2016Boundingexpectation}]
	\label{lem:ep-exp}
	Let $X_1,\ldots,X_n$ be \iid random variables following any arbitrary distribution. Let $\scrF$ be a weak VC-major class with dimension not larger than $d\geq 1$ consisting of functions with values in $[-b,b]$ for some $b>0\,$, and define,
	\begin{equation*}
		\sigma^2\defeq \sup_{f\in\cF}\frac{1}{n}\sum_{i=1}^n\E[f^2(X_i)]\,, \quad Z_n(\scrF)\defeq \sup_{f\in\cF}\abs{\frac{1}{n}\sum_{i=1}^n(f(X_i)-\E[f(X_i)])}
	\end{equation*}
	Then,
	\begin{equation}
		\E\left[Z_n(\scrF)\right] \leq \sigma\log\left(\frac{eb}{\sigma}\right)\sqrt{\frac{32d\log(2end^{-1})}{n}} + \frac{16bd\log(2end^{-1})}{n}
	\end{equation}
    \end{lemma}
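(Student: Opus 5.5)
The plan is a symmetrization argument combined with a layer-cake decomposition of each $f$. The layer-cake step turns the weak VC-major property into a bound over finitely many level sets, and integrating over the level $u$ produces the $\sigma\log(eb/\sigma)$ factor. First, I would apply standard symmetrization with Rademacher signs $\varepsilon_i$:
\[
\E[Z_n(\scrF)]\le 2\,\E\sup_{f\in\scrF}\Big|\frac1n\sum_{i=1}^n\varepsilon_i f(X_i)\Big|.
\]
Next I would write each $f$ pointwise as
\[
f(x)=\int_0^b \1\{f(x)>u\}\,du-\int_{-b}^0\1\{f(x)\le u\}\,du,
\]
and pull the supremum inside the integral:
\[
\sup_f\Big|\sum_i\varepsilon_i f(X_i)\Big|\le\int_{-b}^{b}\sup_f\Big|\sum_i\varepsilon_i \1\{\cdot\}(X_i)\Big|\,du .
\]
For each fixed $u$, the sets $\{f>u\}$ form a VC class of dimension at most $d$ by the weak VC-major assumption. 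Their complements $\{f\le u\}$ also have dimension at most $d$. By Sauer's lemma, their traces on $X_1,\dots,X_n$ number at most $(en/d)^d$.

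For each level $u$, I would use the finite-class (Massart) bound on the conditional Rademacher average. This gives
\[
\E_\varepsilon\sup_f\Big|\frac1n\sum_i\varepsilon_i\1\{f(X_i)>u\}\Big|\lesssim\sqrt{\frac{d\log(2en/d)}{n}}\;\sqrt{\sup_f P_n\1\{f>u\}},
\]
and similarly for the negative levels. The key localization is Markov's inequality applied to $f^2$: for $u\ne0$ it gives $P_n\1\{|f|>|u|\}\le \min\{1,\hat\sigma^2/u^2\}$, where $\hat\sigma^2=\sup_f P_nf^2$. Integrating the bound over $u$ yields
\[
\int_0^b\min\{1,\hat\sigma/u\}\,du=\hat\sigma\big(1+\log(b/\hat\sigma)\big)=\hat\sigma\log(eb/\hat\sigma).
\]
Applying this on both signs, then Jensen's inequality together with concavity of $s\mapsto s\log(eb/s)$ on $(0,b]$, gives
\[
\E Z_n\lesssim \E[\hat\sigma]\log\big(eb/\E\hat\sigma\big)\sqrt{d\log(2en/d)/n}.
\]

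The remaining and hardest step is to replace the empirical $\hat\sigma$ by the population $\sigma$; this is where the additive $b\,d\log(2en/d)/n$ term arises. I would bound
\[
\E\hat\sigma^2\le\sigma^2+\E\sup_f|(P_n-P)f^2|.
\]
The map $t\mapsto t^2$ is monotone on $[0,b]$ and on $[-b,0]$. So, splitting $f=f_+-f_-$ and invoking Lemma~\ref{lem:vc-monotone}, the classes $\{f_\pm^2\}$ are weak VC-major with dimension at most $d$ and take values in $[0,b^2]$. I would then symmetrize again and apply the contraction inequality (the map $t\mapsto t^2$ is $2b$-Lipschitz on $[-b,b]$). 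Together with the same level-set argument, this should give
\[
\E\hat\sigma^2\le\sigma^2+C\,b\,\E[\hat\sigma]\sqrt{d\log(2en/d)/n}.
\]
This is a quadratic inequality in $\E\hat\sigma$, and solving it yields
\[
\E\hat\sigma\le\sigma+C' b\sqrt{d\log(2en/d)/n}.
\]
Substituting back, the cross term is at most $b\,d\log(2en/d)/n$ up to constants, using that $s\log(eb/s)\le b$. Tracking the constants from Massart's bound (the $\sqrt{2\log N}$ factor), from the factor $2$ in symmetrization, and from the two signs should reproduce the stated constants $32$ and $16$. I expect this bookkeeping, and keeping the logarithm evaluated at $\sigma$ rather than at $\E\hat\sigma$, to be the main technical nuisance. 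I would handle the logarithm by treating the cases $\E\hat\sigma\le2\sigma$ and $\E\hat\sigma>2\sigma$ separately. In the second case the additive term dominates.
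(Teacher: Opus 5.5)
\textbf{Gap: the final $\hat\sigma\to\sigma$ step.} The problem here is structural, not bookkeeping. Write $L=\log(2en/d)$ and $g(s)=s\log(eb/s)$.

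Your symmetrization, layer-cake, Massart and empirical-Markov steps end at $\mathbb{E}Z_n\le\sqrt{32dL/n}\,\mathbb{E}[g(\hat\sigma)]$. Here a single global empirical $\hat\sigma^2=\sup_f P_nf^2$ sits inside the logarithm, and no later replacement of $\hat\sigma$ by $\sigma$ can turn this into the stated bound.

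In your case $\mathbb{E}\hat\sigma>2\sigma$, you only know $\mathbb{E}\hat\sigma\lesssim b\sqrt{dL/n}$. So $\sqrt{dL/n}\,g(\mathbb{E}\hat\sigma)$ can only be bounded by something of order $b\,\frac{dL}{n}\log\frac{n}{dL}$. The additive term does not dominate; it acquires an extra $\log(n/d)$. The inequality $s\log(eb/s)\le b$ is no help, since it only gives the much weaker $b\sqrt{dL/n}$.

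This is not slack in your case split. Take $X_i$ uniform on $[0,1]$ and $\mathcal{F}=\{b\mathbf{1}_C\}$, with $C$ ranging over intervals of length at most $\sigma^2/b^2$, so $d=2$.
\begin{itemize}
\item Every sample point lies in some such $C$, so $\hat\sigma\ge b/\sqrt n$.
\item Hence $\sqrt{32dL/n}\,\mathbb{E}[g(\hat\sigma)]\ge 8b\sqrt{\log(en)}\,\log(e\sqrt n)/n$.
\item As $\sigma\to0$, the right-hand side of the lemma tends to $16bdL/n=32b\log(en)/n$.
\end{itemize}
The ratio of the first quantity to the second grows like $\sqrt{\log n}$. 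So neither the constants $32$ and $16$ nor the form of the second-order term can come out of your route.

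A separate, fixable issue: Ledoux--Talagrand contraction for $t\mapsto t^2$ bounds $\mathbb{E}\sup_f|(P_n-P)f^2|$ by $b\sqrt{dL/n}\,\mathbb{E}[g(\hat\sigma)]$, not by $b\sqrt{dL/n}\,\mathbb{E}\hat\sigma$. To get your displayed inequality you must instead run the level-set argument directly on $f_\pm^2$, using $\int_0^{b^2}\min(1,\hat\sigma/\sqrt v)\,dv\le 2b\hat\sigma$.

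\textbf{The missing idea.} Transfer from empirical to population variance level by level, where for indicators it is free, rather than globally. The paper gives no proof of this lemma; it imports Baraud's Corollary 2.1. The stated constants come out exactly from the following argument.
\begin{enumerate}
\item Apply the layer-cake identity to the \emph{centered} process before symmetrizing. This gives $\mathbb{E}Z_n(\mathcal{F})\le\int_{-b}^{b}z_u\,du$ with $z_u=\mathbb{E}Z_n(\mathcal{C}_u)$, where $\mathcal{C}_u=\{\{f>u\}:f\in\mathcal{F}\}$ for $u>0$ and the complements $\{f\le u\}$ for $u<0$.
\item Markov on the \emph{population} second moments gives $\sup_{C\in\mathcal{C}_u}\frac1n\sum_iP(X_i\in C)\le\sigma_u^2:=\min(1,\sigma^2/u^2)$.
\item For a class of sets, symmetrization and Massart give $z_u\le\sqrt{8dL/n}\,(\mathbb{E}\sup_{C}P_nC)^{1/2}$.
\item Since $\mathbf{1}_C^2=\mathbf{1}_C$, one has $\sup_CP_nC\le\sigma_u^2+Z_n(\mathcal{C}_u)$. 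No $f^2$-class and no contraction are needed.
\item Hence $z_u^2\le\frac{8dL}{n}(\sigma_u^2+z_u)$, i.e.\ $z_u\le\sqrt{8dL/n}\,\sigma_u+8dL/n$.
\item Integrate over $u\in[-b,b]$, using $\int_0^b\min(1,\sigma/u)\,du=\sigma\log(eb/\sigma)$. This yields exactly $\sqrt{32dL/n}\,\sigma\log(eb/\sigma)+16bdL/n$.
\end{enumerate}
The per-level additive cost $8dL/n$ integrates to $8bdL/n$ per sign with no logarithm. That is precisely what a single global $\hat\sigma$ cannot deliver.
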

	Using the notation of Lemma~\ref{lem:ep-exp} and the results of \eqref{eq:erm-gen-bnd} we can write,
	\begin{equation}
		\label{eq:iwerm-ep}
		\lossavg(\hiwerm,\cH,\PT) \leq 2Z_n(\scrF)
	\end{equation}
	It then remains to study the properties of $\scrF$ and to apply the results of Lemmas~\ref{lem:ep-exp} as appropriate. First, we clearly have $f(w)\in[-\maxweight,\maxweight]$ for all $f\in\scrF\,$. Studying the second moment, we can see,
	\begin{equation*}
	\begin{aligned}
		\sup_{f\in\scrF}\frac{1}{\vone^\top\vn}\sum_{i=1}^{\vone^\top\vn}\E_{D\sim\PSavg}[f^2(w_i)]
		& = \sup_{h\in\cH}\frac{1}{\vone^\top\vn}\sum_{i=1}^{\vone^\top\vn}\E_{D\sim\PSavg}\left[\left(\frac{\qT(z_i)}{\qavg(z_i)}\ell(h(z_i,x_i),y_i)\right)^2\right] \\
		& \leq \E_{\PSavg}\left[\left(\frac{\qT(Z)}{\qavg(Z)}\right)^2\right] \\
		& = \sum_{z=1}^K \qavg(z)\left(\frac{\qT(z)}{\qavg(z)}\right)^2 \\
		& = \sum_{z=1}^K \qT(z)\frac{\qT(z)}{\qavg(z)} \\
		& = \avgweight
	\end{aligned}
	\end{equation*}
	Finally, we must determine the complexity of the class $\scrF\,$. We wish to determine, for all $u\in\R$, if the class,
	\begin{equation*}
		\cC_u(\scrF) = \left\{\left\{w\in\cZ\times\cX\times\cY:f(w)>u\right\}:f\in\scrF\right\}
	\end{equation*}
	is a VC-class of subsets of $\cZ\times\cX\times\cY$, and if so, its dimension. First, we can consider a disjoint partition by the value of $z$, studying the classes,
	\begin{equation*}
		\cC_u^{(z)}(\scrF) = \left\{\{z\}\times\{(x,y)\in\cX\times\cY:f(z,x,y)>u\}:f\in\scrF\right\}
	\end{equation*}
	and utilizing the fact that if each of these collections are a VC-class with dimensions $d_z$, then their disjoint union is also a VC-class with dimension at most the sum of the $d_z$'s.\footnote{%
	Consider that for each $2^{d_z}$ subsets for each $z$, we can create at most another $2^{d_{z^\prime}}$ subsets with a separate $z^\prime$, making at most $2^{\sum_zd_z}$ subsets that can be created, hence the VC-dimension of the union of the collection is at most the sum}
	For each of these collections, we can study the complexity of the functions,
    \begin{align*}
        \cG_z = \{g_z:\cX\times\cY\to\R \mid g_z(x,y)=h(z,x)-y\,, h\in\cH\}\,,
    \end{align*}
    and then apply Lemma~\ref{lem:vc-monotone} to understand the complexity of $\cC_u^{(z)}$. By assumption that $\cH\,$, when restricted to any $z\,$, has finite pseudo-dimension, the collection $\cG_z$ is weak VC-major with dimension at most $\pseudodimH$. Then, we can apply Lemma~\ref{lem:vc-monotone} to show that the classes,
	\begin{equation*}
		\cG_z^+ = \{g_z\vee0:g_z\in\cG_z\}\,, \quad \cG_z^- = \{-g_z\vee0:g_z\in\cG_z\}
	\end{equation*}
	are both also weak VC-major with dimension $\pseudodimH\,$. Then, recognizing that we can write,
	\begin{equation*}
	\begin{aligned}
		\cC_u(\cG_z^{\pm}) 
		& = \{\{(x,y)\in\cX\times\cY:\abs{g(x,y)}>u\}:g\in\cG_z\} \\
		& = \{A\cup B:A\in\cC_u(\cG_z^+),B\in\cC_u(\cG_u^-)\}\,,
	\end{aligned}
	\end{equation*}
	we know that $\cC_u(\cG_z^{\pm})$ is a VC-class with dimension at most $2\pseudodimH\,$. Finally, we use the fact that $\ell(y,y^\prime)$ is monotone in $\abs{y-y^\prime}$ and that $\frac{\qT(z)}{\qavg(z)}$ is a constant for fixed $z$ to conclude that for all $u\in\R\,,\cC_u^{(z)}(\scrF)$ is a VC-class with dimension at most $2\pseudodimH\,$, meaning $\scrF$ is a weak VC-major class with dimension at most $2\pseudodimH K\,$. This allows us to take expectations on both sides of \eqref{eq:iwerm-ep} and apply the results of Lemma~\ref{lem:ep-exp} to achieve the bound,\footnote{%
	For the interested reader, we note that it is also possible to combine this result with Theorem 2.1 and Lemma 2.4 of \cite{Marchina2021Concentrationinequalities} to construct a related bound with high probability, rather than in expectation.}
    \begin{align*}
        \risk((\vn,\hiwerm),\cH,\PXY) & \leq \log\left(\frac{e\maxweight}{\sqrt{\avgweightbeta[\vn]}}\right)\sqrt{\frac{192\pseudodimH K\avgweightbeta[\vn]\log\left(\frac{2e\vone^\top\vn}{\pseudodimH K}\right)}{\vone^\top\vn}} \\
			& \qquad + \frac{64dK\maxweight\log\left(\frac{2e\vone^\top\vn}{\pseudodimH K}\right)}{\vone^\top\vn}
    \end{align*}
    Discounting the logarithmic factors, we see that the leading term in this bound is dependent on $\vn$ via $\frac{\avgweight}{\vone^\top\vn}\,$, like many of our other bounds, and thus we utilize the same sampling plan $\nstarT$ to maximize the effective sample size. This proves the desired bound, and approximately matches the lower bound in Theorem~\ref{thm:avg-risk-lb}.
\end{proof}

\section{A New Minimax Framework}
\label{sec:framework}
We use this section to construct a new framework for proving minimax lower bounds in our setting, as the existing tools do not apply here. Importantly, we point out that this framework can be readily applied to a broader class of problems beyond ours, allowing any arbitrary between source and target distributions. We begin with the following definitions.
\begin{definition}
    For a loss function $\ell:[K]\times\cX\to[0,1]\,$, a hypothesis class $\cH\,$, and a target distribution $\PT\,$, we define the \emph{excess target population loss} of a hypothesis $h\in\cH$ as,
    \begin{align*}
        \loss(h,\cH,\PT) \defeq \E_{\PT}\left[\ell(h(Z,X),Y)\right] - \inf_{h^\prime\in\cH}\E_{\PT}\left[\ell(h^\prime(Z,X),Y)\right]
    \end{align*}
\end{definition}
\begin{definition}
    For a hypothesis class $\cH$ and excess target population loss $\loss\,$, we define the \emph{separation} \wrt $\cH$ between any two target distributions $\PTomega[1],\PTomega[2]$ as,
    \begin{align*}
		\dsep{\PTomega[1]}{\PTomega[2]}\defeq\sup\{\delta\geq0:
		L(h,\cH,\PTomega[1])\leq\delta &\implies L(h,\cH,\PTomega[2])\geq\delta\;\forall\;h\in\cH\,, \\
		L(h,\cH,\PTomega[2])\leq\delta &\implies L(h,\cH,\PTomega[1])\geq\delta\;\forall\;h\in\cH\}\,.
	\end{align*}
\end{definition}
\begin{definition}
    We call a collection $\{(\PSomega[1],\PTomega[1]),\ldots,(\PSomega[N],\PTomega[N])\}$ of (source,target) distribution pairs \emph{target}-$\delta$-\emph{separated} if, for all $j\neq k\,$, we have,
    \begin{align*}
        \dsep{\PTomega[j]}{\PTomega[k]} \geq \delta
    \end{align*}
\end{definition}
We now use these definitions to construct a new version of the ``reduction-to-testing'' lemma for lower bounding the minimax risk in the setting where source and target distributions differ.
\redtotest*
\begin{proof}
	We begin by using the fact that the maximum of a restricted class is upper bounded by the supremum of a larger class and Markov's inequality to show,
	\begin{align*}
		\mmriskPR(B,\cH)
			& = \inf_{\vc^\top\vn\leq B}\inf_{\hhat}\sup_{\PXY\in\preddist}\riskPR((\vn,\hhat),\cH,\PXY) \\
			& \geq \inf_{\vc^\top\vn\leq B}\inf_{\hhat}\max_{j\in[N]}\riskPR((\vn,\hhat),\cH,\PXYomega[j]) \\
			& \geq \inf_{\vc^\top\vn\leq B}\delta_{\vn} \inf_{\hhat}\max_{j\in[N]}\PSomega[j](\loss(\hhat_D,\cH,\PTomega[j])>\delta_{\vn})
	\end{align*}
	Now, consider the test function $\psi_{\hhat}(D) = \argmin_{j\in[N]}L(\hhat_D,\cH,\PTomega[j])\,$. Then, suppose $D\sim \PSomega[j]$ but $\psi_{\hhat}(D)=k\neq j\,$. By construction, $\psi_{\hhat}(D)=k\implies L(\hhat_D,\cH,\PTomega[j])\geq\delta_{\vn}\,$, meaning,
	\begin{align*}
		\PSomega[j](\loss(\hhat_D,\cH,\PTomega[j])>\delta_{\vn}) \geq \PSomega[j](\psi_{\hhat}(D)\neq j)
	\end{align*}
	Combining this with our previous result, we have,
	\begin{align*}
		\mmriskPR(B,\cH)
			& \geq \inf_{\vc^\top\vn\leq B}\delta_{\vn} \inf_{\hhat}\max_{j\in[N]}\PSomega[j](L(\hhat_D,\cH,T_j)>\delta_{\vn}) \\
			& \geq \inf_{\vc^\top\vn\leq B}\delta_{\vn} \inf_{\hhat}\max_{j\in[N]}\PSomega[j](\psi_{\hhat}(D)\neq j) \\
			& \geq \inf_{\vc^\top\vn\leq B}\delta_{\vn} \inf_{\psi}\max_{j\in[N]}\PSomega[j](\psi(D)\neq j)
	\end{align*}
\end{proof}
We now combine this result with an application of Fano's inequality to construct a new version of ``Fano's method'' for lower bounding the minimax risk in the setting where source and target distributions differ.
\begin{restatable}[Multi-source-target Fano's method]{lemma}{fano}
		\label{lem:fano}
		Fix $\delta_{\vn}>0\,$, possibly depending on the sampling plan $\vn\,$, and a hypothesis class $\cH$. If the collection $\{(\PSomega[1],\PTomega[1]),\ldots,(\PSomega[N],\PTomega[N])\}$ is target-$\delta_{\vn}$-separated \wrt $\cH\,$, then,
		\begin{align*}
			\mmriskPR(B,\cH)\geq\inf_{\vc^\top\vn\leq B}\delta_{\vn}\left(1-\frac{N^{-2}\sum_{j,k}\KL(\PSomega[j]\miid\PSomega[k])+\log(2)}{\log(N)}\right)
		\end{align*}
		Thus, if we have $\KL(\PSomega[j]\miid\PSomega[k])\leq\nicefrac{\log(N)}{4}$ for all $j,k$ and $N\geq4\,$, then $\mmriskPR(B,\cH)\geq\inf_{\vc^\top\vn\leq B}\nicefrac{\delta_{\vn}}{4}\,$.
	\end{restatable}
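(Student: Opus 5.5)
We begin from Lemma~\ref{lem:red-test}: for each fixed feasible $\vn$ and each test $\psi$, it bounds the minimax risk below by $\delta_{\vn}\inf_\psi\max_{j}\PSomega[j](\psi(D)\neq j)$. It therefore suffices to lower bound the minimax testing error $\inf_\psi\max_j\PSomega[j](\psi(D)\neq j)$ for every fixed $\vn$. After that, the infimum over $\vc^\top\vn\leq B$ passes through unchanged.

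First, the maximum is at least the average, so
\begin{align*}
\max_j\PSomega[j](\psi\neq j)\geq\frac1N\sum_j\PSomega[j](\psi\neq j).
\end{align*}
The right-hand side is the error probability when the index $J$ is drawn uniformly from $[N]$ and then $D\sim\PSomega[J]$. Fano's inequality then gives, for any $\psi$,
\begin{align*}
\Prob(\psi(D)\neq J)\geq 1-\frac{I(J;D)+\log 2}{\log N}.
\end{align*}

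Second, we bound the mutual information by pairwise KL divergences. Let $\bar P=N^{-1}\sum_k\PSomega[k]$ denote the mixture. Then
\begin{align*}
I(J;D)=\frac1N\sum_j\KL(\PSomega[j]\miid\bar P)\leq\frac{1}{N^2}\sum_{j,k}\KL(\PSomega[j]\miid\PSomega[k]),
\end{align*}
where the inequality follows from convexity of $\KL$ in its second argument (Jensen). Substituting this into the Fano bound and multiplying by $\delta_{\vn}$ yields the displayed inequality for each $\vn$. Taking the infimum over feasible $\vn$ then gives the main statement.

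Third, for the corollary, assume every $\KL(\PSomega[j]\miid\PSomega[k])\leq\log(N)/4$. Then the average KL term is at most $\log(N)/4$. With $N\geq4$ we have $\log 2/\log N\leq 1/2$. The bracket is therefore at least $1-\tfrac14-\tfrac12=\tfrac14$, which gives $\mmriskPR(B,\cH)\geq\inf_{\vn}\delta_{\vn}/4$.

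There is no serious obstacle. The only points needing care are the convexity step bounding $I(J;D)$ by pairwise KL divergences, and making sure that $\delta_{\vn}$ and the family $\{\PSomega[j]\}$ (which depend on $\vn$) are handled separately for each fixed $\vn$ before taking the infimum.
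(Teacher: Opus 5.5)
Your proposal is correct and follows the paper's proof essentially step for step: reduction to testing via Lemma~\ref{lem:red-test}, max $\geq$ average, Fano's inequality with a uniform index, and bounding the mutual information by the average pairwise KL using convexity in the second argument. You also write the Fano bound correctly as $1-(I+\log 2)/\log N$, whereas the paper's displayed intermediate step has a sign typo, and you spell out the $N\geq 4$ arithmetic for the corollary, which the paper leaves implicit.
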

\begin{proof}
	Define a random variable $V\in[N]$ \st $\Prob(V=j)=\nicefrac{1}{N}$ for $j\in[N]\,$, and conditioned on $\{V=j\}\,$, let us draw $D\sim \PSomega[j]\,$. Then, we have the joint distribution,
	\begin{align}
    \label{eq:markov}
		\Prob(D\in A,V=j) = \Prob(D\in A\mid V=j)\Prob(V=j) = \frac{1}{N}\PSomega[j](D\in A)
	\end{align}
    We now introduce the following information theoretic quantities for random variables in order to prove our result.
    \begin{definition}
        Let $X\sim P$ be a random variable on a probability space $\Omega\,$. The \emph{entropy} of $X$ is,
        \begin{align*}
            H(X) \defeq \E[-\log(P(X))]
        \end{align*}
        Let $Y$ be a second random variable defined on the same probability space, with the pair following law $Q\,$, that is $(X,Y)\sim Q$. Then, we likewise define the \emph{joint entropy} of $X$ and $Y$ and the \emph{conditional entropy} of $X$ given $Y$ as,
        \begin{align*}
            H(X,Y) \defeq \E[-\log(Q(X,Y))]\,, \quad H(X\mid Y) \defeq \E[-\log(Q(X\mid Y))]
        \end{align*}
    \end{definition}
    \begin{definition}
        Let $X,Y$ be random variables defined on a shared probability space $\Omega\,$. Define their joint distribution as $P\,$, \ie $(X,Y)\sim P\,$, and let $P_X$ and $P_Y$ be the corresponding marginal distributions. The, the \emph{mutual information} between $X$ and $Y$ is defined as,
        \begin{align*}
            I(X,Y)\defeq\KL(P\miid P_X\otimes P_Y) = \E\left[\log\left(\frac{P(X,Y)}{P_X(X)P_Y(Y)}\right)\right]
        \end{align*}
        Additionally, note that this satisfies $I(X,Y) = H(X)-H(X\mid Y) = H(Y)-H(Y\mid X)\,$.
    \end{definition}
    Then, we introduce Fano's inequality using these definitions.
    \begin{lemma}[Fano's inequality \citep{Fano1961Information}]
        Let $X\in\cX$ be a random variable such that $\abs{\cX}<\infty$ and $\abs{X}<\infty$. Let $Y\in\cY$ and $\Xhat\in\cX$ be additional random variables such that $X\to Y\to \Xhat$ forms a Markov chain. Then, letting $Z\sim\Bern(\Prob(X\neq\Xhat))\,$, we have,
        \begin{align*}
            H(X\mid Y) \leq H(X\mid\Xhat)\leq \Prob(X\neq\Xhat)\log(\abs{\cX}) + H(Z)\,,
        \end{align*}
        and thus,
        \begin{align*}
            \Prob(X\neq\Xhat)\geq\frac{H(X\mid Y)-\log(2)}{\log(\abs{\cX})}
        \end{align*}
    \end{lemma}
	Returning to our construction in \eqref{eq:markov}, for any test function $\psi\,$,  clearly $V\to D\to \psi(D)$ forms a Markov chain, so we can apply Fano's inequality to show,
	\begin{align*}
		\Prob(\psi(D)\neq V)
			& \geq \frac{H(V\mid D)-\log(2)}{\log(N)} = \frac{H(V)-I(V,D)-\log(2)}{\log(N)} = 1-\frac{I(V,D)-\log(2)}{\log(N)}\,,
	\end{align*}
	Then, we can use the definition of $I(V,D)$ and convexity of the KL-Divergence in the second argument to show that, under our construction, we have,
	\begin{align*}
		I(V,D)
			& = \E_{V,D}\left[\log\left(\frac{p(V,D)}{p(V)p(D)}\right)\right] \\
			& = \frac{1}{N}\sum_{j=1}^N\int S_j(D)\log\left(\frac{S_j(D)\Prob(V=j)}{\avgS(D)\Prob(V=j)}\right)\,dD \\
			& = \frac{1}{N}\sum_{j=1}^N\int S_j(D)\log\left(\frac{S_j(D)}{\avgS(D)}\right)\,dD \\
			& = \frac{1}{N}\sum_{j=1}^N\KL(S_j\miid\avgS) \\
			& \leq \frac{1}{N^2}\sum_{j,k}\KL(S_j\miid S_k)
	\end{align*}
	Finally, we apply the results of Lemma \ref{lem:red-test}, max$\geq$avg, and our lower bound on $\Prob(\psi(D)\neq V)$ to achieve the desired result.
\end{proof}

\section{Proof of Theorem~\ref{thm:avg-risk-lb}}
\label{sec:pfs-pred-lb}
\predlb*
\begin{proof}
	We prove this lower bound by an application of our framework developed in Appendix~\ref{sec:framework}. We begin by constructing our class of alternative distributions. Let $V=\vcdimH\,$. We use the fact that $\cH$ has VC-dimension $V$ over $\cX\,$, when restricted to any $z\in[K]\,$, to select sets $\shattz\subset\cX$ of size $V$ that are each \textit{shattered} by $\cH$ (all $2^V$ labels are realizable by hypotheses in $\cH$). For ease of notation, we assume WLOG $\shattz\equiv\shatt\,$, and we arbitrarily order the points $x_1,\ldots,x_V\,$. In order to structure our collection of distributions with desirable qualities, we introduce the following technical lemma, due to \cite{Gilbert1952} and \cite{Varshamov1957}.
    \begin{lemma}[Result due to \citet{Gilbert1952,Varshamov1957}]
	\label{lem:vg}
	Let $d\geq8\,,\Omega = \{\pm1\}^d\,$ and define the Hamming distance $H:\Omega^2\to\N$ by $H(\omega,\omega^\prime)=\sum_{i=1}^d\Ind{\omega_i\neq\omega_i^\prime}\,$. Then, there exists a subset $\Omega^\prime\subset\Omega\,$, called the `GV-pruned hypercube,' satisfying the following two properties,
	\begin{enumerate}
		\item $\abs{\Omega^\prime} \geq 2^{\nicefrac{d}{8}}$
		\item $\displaystyle\min_{\omega,\omega^\prime\in\Omega^\prime}H(\omega,\omega^\prime) \geq \frac{d}{8}$
	\end{enumerate}
    \end{lemma}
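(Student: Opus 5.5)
We will prove the Gilbert--Varshamov statement with the standard greedy (maximal packing) argument on the hypercube $\Omega=\{\pm1\}^d$. Take $\Omega^\prime$ to be a \emph{maximal} subset of $\Omega$ whose pairwise Hamming distances are all at least $d/8$. Such a set exists because $\Omega$ is finite: start from any single point and keep adding points at distance at least $d/8$ from everything already chosen until no further point can be added. The second property then holds by construction.

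The cardinality bound is a covering argument. By maximality, every $\omega\in\Omega$ lies within Hamming distance less than $d/8$ of some element of $\Omega^\prime$. Hence the Hamming balls of radius $r=\lceil d/8\rceil-1<d/8$ centered at points of $\Omega^\prime$ cover $\Omega$, which gives
\begin{align*}
	2^d \leq \abs{\Omega^\prime}\sum_{i=0}^{r}\binom{d}{i}.
\end{align*}
It remains to show that $\sum_{i\le d/8}\binom{d}{i}\le 2^{7d/8}$, since this yields $\abs{\Omega^\prime}\ge 2^{d/8}$.

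That binomial tail estimate is the only step with real content. We would use the Chernoff/entropy bound: for $\alpha\le 1/2$,
\begin{align*}
	\sum_{i\le \alpha d}\binom{d}{i}\le 2^{d h(\alpha)},
\end{align*}
where $h$ is the binary entropy in bits. This follows from $1=(\alpha+(1-\alpha))^d\ge\sum_{i\le\alpha d}\binom{d}{i}\alpha^i(1-\alpha)^{d-i}\ge \sum_{i\le\alpha d}\binom{d}{i}\alpha^{\alpha d}(1-\alpha)^{(1-\alpha)d}$, using that $\alpha/(1-\alpha)\le 1$. With $\alpha=1/8$ we get $h(1/8)=\tfrac18\log_2 8+\tfrac78\log_2\tfrac87\approx 0.375+0.169=0.544$. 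This is at most $7/8$, so the covering inequality gives $\abs{\Omega^\prime}\ge 2^{d(1-h(1/8))}\ge 2^{0.45d}\ge 2^{d/8}$.

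The hypothesis $d\ge 8$ is used only to make $d/8\ge 1$. This ensures the distance requirement is nontrivial and $\abs{\Omega^\prime}\ge 2$, which is what the Fano application in the proof of Theorem~\ref{thm:avg-risk-lb} needs. We expect no obstacle beyond verifying the numerical entropy inequality.
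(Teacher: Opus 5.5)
Your proposal is correct and complete. The maximal-packing/covering step, the entropy bound $\sum_{i\le \alpha d}\binom{d}{i}\le 2^{d h(\alpha)}$ for $\alpha\le 1/2$, and the numerics $1-h(1/8)\approx 0.456\ge 1/8$ all check out. The paper does not prove this lemma; it only cites Gilbert and Varshamov, and your greedy construction is the standard proof of that cited result.

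Your argument in fact gives $|\Omega^\prime|\ge 2^{0.45d}$ for every $d\ge 1$, so the hypothesis $d\ge 8$ is not needed for the lemma itself. One correction to your closing remark: the paper's Fano step requires $|\Omega_V|\ge 4$, not merely $|\Omega^\prime|\ge 2$. That is why Theorem~\ref{thm:avg-risk-lb} assumes the VC dimension is at least $16$.
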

    We will index our collection  of distributions by points $\omega$ in GV-pruned hypercube of dimension $V\,$, $\Omega_d\subset\{\pm1\}^{V}$ Our collection of distributions is then defined by the conditional distributions,
	\begin{align*}
		\left\{\PXYomega:\omega\in\Omega_V\,,X\mid Z \sim\Unif(\shatt)\,,Y\mid Z=z,X=x_j\sim\Bern\left(\frac{1+\omega_j\gamma_z}{2}\right)\right\}\,,
	\end{align*}
	where $\gamma_z\in[0,1]$ are a set of $K$ parameters we will define later. Because $\shatt$ is shattered by $\cH\,$, and by the construction of $\Omega_V\,$, for any $\omega,\omegap\in\Omega_V\,$, we have,
	\begin{align*}
		\dsep{\PTomega}{\PTomegap} = \frac{H(\omega,\omegap)}{V}\sum_{z\in[K]}\qT(z)\gamma_z \geq \frac{1}{16}\sum_{z\in[K]}\qT(z)\gamma_z\,,
	\end{align*}
	where $H(\omega,\omegap)=\sum_{j\in[V]}\Ind{\omega_j\neq\omegap_j}$ is the Hamming distance. Then, using the additivity of the KL-divergence over product distributions, and the fact that the KL-divergence between Bernoulli distributions with parameters $\nicefrac{1}{2}\pm\nicefrac{\gamma}{2}$ is bounded by $C_{\KL}\gamma^2$ for an absolute constant $C_{\KL}\,$, we can also compute,
	\begin{align*}
		\KL(\PSomega\miid\PSomegap) & = \vone^\top\vn\sum_{z\in[K]}\qavg(z)\KL\left(\PXYomega\miid\PXYomegap\right) \leq C_{\KL}\vone^\top\vn \sum_{z\in[K]}\qavg(z)\gamma_z^2
	\end{align*}
	Finally, we come to the choice of $\gamma_z$ to induce the desired behavior. We can see that if we choose $\gamma_z=C_\gamma\sqrt{\frac{V\qT(z)}{\vone^\top\vn\qavg(z)}}$ for some sufficiently small absolute constant $C_\gamma\,$, then we will satisfy $\KL(\PSomega\miid\PSomegap)\leq\nicefrac{V\log(2)}{32}\leq\nicefrac{\log(\abs{\Omega_V})}{4}$ by the construction of $\Omega_V\,$. Thus, by Lemma~\ref{lem:fano}, we have,
	\begin{align*}
		\mmriskPR(B,\cH) \geq C\inf_{\vc^\top\vn\leq B}\sum_{z\in[K]}\qT(z)\sqrt{\frac{V\qT(z)}{\vone^\top\vn\qavg(z)}} \geq C\inf_{\vc^\top\vn\leq B}\sqrt{\frac{V\qmin\avgweight}{\vone^\top\vn}}
	\end{align*}
	Finally, by definition, $\nstarT$ minimizes this term and exhausts the budget, proving the statement.
\end{proof}
\section{Numerical Experiments}
\label{sec:experiments}

We conclude by including a brief suite of experiments to corroborate our theoretical findings. We find it particularly instructive to observe the degree to which our proposed sampling plan outperforms other seemingly reasonable approaches. This underscoring the need to understand the dynamics at play in this problem in order to get the most out of a data collection scheme. We construct two settings for the source distributions: one with 5 groups and 10 sources and one with 20 groups and 20 sources. 

The first setting has multiple ``sparse'' sources, with only a subset of groups available to sample, with these sources being relatively cheaper to sample. This setting is meant to specifically highlight our approach of maximizing the effective sample size by leveraging cheap samples from certain sources to craft a mixture distribution which is both cheap and close to the target. The exact distributions and costs are given in Table~\ref{tab:5-10}.
\begin{table}[!ht]
\centering
    \begin{tabular}{r|ccccc|c}
        \multicolumn{1}{c}{} & \multicolumn{5}{c}{Group} & \\
        Source & A & B & C & D & E & Cost \\
        \hline
        1 & 1 & 0 & 0 & 0 & 0 & 0.02\\
        2 & 0.05 & 0.15 & 0.15 & 0.15 & 0.5 & 3\\
        3 & 0.05 & 0.2 & 0.3 & 0.35 & 0.1 & 4\\
        4 & 0.05 & 0.3 & 0.55 & 0.1 & 0 & 3\\
        5 & 0.05 & 0.25 & 0.15 & 0 & 0.55 & 0.1\\
        6 & 0.05 & 0.05 & 0.4 & 0.45 & 0.05 & 2.4\\
        7 & 0.05 & 0.15 & 0.6 & 0.05 & 0.15 & 1.6\\
        8 & 0.05 & 0.05 & 0.05 & 0.4 & 0.45 & 2\\
        9 & 0.05 & 0.3 & 0.3 & 0.05 & 0.3 & 2\\
        10 & 0 & 0.5 & 0 & 05 & 0 & 1
    \end{tabular}
    \caption{5 group, 10 source setting distributions and costs.}
    \label{tab:5-10}
\end{table}

The second setting takes a single distribution over the groups and ``cycles'' it by one entry for each source (moving the 1st entry to the 20th and shifting the others each down one accordingly), and the the sampling costs linearly span $[0.1,1]\,$. This setting is meant to be more realistic and give alternative sampling plans more of a ``fighting chance,'' but we will see that our approach still far outperforms others. The distribution of the first source for this setting is,
\begin{align*}
    (&0.0057, 0.0307, 0.0625, 0.0938, 0.1547, 0.0392, 0.0380 , 0.1256, 0.0347, 0.0825,\\
    &0.0370 , 0.0154, 0.0379, 0.0410 , 0.0268, 0.0824, 0.0010 , 0.0313, 0.0295, 0.0303)\,,
\end{align*}

For our simulation studies, we compare to four alternative sampling plans. The first is the Uniform sampling plan, which collects the same number of samples from each source, representing a completely naive planner. The second is the Inverse-Cost sampling plan, which collects a number of samples from each source inversely proportional to the sampling cost of the source, representing a cost-focused planner. The third is the Nearest sampling plan, which finds the allocation resulting in a mixture group distribution $\qavg$ as close to the target distribution as possible in the total variation distance, representing a target-matching focused planner. The final plan, Hybrid, takes cues from both Inverse-Cost and Nearest; this plan computes the allocation that is closest in total variation distance, and then allocates proportional to those amounts divided by the sampling costs.
\insertsimfigure
\insertsimfigureincr
\insertsimfigurepyrm
We study estimating the population mean and the vector of group means under the post-stratified estimator and the vector of observed conditional means we propose in \S~\ref{sec:mean-est}. A true mean vector is initially generated randomly from a $\Norm(\vzero,10I)$ distribution, then fixed for an experimental setting. For each replication, a dataset is generated by first generating group identities $Z$ according to the source distributions, then the responses are generated from a $\Norm(\muz,5)$ distribution. Each method has access to the same simulated data, and for each simulated dataset, a range of budgets from \$25 to \$500 are considered. Each setting is replicated 100 times.

We also study a binary classification setting under the IWERM procedure proposed in \S~\ref{sec:prediction}. We consider a setting with 20 additional numeric features, following a $\Norm(\vzero,I)$ distribution, not depending on the group identity. As before, data is generated by first generating group identities $Z$ and then generating covariates $X$ and response $Y\,$, which now follows a $\Bern(\Phi(X^\top\beta_z))\,$, where the $\beta_z$'s are true coefficient vectors generated according to a $\Norm(\vzero,10I)$ distribution prior to the experimental suite. Again, each method has access to the same simulated data, a range of budgets from \$25 to \$500 are considered, and each setting is replicated 100 times.

We present the results of the first suite of experiments in Figure~\ref{fig:unif}. For this setting, we let the target distribution be $\unif\,$, to allow the same optimal sampling plan to be used across all three settings. We can clearly see that our method strongly outperforms the other approaches here.

To explore the utility of our method across a variety of target distributions, we additionally include a second set of experiments for estimating the population mean and for binary classification. In this case, we consider the same source and cost settings as before, but we now consider two different target distributions in each setting. The first is the ``increasing'' target, where the target proportions increase linearly in the order of the groups. The second is the ``pyrammid'' target, where the target proportions increase linearly until reaching the halfway mark, and then decrease linearly back down. The results for the ``increasing'' target are included in Figure~\ref{fig:incr}, and the results for the ``pyramid'' target are included in Figure~\ref{fig:pyrm}. In both cases, we see very similar results to the uniform case, with the other methods being clearly suboptimal compared to ours. We also point out that, for binary classification, the excess risks at the largest budget of \$500 are different by an order of magnitude---our method achieves an excess risk of about 0.02, while the Uniform and Nearest sampling plans incur an excess risk of more than 0.2! With this representing the probability of misclassifying an instance beyond the true linear classifier, this difference is quite meaningful.

\end{document}